\newenvironment{packed_enum}{
  \begin{enumerate}
    \setlength{\itemsep}{1pt}
    \setlength{\parskip}{-1pt}
    \setlength{\parsep}{0pt}
}{\end{enumerate}}
\newcounter{qcounter}
 {\end{list}}
\newtheorem{theorem}{Theorem}
\newtheorem{lemma}{Lemma}
\newtheorem{assumption}{Assumption}
\newtheorem{corollary}{Corollary}
\newtheorem{proposition}{Proposition}
\theoremstyle{remark}
\newtheorem{remark}{Remark}
\theoremstyle{definition}
\newcommand{\argmin}{\mathop{\mathrm{argmin}}}
\newcommand{\argmax}{\mathop{\mathrm{argmax}}}
\newcommand{\minimize}{\mathop{\mathrm{minimize}}}
\newcommand{\maximize}{\mathop{\mathrm{maximize}}}
\def\E{\mathbb{E}}
\def\P{\mathbb{P}}
\def\Var{\mathrm{Var}}
\def\R{\mathbb{R}}
\def\cA{\mathcal{A}}
\def\cD{\mathcal{D}}
\def\cF{\mathcal{F}}
\def\cI{\mathcal{I}}
\def\cL{\mathcal{L}}
\def\cN{\mathcal{N}}
\def\cP{\mathcal{P}}
\def\cR{\mathcal{R}}
\def\cU{\mathcal{U}}
\def\cX{\mathcal{X}}
\def\cZ{\mathcal{Z}}
\def\1{\mathbf{1}}%\mathds{1}}
\def\MSE{\mathrm{MSE}}
\newcommand{\figsqueeze}{\vspace{-16pt}}
\newcommand{\figsqueezeB}{\vspace{-3pt}}
\newcommand{\red}[1]{\textcolor{red}{#1}}
\newcommand{\blue}[1]{\textcolor{blue}{#1}}
\newcommand{\todo}[1]{}%{\textcolor{red}{\textbf{TODO:}[#1]}}
\newcommand{\alekh}[1]{}%{\textcolor{blue}{\textbf{Alekh:}[#1]}}
\newcommand{\dx}{\ensuremath{\lambda}}
\newcommand{\er}{\ensuremath{\eta}}
\newcommand{\rmax}{\ensuremath{R_{\max}}}
\def\cZna{\mathcal{Z}^{\textup{na}}}
\def\cZa{\mathcal{Z}^{\textup{a}}}
\newcommand{\Z}{\mathbb{Z}}
\newcommand{\cJ}{\mathcal{J}}
\newcommand{\set}[1]{\{#1\}}
\newcommand{\Braces}[1]{\left\{#1\right\}}
\newcommand{\bigBraces}[1]{\bigl\{#1\bigr\}}
\newcommand{\BigBraces}[1]{\Bigl\{#1\Bigr\}}
\newcommand{\bracks}[1]{[#1]}
\newcommand{\bigBracks}[1]{\bigl[#1\bigr]}
\newcommand{\BigBracks}[1]{\Bigl[#1\Bigr]}
\newcommand{\biggBracks}[1]{\biggl[#1\biggr]}
\newcommand{\BiggBracks}[1]{\Biggl[#1\Biggr]}
\newcommand{\Bracks}[1]{\left[#1\right]}
\newcommand{\Parens}[1]{\left(#1\right)}
\newcommand{\bigParens}[1]{\bigl(#1\bigr)}
\newcommand{\BigParens}[1]{\Bigl(#1\Bigr)}
\newcommand{\biggParens}[1]{\biggl(#1\biggr)}
\newcommand{\given}{\mathbin{\vert}}
\newcommand{\bigGiven}{\mathbin{\bigm\vert}}
\newcommand{\Abs}[1]{\left\lvert#1\right\rvert}
\newcommand{\bigAbs}[1]{\bigl\lvert#1\bigr\rvert}
\newcommand{\Thm}[1]{Theorem~\ref{thm:#1}}
\newcommand{\Sec}[1]{Section~\ref{sec:#1}}
\newcommand{\Eq}[1]{Eq.~\eqref{eq:#1}}
\newcommand{\Lem}[1]{Lemma~\ref{lem:#1}}
\newcommand{\OA}{\textsc{switch}\xspace}
\newcommand{\OADR}{\textsc{switch-DR}\xspace}
\newcommand{\term}{\ensuremath{\mathcal{T}}}
\newcommand{\order}{\ensuremath{\mathcal{O}}}
\newcommand{\lmax}{\gamma}
\newcommand{\hR}{\hat{R}}
\newcommand{\hrho}{\hat{\rho}}
\begin{document}

\twocolumn[

\icmltitle{Optimal and Adaptive Off-policy Evaluation in Contextual
  Bandits}

\begin{icmlauthorlist}
\icmlauthor{Yu-Xiang Wang}{cmu}
\icmlauthor{Alekh Agarwal}{msr}
\icmlauthor{Miroslav Dud{\'\i}k}{msr}
\end{icmlauthorlist}
\icmlaffiliation{cmu}{Carnegie Mellon University, Pittsburgh, PA}
\icmlaffiliation{msr}{Microsoft Research, New York, NY}

\icmlcorrespondingauthor{Yu-Xiang Wang}{yuxiangw@cs.cmu.edu}
\icmlcorrespondingauthor{Alekh Agarwal}{alekha@microsoft.com}
\icmlcorrespondingauthor{Miroslav Dud\'ik}{mdudik@microsoft.com}

\icmlkeywords{contextual bandits, off-policy evaluation, minimax risk, adaptive estimator}

\vskip 0.3in
]

\printAffiliationsAndNotice{}%\icmlEqualContribution}

\begin{abstract}
  We study the off-policy evaluation problem---estimating the value of
  a target policy using data collected by another policy---under the
  contextual bandit model. We consider the general (agnostic) setting
  without access to a consistent model of rewards and establish a
  minimax lower bound on the mean squared error (MSE). The bound is
  matched up to constants by the inverse propensity scoring (IPS) and
  doubly robust (DR) estimators. This highlights the difficulty of the
  agnostic contextual setting, in contrast with multi-armed
  bandits and contextual bandits with access to a consistent
    reward model, where IPS is suboptimal.  We then propose the \OA
  estimator, which can use an existing reward model (not
    necessarily consistent) to achieve a better bias-variance
  tradeoff than IPS and DR. We prove an upper bound on its MSE and
  demonstrate its benefits empirically on a diverse collection of
  data sets, often outperforming prior work by orders of
  magnitude.
\end{abstract}

%\mdcomment{Do we want to say in the abstract that DR also achieves the minimax lower bound?}

\section{Introduction}
\label{sec:intro}

Contextual bandits refer to a learning setting where the learner
repeatedly observes a context, takes an action and observes a reward
for the chosen action in the observed context, \emph{but no feedback
  on any other action}. An example is movie recommendation, where the
context describes a user, actions are candidate movies and the reward
measures if the user enjoys the recommended movie. The learner
produces a policy, meaning a mapping from contexts to actions. A common
question in such settings is, given a \emph{target policy}, what is
its expected reward? By letting the policy choose actions
(e.g., recommend movies to users), we can compute its reward. Such
\emph{online evaluation} is typically costly since it exposes users to
an untested experimental policy, and does not scale to evaluating many
different target policies.

\emph{Off-policy evaluation} is an alternative paradigm for the same
question. Given logs from the existing system, which might be choosing
actions according to a very different \emph{logging policy} than the one we
seek to evaluate, can we estimate the expected reward of the
\emph{target policy}? There are three classes of approaches to address
this question: the \emph{direct method} (DM), also known as regression
adjustment, \emph{inverse propensity scoring}
(IPS)~\citep{horvitz1952generalization} and \emph{doubly robust} (DR)
estimators~\citep{robins1995semiparametric,bang2005doubly,dudik2011doubly,dudik2014doubly}.

Our first goal in this paper is to study the optimality of these three classes of approaches (or lack thereof), and more
fundamentally, to quantify the statistical hardness of off-policy
evaluation.
This problem was previously studied for multi-armed bandits \citep{li2015toward}
and is related to a large body of work on asymptotically optimal estimators of average treatment effects (ATE)
\citep{hahn1998role,hirano2003efficient,ImbensNeRi07,rothe2016value}, which can be viewed as a special
case of off-policy evaluation.
In both settings, a major underlying assumption is that rewards can be consistently estimated from the features (i.e.,
covariates) describing contexts and actions, either via a parametric
model or non-parametrically.
Under such consistency assumptions, it
has been shown that DM and/or DR are optimal~\citep{ImbensNeRi07,li2015toward,rothe2016value},\footnote{The
  precise assumptions vary for each estimator, and are somewhat weaker
  for DR than for DM.} whereas standard IPS is
not~\citep{hahn1998role,li2015toward}, but it becomes (asymptotically)
optimal when the true propensity scores are replaced by suitable
estimates~\citep{hirano2003efficient}.

\begin{comment}
\mdcomment{Miro's original version}

The first goal of this work is to study the optimality (or
the lack of) for these three classes of approaches, and more
fundamentally, to quantify the statistical hardness of off-policy
evaluation.
Previous research on optimal estimators assumes that
rewards can be consistently estimated from the features (i.e.,
covariates) describing contexts and actions, either via a parametric
model or non-parametrically.
Under such consistency assumptions, it
has been shown that DM and/or DR are optimal~\citep{ImbensNeRi07,li2015toward,rothe2016value},\footnote{The
  precise assumptions vary for each estimator, and are somewhat weaker
  for DR than for DM.} whereas standard IPS is
not~\citep{hahn1998role,li2015toward}, but it becomes (asymptotically)
optimal when the true propensity scores are replaced by suitable
estimates~\citep{hirano2003efficient}.
\end{comment}

Unfortunately, consistency of a reward model can be difficult to
achieve in practice. Parametric models tend to suffer from a large
bias (see, e.g., the empirical evaluation of
\citealp{dudik2011doubly}) and non-parametric models are limited to
small dimensions, otherwise non-asymptotic terms become too large
(see, e.g., the analysis of non-parametric regression by
\citealp{bertin2004asymptotically}).  Therefore, here we ask:
\emph{What can be said about hardness of policy evaluation in the
  absence of reward-model consistency?}

In this pursuit, we provide the first rate-optimal lower bound on the
mean-squared error (MSE) for off-policy evaluation in contextual
bandits without consistency assumptions. Our lower bound matches the
upper bounds of IPS and DR up to constants, when given a
non-degenerate context distributions. This result is in contrast with the
suboptimality of IPS under previously studied consistency assumptions,
which implies that the two settings are qualitatively different.

Whereas IPS and DR are both minimax optimal, our experiments (similar
to prior work) show that IPS is readily outperformed by DR, even
when using a simple parametric regression model that is not asymptotically
consistent. We attribute this to a lower variance of
%a better bias-variance tradeoff of
the DR estimator. We also empirically observe that while DR is generally
highly competitive, it is sometimes substantially outperformed by DM.
We therefore ask whether it
is possible to achieve an even better bias-variance tradeoff than DR.
%
%On the other
%hand, our experiments (similar to prior work) show that IPS is readily
%outperformed by DR, and we attribute this to better bias-variance
%tradeoffs in the DR estimator even when a simple parametric regression
%model that is not asymptotically consistent is used in practice.
%
%Given these observations, we further ask if even better bias-variance tradeoffs than DR are possible, given a reward estimator.
%
We answer affirmatively and
propose a new class of estimators, called the \OA estimators, that
\emph{adaptively interpolate} between DM and DR (or IPS). We show that
\OA has MSE no worse than DR (or IPS) in the worst case, but is robust
to large importance weights and can achieve a substantially
smaller variance than DR or IPS.
%, which is a standard hybrid of
%IPS and DM.

We empirically evaluate the \OA estimators against a number of strong
baselines from prior work, using a previously used experimental setup
to simulate contextual bandit problems on real-world multiclass
classification data. The results affirm the superior bias-variance
tradeoff of \OA estimators, with substantial improvements across a
number of problems.

In summary, the first part of our paper initiates the study of optimal
estimators in a finite-sample setting and without making strong
modeling assumptions, while the second part shows how to practically
exploit domain knowledge by building better estimators.

\section{Setup}
\label{sec:formulation}

In contextual bandit problems, the learning agent observes a context
$x$, takes an action $a$ and observes a scalar reward $r$ for the action
chosen in the context. Here the context $x$ is a feature vector from
some domain $\cX \subseteq \R^d$, drawn according to a distribution
$\dx$. Actions $a$ are drawn from a finite set~$\cA$. Rewards $r$ have
a distribution conditioned on $x$ and $a$ denoted by $D(r\given x,a)$. The
decision rule of the agent is called a policy, which maps contexts to
distributions over actions, allowing for randomization in the action choice. We
write $\mu(a\given x)$ and $\pi(a\given x)$ to denote the \emph{logging} and
\emph{target} policies respectively. Given a policy $\pi$, we extend
it to a joint distribution over $(x,a,r)$, where $x \sim \dx$, action
$a \sim \pi(a\given x)$, and $r\sim D(r\given x,a)$. With this notation, given $n$
i.i.d.\ samples $(x_i, a_i, r_i)\sim\mu$,
% where $p_i = \mu(a_i|x_i)$,
we wish to compute the \emph{value of $\pi$}:
\begin{equation}
  v^\pi = \E_\pi[r] = \E_{x \sim \dx} \E_{a \sim\pi(\cdot\given x)} \E_{r
    \sim D(\cdot\given a,x)}[r].
  \label{eq:vpi}
\end{equation}
In order to correct for the mismatch in the action distributions under
$\mu$ and $\pi$, it is typical to use \emph{importance weights},
defined as $\rho(x,a)\,{\coloneqq}\,
\pi(a\given x)/\mu(a\given x)$. For consistent estimation, it is standard to assume
that $\rho(x,a) \ne \infty$, corresponding to \emph{absolute continuity} of $\pi$ with
respect to $\mu$, meaning that whenever $\pi(a\given x) > 0$,
then also $\mu(a\given x)> 0$. We make this assumption throughout
the paper. In the remainder of the setup we present three common estimators of
$v^\pi$.

The first is the inverse
propensity scoring (IPS) estimator \citep{horvitz1952generalization}, defined
as
\begin{equation}
  \hat{v}^\pi_{\mathrm{IPS}} = \sum_{i=1}^n \rho(x_i, a_i) r_i.
  \label{eq:ips}
\end{equation}
IPS is unbiased and makes no assumptions about how rewards might
depend on contexts and actions. When such information is available, it
is natural to posit a parametric or non-parametric model of
$\E[r\given x,a]$ and fit it on the logged data to obtain a reward
estimator $\hat{r}(x,a)$. Policy evaluation can now simply be performed by
scoring $\pi$ according to $\hat{r}$ as
%\vspace{-1em}
%
\begin{equation}
\label{eq:DM}
\hat{v}^\pi_{\text{DM}} = \frac{1}{n}\sum_{i=1}^n \sum_{a\in \cA}
\pi(a\given x_i) \hat{r}(x_i,a),
\end{equation}
where the DM stands for \emph{direct method}~\citep{dudik2011doubly},
also known as \emph{regression adjustment} or
\emph{imputation}~\citep{rothe2016value}. IPS can have a large
variance when the target and logging policies differ substantially,
and parametric variants of DM can be inconsistent, leading to a large
bias. Therefore, both in theory and practice, it is beneficial to combine the
approaches into a \emph{doubly robust}
estimator~\citep{cassel1976some,robins1995semiparametric,dudik2011doubly},
such as the following variant,
\begin{align}
\notag
\hat{v}^\pi_{\text{DR}}
&=\frac{1}{n} \sum_{i=1}^n \biggl[ \rho(x_i,a_i)\bigParens{r_i -\hat{r}(x_i, a_i)}
\\[-6pt]
\label{eq:DR}
&\qquad\qquad\qquad{}
   + \sum_{a \in \cA} \pi(a\given x_i)
   \hat{r}(x_i, a)\biggr].
\end{align}
%\end{small}
Note that IPS is a special case of DR with $\hat{r}\equiv 0$. In the
sequel, we mostly focus on IPS and DR, and then suggest how to improve
them by further interpolating with DM.

\section{Limits of Off-policy Evaluation}
\label{sec:minimax}

In this section, we study the off-policy evaluation problem in a
minimax setup. After setting up the framework, we present our lower
bound and the matching upper bounds for IPS and DR under appropriate
conditions.

While minimax optimality is standard
in statistical estimations, it is not the only notion of optimality.
An alternative framework is that of asymptotic
optimality, which establishes Cramer-Rao style bounds on the asymptotic variance
of estimators. We use the minimax framework, because
it is the most amenable to finite-sample lower bounds, and
is complementary to previous asymptotic results, as we discuss
after presenting our main results.

\subsection{Minimax Framework}

Off-policy evaluation is a statistical estimation problem, where the
goal is to estimate $v^\pi$ given $n$ i.i.d.\ samples generated according
to a policy $\mu$.  We study this problem in a standard minimax
framework and seek to answer the following question. What is the
smallest MSE that \emph{any} estimator can achieve in the worst case
over a large class of contextual bandit problems? As is usual in the
minimax setting, we want the class of problems to be rich enough so
that the estimation problem is not trivial, and to be small enough
so that the lower bounds are not driven by complete pathologies. In
our problem, we
%make the choice to
fix $\dx$, $\mu$ and $\pi$, and
only take worst case over a class of reward distributions. This allows
the upper and lower bounds
%, as well as the estimators,
to depend on $\dx$, $\mu$ and $\pi$, highlighting how these ground-truth
parameters influence the problem difficulty. The family
of reward distributions $D(r\given x,a)$ that we study is a natural
generalization of the class studied by \citet{li2015toward} for
multi-armed bandits.
We assume we are given maps $R_{\max}~:~\cX\times \cA \to
\mathbb{R}_+$ and $\sigma~:~\cX\times \cA \to \mathbb{R}_+$, and
define the class of reward distributions $\cR(\sigma,R_{\max})$
as\footnote{Technically, the inequalities in the definition
  of $\cR(\sigma,R_{\max})$
%involve random variables and
  need to hold almost surely with $x \sim \dx$ and $a \sim \mu(\cdot\given x)$.}
\begin{align*}
  \cR(\sigma,R_{\max})
  \!\coloneqq\!
  \Bigl\{
    D(r|x,a):\:
    0\,{\leq}\,\E_D[r|x,a]\,{\leq}\,R_{\max}(x,a)
\\
    \text{ and }
    \Var_D[r|x,a]\,{\leq}\,\sigma^2(x,a)
    \text{ for all }x,a
  \Bigr\}
.
\end{align*}
Note that $\sigma$ and $R_{\max}$ are allowed to change over contexts
and actions.  Formally, an estimator is any function $\hat{v}:
(\cX\times\cA\times\R)^n \rightarrow \R$ that takes $n$ data points
collected by $\mu$ and outputs an estimate of $v^\pi$. The
\emph{minimax risk} of off-policy evaluation over the class
$\cR(\sigma,R_{\max})$, denoted by
$R_n(\pi;\dx,\mu,\sigma,R_{\max})$, is defined as
\begin{equation}
\label{eq:minimax_def}
\adjustlimits\inf_{\hat{v}}\sup_{\;\;D(r|x,a) \in \cR(\sigma,R_{\max})\;\;}
\E\left[(\hat{v}-v^\pi)^2\right].
\end{equation}
Recall that the expectation is taken over the $n$ samples drawn from
$\mu$, along with any randomness in the estimator. The main goal of
this section is to obtain a lower bound on the minimax risk. To state
our bound, recall that $\rho(x,a)=\pi(a\given x)/\mu(a\given
x)<\infty$ is an importance weight at $(x,a)$. We make the following
technical assumption on our problem instances, described by tuples of
the form $(\pi,\dx,\mu,\sigma,R_{\max})$:
\begin{assumption}
\label{ass:moment}
There exists $\epsilon>0$ such that
$\E_{\mu}\Bracks{(\rho\sigma)^{2+\epsilon}}$
and
$\E_{\mu}\Bracks{(\rho\rmax)^{2+\epsilon}}$
are finite.
\end{assumption}
This assumption is only a slight strengthening of the assumption that
$\E_{\mu}[(\rho\sigma)^2]$ and $\E_{\mu}[(\rho\rmax)^2]$ be finite,
which is required for consistency of IPS (see,
e.g.,~\citealp{dudik2014doubly}). Our assumption holds for instance
when the context space is finite, because then both $\rho$ and $\rmax$
are bounded.
%The latter assumption is quite standard in the off-policy evaluation literature as we observed in Section~\ref{sec:formulation}.

\subsection{Minimax Lower Bound for Off-policy Evaluation}
\label{sec:lb}

With the minimax setup in place, we now give our main lower bound on
the minimax risk for off-policy evaluation and discuss its
consequences. Our bound depends on a parameter $\lmax\in[0,1]$
and a derived indicator random variable
$\xi_{\lmax}(x,a)\coloneqq\1(\mu(x,a)\le\lmax)$, which
  separates out the pairs $(x,a)$ that appear ``frequently'' under
  $\mu$.%
\footnote{Formally, $\mu(x,a)$ corresponds to $\mu(\,\set{(x,a)}\,)$, i.e.,
  the measure under $\mu$ of the set $\set{(x,a)}$. For example, when
  $\dx$ is a continuous distribution then $\mu(x,a)=0$ everywhere.}
As we will see, the
``frequent'' pairs $(x,a)$ (where $\xi_{\lmax}=0$) correspond to the
intrinsically realizable part of the problem, where consistent reward
models can be constructed. The ``infrequent'' pairs
(where $\xi_{\lmax}=1$) constitute the part that is non-realizable in the
worst-case. When $\cX\subseteq\R^d$ and $\dx$ is continuous with
respect to the Lebesgue measure, then $\xi_{\lmax}(x,a)=1$ for all
$\lmax\in[0,1]$, so the problem is non-realizable everywhere in the
worst-case. Our result uses the following problem-dependent constant (defined with the
convention $0/0=0$):
%
%\begin{small}
\[
%\label{eq:C}
C_{\lmax}\coloneqq 2^{2+\epsilon} \max\bigg\{
  \frac{\E_\mu[(\rho\sigma)^{2+\epsilon}]^2}{\E_\mu[(\rho\sigma)^2]^{2+\epsilon}}
  ,\,
  \frac{\E_\mu[\xi_{\lmax}(\rho\rmax)^{2+\epsilon}]^2}{\E_\mu[\xi_{\lmax}(\rho\rmax)^2]^{2+\epsilon}}
  \bigg\}
.
\]
%\end{small}
%
\begin{theorem}%[Minimax lower bound]
\label{thm:minimax}
  Assume that a problem instance satisfies Assumption~\ref{ass:moment}
  with some $\epsilon>0$. Then for any $\lmax\in[0,1]$ and any
  $n\ge\max\bigBraces{16C_{\lmax}^{1/\epsilon},\,
    2C_{\lmax}^{2/\epsilon}\E_\mu[\sigma^2/\rmax^2] }$, the minimax
  risk $R_n(\pi;\dx,\mu,\sigma,R_{\max})$ satisfies the lower bound
\[
   \frac{
    \E_\mu\bigBracks{\rho^2\sigma^2} + \E_\mu\bigBracks{\xi_{\lmax}\rho^2\rmax^2}
    \BigParens{1-350n\lmax\log(5/\lmax)}
   }
   {700n}
\enspace.
\]
\end{theorem}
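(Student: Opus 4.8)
The plan is to lower-bound the minimax risk \eqref{eq:minimax_def} by Le Cam's two-point method. I would exhibit two reward models $D_0,D_1\in\cR(\sigma,\rmax)$ whose induced $n$-sample data distributions $P_0,P_1$ are statistically close, yet whose target values differ by $2s=\bigAbs{v^\pi(D_0)-v^\pi(D_1)}$. The standard reduction then gives a lower bound of the form $s^2\cdot\tfrac12\bigParens{1-\TV(P_0^n,P_1^n)}$, and I would control $\TV(P_0^n,P_1^n)$ through the single-sample divergence: since the two models share $\dx$ and $\mu$ and differ only in the conditional reward law, the per-sample KL is $\E_{(x,a)\sim\mu}\bigBracks{\mathrm{KL}\bigParens{D_0(\cdot\given x,a)\,\|\,D_1(\cdot\given x,a)}}$, and its $n$-fold tensorization/subadditivity is what will ultimately produce the $n$ in the denominator and the attenuation factor.

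Because the two additive pieces of the bound are structurally different, I would build two separate two-point families and combine them via $\max\{L_1,L_2\}\ge\tfrac12(L_1+L_2)$; this both explains the sum in the statement and accounts for the constant $700=2\cdot 350$. For the variance piece $\E_\mu[\rho^2\sigma^2]$, I perturb the reward mean at \emph{every} pair, $\bar r(x,a)\mapsto\bar r(x,a)\pm\Delta(x,a)$, and choose the direction $\Delta$ to maximize the value gap $2s=\sum_{x,a}\mu(x,a)\rho(x,a)\Delta(x,a)$ subject to a budget on the per-sample divergence $\sum_{x,a}\mu\,\Delta^2/(2\sigma^2)\le c/n$. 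A Cauchy--Schwarz/Lagrange computation shows the optimum is $\Delta\propto\rho\sigma^2$ and gives $s^2\asymp\E_\mu[\rho^2\sigma^2]/n$; this is exactly where the \emph{sum of squares} (a single power of $\mu$) arises rather than a square of a sum. For the range piece I instead place a full-swing perturbation $\Delta\approx\rmax$ only on the infrequent pairs (where $\xi_{\lmax}=1$): since such pairs are drawn rarely under $\mu$, the two data-generating processes coincide with high probability, and a careful $n$-fold total-variation estimate confines $\TV(P_0^n,P_1^n)$ and yields the factor $1-350\,n\lmax\log(5/\lmax)$, producing $s^2\asymp\E_\mu[\xi_{\lmax}\rho^2\rmax^2]/n$ times this factor.

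Two technical points must then be checked. First, every perturbed model must stay in $\cR(\sigma,\rmax)$: the shifted means must remain in $[0,\rmax]$. This forces the cap at $\rmax$ and is precisely why the second sample-size hypothesis $n\ge 2C_{\lmax}^{2/\epsilon}\E_\mu[\sigma^2/\rmax^2]$ is needed, as it guarantees the optimal variance-term perturbation $\Delta\propto\rho\sigma^2$ does not exceed $\rmax$. Second, and more seriously, $\rho$ is heavy-tailed, so the optimal perturbation direction may pile the divergence budget onto a few large-$\rho$ pairs. To prevent this I would truncate $\rho\sigma$ (resp.\ $\rho\rmax$) at a threshold and perturb only below it; the normalized $(2+\epsilon)$-moment ratios defining $C_{\lmax}$ quantify exactly the second-moment mass lost to truncation, and Assumption~\ref{ass:moment} together with $n\ge 16C_{\lmax}^{1/\epsilon}$ ensures this loss is only a constant factor, which is absorbed into the final constant.

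I expect the main obstacle to be the range piece. Unlike the variance piece it cannot draw on reward noise---it must survive even when $\sigma\equiv 0$ on the infrequent pairs---so the indistinguishability has to come entirely from rarely observing those pairs. Recovering the correct single power of $\mu$ in $\E_\mu[\xi_{\lmax}\rho^2\rmax^2]$ (rather than the weaker $\E_\mu[\xi_{\lmax}\mu\rho^2\rmax^2]$ that a naive coordinate-wise/Assouad perturbation would give) requires a \emph{coordinated} perturbation over a carefully chosen set of infrequent pairs together with a sharp $n$-fold total-variation bound; this is the source of the $\lmax\log(5/\lmax)$ term and of the precise numerical constants. Interfacing this sampling argument with the moment-truncation that tames the heavy tails of $\rho$ is, I expect, the most delicate part of the proof.
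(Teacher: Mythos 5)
Your treatment of the variance term and of the tail\slash moment machinery matches the paper's: for $\E_\mu[\rho^2\sigma^2]$ the paper (Theorem~\ref{thm:lowerbound:sigma}) indeed uses a two-point Le Cam argument with Gaussian rewards and a mean perturbation $\Delta=\min\{\alpha\sigma^2\rho/\E_\mu[\rho^2\sigma^2],\rmax\}$, and the correction terms are absorbed via H\"older and Markov using $C_{\lmax}$ and the two conditions on $n$, with the final bound obtained by averaging the two pieces. The genuine gap is in the range term, and it is not a deferred technicality: the two-point method you propose there cannot produce the stated rate. If $D_0$ and $D_1$ are two noiseless reward models that differ only on a set $S$ of context--action pairs, then by coupling the shared draws of $(x,a)$ the two $n$-sample distributions coincide unless some sample lands in $S$, so $\TV(P_0^n,P_1^n)\le 1-(1-\mu(S))^n\le n\mu(S)$ is already the sharp $n$-fold estimate; keeping it bounded away from $1$ forces $\mu(S)\lesssim 1/n$. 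The value gap is then at most $\E_\mu[\rho\rmax\1(S)]\le\sqrt{\E_\mu[\rho^2\rmax^2\1(S)]}\,\sqrt{\mu(S)}$, so $s^2\lesssim\E_\mu[\rho^2\rmax^2\1(S)]/n$, and $\E_\mu[\rho^2\rmax^2\1(S)]$ can be vanishingly small compared with $\E_\mu[\xi_{\lmax}\rho^2\rmax^2]$: for continuous $\dx$ with bounded $\rho$ and $\rmax$ it is $O(\mu(S))=O(1/n)$, and your bound degenerates to $O(1/n^2)$. No ``carefully chosen set of infrequent pairs'' and no sharper total-variation estimate escapes this, because two deterministic models can only be confused on the event that the set where they differ is never sampled.

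The paper's construction circumvents this by randomizing the reward function itself (Theorem~\ref{thm:lowerbound:rmax}). It partitions $\cX\times\cA$ into cells $B_i$ (\Lem{partition}) and draws, independently on each cell, a deterministic mean reward $\er(i)\in\{0,\xi_i R_i\}$ from a Bernoulli prior $\theta(i)$; the minimax risk is lower-bounded by the Bayes risk under this prior, split as $\E_\theta[(\hat v-v_\er)^2]\ge\tfrac12\E_\theta[(\hat v-\E_\theta[v_\er])^2]-\E_\theta[(v_\er-\E_\theta[v_\er])^2]$. The first term is an estimation problem whose noise is supplied by the prior, and there a two-point Le Cam argument over two \emph{priors} (Bernoulli parameters $1/2$ and $1/2+\Delta_i$ with $\Delta_i\propto\rho_iR_i$, perturbed on \emph{all} infrequent cells rather than on a rare set) yields $\E_\mu[\xi\rho^2\rmax^2]/(32en)$ exactly as in the variance case. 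The second term is the price of the randomization --- the variance of $v_\er$ under the prior --- controlled by Hoeffding using $\xi_i\mu(B_i)\le\lmax'$, and it is the sole source of the $\lmax\log(5/\lmax)$ attenuation factor in the statement. Without this prior-randomization step (or an equivalent many-hypothesis argument) your outline cannot be completed.
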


The bound holds for every $\lmax\in [0,1]$, and we can take the
maximum over $\lmax$. In particular, we get the following simple
corollary under continuous context distributions.

\begin{corollary}\label{cor:continuous}
  Under conditions of Theorem~\ref{thm:minimax}, assume further that
  $\dx$ has a density relative to Lebesgue measure. Then
  %$$ satisfies the lower bound
  \[
  R_n(\pi;\dx,\mu,\sigma,R_{\max}) \geq
  \frac{ \E_\mu\bigBracks{\rho^2\sigma^2} +
    \E_\mu\bigBracks{\rho^2\rmax^2}
  }
  {700n}
  \enspace.
\]
\end{corollary}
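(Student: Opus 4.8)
The plan is to invoke Theorem~\ref{thm:minimax} at a sequence of vanishing thresholds $\lmax$ and read off the limit. The first step exploits continuity of $\dx$: since $\dx$ has a density relative to Lebesgue measure, every singleton has zero measure, so $\mu(\set{(x,a)})=0$ for all $(x,a)$ and therefore $\xi_{\lmax}(x,a)=\1(\mu(x,a)\le\lmax)=1$ for every $\lmax\in[0,1]$ and every $(x,a)$. Consequently $\E_\mu[\xi_{\lmax}\rho^2\rmax^2]=\E_\mu[\rho^2\rmax^2]$, and, crucially, the constant $C_{\lmax}$ no longer depends on $\lmax$, since both entries of its defining maximum become $\lmax$-free. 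I write $C$ for this common value, which is finite by Assumption~\ref{ass:moment}.

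The second step is the limiting argument. Because $C_{\lmax}\equiv C$, the sample-size requirement $n\ge\max\bigBraces{16C^{1/\epsilon},\,2C^{2/\epsilon}\E_\mu[\sigma^2/\rmax^2]}$ is a single condition that does not vary with $\lmax$; hence for any $n$ meeting it, the conclusion of Theorem~\ref{thm:minimax} holds simultaneously for all $\lmax\in(0,1]$. Substituting $\xi_{\lmax}\equiv1$ yields, for each such $\lmax$,
\[
R_n(\pi;\dx,\mu,\sigma,R_{\max})\ge
\frac{\E_\mu\bigBracks{\rho^2\sigma^2}+\E_\mu\bigBracks{\rho^2\rmax^2}\BigParens{1-350n\lmax\log(5/\lmax)}}{700n}.
\]
Since $R_n$ itself does not depend on $\lmax$, I take the supremum of the right-hand side over $\lmax\in(0,1]$. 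As $\lmax\to0^+$ one has $\lmax\log(5/\lmax)\to0$, and for $\lmax\in(0,1]$ the subtracted term is nonnegative (as $\log(5/\lmax)>0$), so the right-hand side stays below $\bigParens{\E_\mu[\rho^2\sigma^2]+\E_\mu[\rho^2\rmax^2]}/(700n)$ and approaches it in the limit. This value is therefore the supremum over $\lmax$, and since $R_n$ dominates the bound for every $\lmax$, it dominates the supremum, giving exactly the claimed inequality.

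The only delicate point is the uniformity of the $n$-condition across $\lmax$: if $C_{\lmax}$ blew up as $\lmax\to0$, one could not fix a single $n$ and still apply the theorem arbitrarily close to the limit. Continuity of $\dx$ removes precisely this difficulty by freezing $C_{\lmax}$ at the constant $C$, so the supremum over $\lmax$ is legitimate and the factor $1-350n\lmax\log(5/\lmax)$ can be pushed to $1$. Everything else is the elementary observation that a fixed quantity bounded below by $\text{bound}(\lmax)$ for every $\lmax\in(0,1]$ is bounded below by $\sup_{\lmax}\text{bound}(\lmax)$, avoiding any need to evaluate the degenerate endpoint $\lmax=0$.
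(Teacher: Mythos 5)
Your proposal is correct and follows essentially the same route as the paper: the paper simply observes that the bound of Theorem~\ref{thm:minimax} holds for every $\lmax$, that continuity of $\dx$ forces $\xi_{\lmax}\equiv 1$, and that one then optimizes over $\lmax$ (equivalently, sends $\lmax\to 0$, where $\lmax\log(5/\lmax)\to 0$). Your extra care in noting that $C_{\lmax}$ is independent of $\lmax$, so the sample-size precondition is uniform and the supremum over $\lmax\in(0,1]$ is legitimate, is a nice touch but does not change the argument.
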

If $\dx$ is a mixture of a density and point masses, then $\lmax=0$
will exclude the point masses from the second term of the lower bound.
In general, choosing $\gamma=\order\bigParens{1/(n\log n)}$
  excludes the contexts likely to appear multiple times, and ensures
  that the second term in \Thm{minimax} remains non-trivial (when
  $\mu(x,a)\le\gamma$ with positive probability).

%The theorem is proved in Appendix~\ref{sec:proofs_lower} and uses a
%reduction to hypothesis testing, followed by an argument due to Le
%Cam. We now discuss several aspects of the above result.

Before sketching the proof of \Thm{minimax}, we discuss its preconditions
and implications.
%several of its key aspects.

\textbf{Preconditions of the theorem:} The theorem assumes the
existence of a (problem-dependent) constant $C_{\lmax}$ which depends
on the constant $\lmax$ and various moments of the importance-weighted rewards.
When $\rmax$ and $\sigma$ are bounded (a common situation), $C_{\lmax}$ measures how
heavy-tailed the importance weights are.
Note that $C_{\lmax} < \infty$ for all $\lmax \in [0,1]$
whenever Assumption~\ref{ass:moment} holds, and so
the condition on $n$ in \Thm{minimax} is eventually satisfied as long as the
random variable $\sigma/\rmax$ has a bounded second moment.
This is quite reasonable since in typical applications the \emph{a priori} bound on expected
rewards is on the same order or larger than the \emph{a priori} bound on the reward noise.
%An extreme case where the preconditions will be violated are
%if $\rmax \equiv 0$, in which case the estimation problem is indeed
%trivial.
For the remainder of the discussion, we assume that $n$
is appropriately large so the preconditions of the theorem hold.

\textbf{Comparison with upper bounds:} The setting of
Corollary~\ref{cor:continuous} is typical of many contextual bandit
applications. In this setting both IPS and DR achieve the minimax risk
up to a multiplicative constant. Let $r^*(x,a)\coloneqq\E[r\given
  x,a]$. Recall that DR is using an estimator $\hat{r}(x,a)$ of
$r^*(x,a)$, and IPS can be viewed as a special case of DR with
$\hat{r}\equiv 0$. By Lemma 3.3(i) of \citet{dudik2014doubly}, the MSE
of DR is
\begin{align}
\notag
&
\E\bracks{(\hat{v}^\pi_{\mathrm{DR}}-v^\pi)^2}
\\
\notag
%\textrm\hat{v}^\pi_{\mathrm{DR}})
&\quad{}
 =\frac1n
  \Bigl(
    \E_\mu[ \rho^2\sigma^2]
    + \Var_{x\sim D}\E_{a\sim\mu(\cdot\given x)}[ \rho r^*]
\\[-3pt]
\label{eq:DR:risk}
&\qquad\qquad + \E_{x\sim D}\Var_{a\sim\mu(\cdot\given x)}[ \rho (\hat{r}-r^*)]
  \Bigr).
\end{align}
Note that $0\le r^*\le\rmax$, so if the estimator $\hat{r}$ also satisfies
$0\le\hat{r}\le\rmax$, we obtain that
the risk of DR (with IPS as a special case)
is at most $\order\bigParens{\frac1n(\E_\mu[ \rho^2\sigma^2]+\E_\mu[ \rho^2\rmax^2])}$.
This means that IPS and DR are unimprovable, in the worst case, beyond
constant factors. Another implication is that the lower bound of  Corollary~\ref{cor:continuous}
is sharp, and the minimax risk is precisely
$\Theta\bigParens{\frac1n(\E_\mu[ \rho^2\sigma^2]+\E_\mu[ \rho^2\rmax^2])}$.
While IPS and DR exhibit the same minimax rates, \Eq{DR:risk} also immediately shows
that DR will be better than IPS whenever $\hat{r}$ is even moderately good (better than $\hat{r}\equiv 0$).
%% \begin{corollary}
%% Under conditions of Theorem~\ref{thm:minimax}, for sufficiently small
%% $\dx_0$ and large enough $n$:
%%   $$\inf_{\hat{v}}\,\sup_{D(r|a,x) \in \cR(\sigma^2,R_{\max})}\,\,
%% \E(\hat{v} - v^{\pi})^2 = \Theta\left[\frac{1}{n} \left(\E_{\mu}
%%   [\rho^2 \sigma^2] + \E_\mu [\rho^2R_{\max}^2]\right) \right]. $$
%% \label{cor:minimax}
%% \end{corollary}

\textbf{Comparison with asymptotic optimality results:} As discussed
in \Sec{intro}, previous work on optimal off-policy evaluation,
specifically the average treatment estimation, assumes
that it is possible to consistently estimate $r^*(x,a)=\E[r\given
  x,a]$.  Under such an assumption it is possible to (asymptotically)
match the risk of DR with the perfect reward estimator $\hat{r}\equiv
r^\star$, and this is the best possible asymptotic
risk~\citep{hahn1998role}.  This optimal risk is
$\frac1n\bigParens{\E_\mu[ \rho^2\sigma^2]+\Var_{x\sim D}\E_\pi[
    r^*\given x]}$, corresponding to the first two terms of
\Eq{DR:risk}, with no dependence on $\rmax$. Several estimators
achieve this risk, \emph{including the multiplicative constant}, under
various consistency assumptions
\citep{hahn1998role,hirano2003efficient,ImbensNeRi07,rothe2016value}.
Note that this is strictly below our lower bound for continuous
$\dx$. That is, consistency assumptions yield a better asymptotic risk
than possible in the agnostic setting. The gap in
constants between our upper and lower bounds is due to the
finite-sample setting, where lower-order terms cannot be ignored, but
have to be explicitly bounded. Indeed, apart from the result
of~\citet{li2015toward}, discussed below, ours is the first
finite-sample lower bound for off-policy evaluation.

\textbf{Comparison with multi-armed bandits:} For multi-armed bandits,
equivalent to contextual bandits with a single context,
\citet{li2015toward} show that the minimax risk equals
$\Theta(\E_\mu[\rho^2\sigma^2]/n)$ and is achieved, e.g., by DM,
whereas IPS is suboptimal. They also obtain a similar result for
contextual bandits, assuming that each context appears with a
large-enough probability to estimate its associated rewards by
empirical averages (amounting to realizability). While we obtain a
larger lower bound, this is not a contradiction, because we allow
arbitrarily small probabilities of individual contexts and even
continuous distributions, where the probability of any single context
is zero.

On a closer inspection, the first term of our bound in \Thm{minimax} coincides with the lower
bound of \citet{li2015toward} (up to constants).  The second term
(optimized over $\gamma$) is non-zero only if there are contexts with
small probabilities relative to the number of samples. In multi-armed
bandits, we recover the bound of \citet{li2015toward}.  When the
context distribution is continuous, or the probability of seeing
repeated contexts in a data set of size $n$ is small, we get the
minimax optimality of IPS.

One of our key contributions is to highlight this
\emph{agnostic contextual} regime where IPS is optimal.
In the \emph{non-contextual} regime, where each context appears frequently,
the rewards for each context-action pair can be consistently estimated by empirical averages.
Similarly, the asymptotic results discussed earlier focus on a setting where rewards
can be consistently estimated thanks to parametric assumptions or smoothness (for non-parametric
estimation), with the goal of asymptotic efficiency.
%
%% version 2.0
%In the \emph{non-contextual} regime, where each context appears frequently,
%we can estimate the rewards using empirical averages.
%In the realizable setting, smoothness (for non-parametric) or parametric
%assumptions are made to allow DM to work and asymptotic efficiency is
%studied.
%
%% version 1.0
%The \emph{non-contextual} regime, where each context appears frequently,
%can be viewed as the realizable part of the problem, where one can consistently estimate the rewards
%(e.g., by empirical averages). The \emph{contextual} part does not have enough data for such an approach to work without further assumptions.
%In fact, the asymptotic results discussed earlier also focus on the non-contextual part of the problem in this sense.
%There, smoothness assumptions are needed to allow non-parametric variants of the direct method to work.
%
Our work complements that line of research. In many practical
situations, we wish to evaluate policies on high-dimensional context
spaces, where the consistent estimation of rewards is not a feasible
option. In other words, the agnostic contextual regime
dominates.

\begin{comment}
Another way to highlight the
additional challenges in contextual bandits compared with multi-armed
bandits is to consider a noiseless setting with $\sigma \equiv
0$, where the multi-armed bandit problem trivial, because we can just
measure the reward of each arm with one pull and find out the optimal
choice. However, there is still a non-trivial lower bound of
$\Omega(\E_\mu[\rho^2\rmax^2]/n)$ for contextual bandits (under a continuous $\dx$),
which is exactly the upper bound on the MSE of IPS when the rewards
have no noise.
\end{comment}

%% In general, with a rich enough context distribution there is always
%% uncertainty in the value of a policy based on finitely many samples,
%% since several contexts are never observed.
%
The distinction between the contextual and non-contextual regime is
also present in our proof, which combines
%two bounds:
a non-contextual lower bound
due to the reward noise, similar to the analysis of~\citet{li2015toward}, and
an additional bound arising for non-degenerate context distributions. This
latter result is a key technical novelty of our paper.

\paragraph{Proof sketch:}
We only sketch some of the main ideas here and defer the full proof to
Appendix~\ref{sec:proofs_lower}.
%\paragraph{Proof sketch:} In the interest of space, we only sketch the
%proof here with details in Appendix~\ref{sec:proofs_lower}.
For
simplicity, we discuss the case where $\dx$ is a continuous
distribution. We consider two separate problem instances corresponding
to the two terms in Theorem~\ref{thm:minimax}. The first part is relatively
straightforward and reduces the problem to Gaussian mean
estimation. We focus on the second part which depends on $\rmax$. Our
construction defines a prior over the reward distributions,
$D(r\given x,a)$. Given any $(x,a)$, a problem instance is given by
\begin{equation}
\notag
\label{eqn:rewards}
  \E[r\given x,a] = \er(x,a) =
\begin{cases}
  \rmax(x,a)
  &\text{w.p.\ $\theta(x,a)$,}
\\
  0
  &\text{w.p.\ $1-\theta(x,a)$,}
\end{cases}
%  \mbox{w.p. $1 - \theta(x,a)$,}\end{array}\right.
%  \left\{\begin{array}{cc} \rmax(x,a)
%  &   \mbox{w.p. $\theta(x,a)$,}\\ 0 &
%  \mbox{w.p. $1 - \theta(x,a)$,}\end{array}\right.
\end{equation}
for $\theta(x,a)$ to be appropriately chosen. Once $\er$ is drawn, we
consider a problem instance defined by $\er$ where the rewards are
deterministic and the only randomness is in the contexts. In order to
lower bound the MSE across all problems, it suffices to lower bound
$\E_\theta[\textrm{MSE$_\eta$}(\hat{v})]$. That is, we can compute the
MSE of an estimator for each individual $\eta$, and take expectation
of the MSEs under the prior prescribed by $\theta$. If the expectation
is large, we know that there is a problem instance where the estimator
incurs a large MSE.

A key insight in our proof is that this expectation can be lower
bounded by
$\textrm{MSE}_{\E_\theta [\er(x,a)]}(\hat{v})$, corresponding to
the MSE of a single problem instance
%in estimating the value of $\pi$ when
with the actual \emph{rewards},
rather than $\eta(x,a)$, drawn according to $\theta$ and with the mean
reward function $\E_\theta [\er(x,a)]$.
%with the distribution~\eqref{eqn:rewards}, instead of $\er(x,a)$.
This is
powerful, since this new problem instance has stochastic rewards,
just like Gaussian mean estimation, and is amenable to
standard techniques. The lower bound by $\textrm{MSE}_{\E_\theta [\er(x,a)]}(\hat{v})$
is only valid when the
context distribution $\dx$ is rich enough (e.g., continuous). In that case,
our reasoning shows that with enough randomness in the context
distribution, a problem with even a deterministic reward function is
extremely challenging.
%
%However, this last statement only holds when the
%context distribution $\dx$ is rich enough (e.g., continuous), and
%essentially shows that with enough randomness in the context
%distribution, a problem with even a deterministic reward function is
%extremely challenging.
%We refer the reader to Appendix~\ref{sec:proofs_lower} for details.

\section{Incorporating Reward Models}
\label{sec:use_dm}

As discussed in the previous section, it is generally possible
to beat our minimax bound when consistent reward models exist.
We also argued that even in the absence of a consistent model, when DR and IPS
both achieve optimal risk rates, the performance of DR on
finite samples will be better than IPS as long
as the reward model is even moderately good (see Eq.~\ref{eq:DR:risk}).
However,
under a large reward noise $\sigma$, DR may still suffer from high variance when the importance weights are large, even when given a perfect reward model.
%just like IPS, DR also suffers from high variance when the importance weights are large, even if given a perfect reward oracle.
In this section, we derive a class of estimators that leverage reward models to directly address this source of high variance,
in a manner very different from the standard DR approach.

\subsection{The \OA Estimators}
\label{sec:adaptive_est}

Our starting
point is the observation that insistence on maintaining
unbiasedness puts the DR estimator at one extreme end of the
bias-variance tradeoff. Prior works have considered ideas such as
truncating the rewards or importance weights when the importance
weights are large (see, e.g., \citealt{bottou2013counterfactual}), which can dramatically reduce the variance
at the cost of a little bias. We take the intuition a step further
and propose to estimate the rewards for actions by two distinct strategies, based on
whether they have a large or a small importance weight in a given
context. When importance weights are small, we continue to use our
favorite unbiased estimators, but switch to directly applying the (potentially
biased) reward model on actions with large importance
weights. Here, ``small'' and ``large'' are defined via a
\emph{threshold parameter} $\tau$. Varying this parameter between $0$ and $\infty$
leads to a family of estimators which we call the \OA estimators as
they switch between
an agnostic approach (such as DR or IPS) and the direct method.
%a model-free and model-based approach.

We now formalize this intuition, and begin by decomposing $v^\pi$
according to importance weights:
\begin{align*}
\E_{\pi}[r] &=
  \E_\pi [r \1(\rho\leq\tau) ]
+ \E_\pi [r \1(\rho>\tau) ]
\\
  &= \E_\mu[\rho r \1(\rho\leq\tau) ]
\\
&\quad{}
+
  \E_{x\sim\dx}\BigBracks{
  \sum_{a\in \cA} \E_D[r\given x,a]\, \pi(a\given x)\, \mathbf{1}(\rho(x,a){>}\tau)
  }
.
\end{align*}
Conceptually, we split our problem into two. The first problem always
has small importance weights, so we can use unbiased estimators such
as IPS or DR. The second problem, where importance weights
are large, is addressed by DM. Writing this out leads to
the following estimator:
\begin{align}
&\hat{v}_{\mathrm{\OA}}
= \frac{1}{n}\sum_{i=1}^n \left[r_i \rho_i
  \mathbf{1}(\rho_i\leq \tau) \right] \nonumber
\\
&\qquad{}+
\frac{1}{n}\sum_{i=1}^n\sum_{a
  \in \cA} \hat{r}(x_i,a)\pi(a\given x_i) \1
(\rho(x_i,a)>\tau). \label{eq:OA}
\end{align}
Note that the above estimator specifically uses IPS on the first part
of the problem. When DR is used instead of IPS, we refer to the resulting
estimator as \OADR. The reward model used within the DR part of
the \OADR estimator can be the same or different from the reward model
used to impute rewards in the second part.
%We will mention an alternative using the DR estimator for the first part at the end of this section.
We next present a
bound on the MSE of the \OA estimator using IPS.
%for a given choice of the threshold $\tau$.
A similar bound holds for \OADR.

\begin{theorem}\label{thm:MSEbound}
  Let $\epsilon(a,x):= \hat{r}(a,x)-\E[r|a,x]$ be the bias of
  $\hat{r}$ and assume $\hat{r}(x,a) \in [0,\rmax(x,a)]$ almost
  surely. Then for $\hat{v}_{\mathrm{\OA}}$, with $\tau > 0$,
%in Eq.~\ref{eq:OA},
  the MSE is at most
  \begin{align*}
     &\frac{2}{n}\BigBraces{\,
      \E_\mu\bigBracks{
              \left(\sigma^2{+}R_{\max}^2\right)\rho^2 \1(\rho\,{\leq}\,\tau)
      }
      +
      \E_\pi\bigBracks{ R_{\max}^2\1(\rho\,{>}\,\tau) }
      \,}
\\&\qquad{}
      +
      \E_{\pi}\bigBracks{\epsilon\1(\rho\,{>}\,\tau)}^2
%      \E_{\pi}\bigBracks{\epsilon~\bigm\vert \rho>\tau}^2 \pi(\rho>\tau)^2
  .
  \end{align*}
  %% where quantities $R_{\max}$, $\sigma$, $\rho$, and $\epsilon$ are
  %% functions of the random variables $x$ and $a$, and we recall the use
  %% of $\pi$ and $\mu$ as joint distributions over $(x,a,r)$ tuples.
\end{theorem}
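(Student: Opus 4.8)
The plan is to use the standard bias--variance decomposition $\E[(\hat{v}_{\mathrm{\OA}}-v^\pi)^2] = \Var(\hat{v}_{\mathrm{\OA}}) + (\E[\hat{v}_{\mathrm{\OA}}]-v^\pi)^2$ and to match the two resulting terms with, respectively, the $\tfrac{2}{n}\{\cdots\}$ piece and the squared-bias piece of the claimed bound. Since $\hat{v}_{\mathrm{\OA}}=\tfrac1n\sum_{i=1}^n Z_i$ is an average of i.i.d.\ terms $Z_i = A_i + B_i$, where $A_i := r_i\rho_i\1(\rho_i\le\tau)$ is the truncated IPS contribution and $B_i := \sum_{a\in\cA}\hat r(x_i,a)\pi(a\given x_i)\1(\rho(x_i,a)>\tau)$ is the direct-method contribution, we have $\Var(\hat{v}_{\mathrm{\OA}}) = \tfrac1n\Var(Z_1)$, so everything reduces to a single-sample computation.

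First I would compute the bias. Using $\E_\mu[\rho\, g]=\E_\pi[g]$ and then integrating out $r$, we get $\E[A_1] = \E_\pi[r\1(\rho\le\tau)]$, which is exactly the low-weight part of the decomposition of $v^\pi$ stated just before the theorem. For the second part, $\E[B_1] = \E_{x\sim\dx}\bigBracks{\sum_a \hat r(x,a)\pi(a\given x)\1(\rho>\tau)}$, whereas the corresponding high-weight part of $v^\pi$ replaces $\hat r$ by $\E_D[r\given x,a]$. Subtracting, every term of $v^\pi$ cancels except the high-weight direct-method discrepancy, giving $\E[\hat{v}_{\mathrm{\OA}}]-v^\pi = \E_\pi\bigBracks{\epsilon\1(\rho>\tau)}$, whose square is precisely the last term of the bound.

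Next I would bound the variance by the raw second moment, $\Var(Z_1)\le\E[Z_1^2]=\E[(A_1+B_1)^2]\le 2\E[A_1^2]+2\E[B_1^2]$, which is where the factor of $2$ enters. For $\E[A_1^2]=\E_\mu[r^2\rho^2\1(\rho\le\tau)]$ I would condition on $(x,a)$ and use $\E[r^2\given x,a]=\Var_D[r\given x,a]+\E_D[r\given x,a]^2\le\sigma^2(x,a)+R_{\max}^2(x,a)$, which yields the first term inside the braces. The only genuinely non-routine step is bounding $\E[B_1^2]$: here $B_1$ is a $\pi(\cdot\given x)$-average of the quantities $\hat r(x,a)\1(\rho>\tau)$, so since $\pi(\cdot\given x)$ is a probability distribution over $\cA$, Jensen's inequality (convexity of $t\mapsto t^2$) lets me pull the square inside the sum, $B_1^2\le\sum_a \hat r(x,a)^2\1(\rho>\tau)\,\pi(a\given x)$. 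Using $0\le\hat r\le\rmax$ then gives $\E[B_1^2]\le\E_\pi[R_{\max}^2\1(\rho>\tau)]$, the second term inside the braces.

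Assembling these, $\Var(\hat{v}_{\mathrm{\OA}})\le \tfrac{2}{n}\bigBraces{\E_\mu[(\sigma^2+R_{\max}^2)\rho^2\1(\rho\le\tau)]+\E_\pi[R_{\max}^2\1(\rho>\tau)]}$, and adding the squared bias completes the bound. The main obstacle is essentially just the $\E[B_1^2]$ estimate via Jensen --- everything else is bookkeeping --- together with the minor care needed to track the factor of $2$ (one could alternatively route the cross term through $\Var(A_1+B_1)\le 2\Var(A_1)+2\Var(B_1)$ and arrive at the same place). For the \OADR\ variant I would replace $A_i$ with the corresponding low-weight DR term; the bias argument is unchanged, and the same conditioning and Jensen steps bound its second moment.
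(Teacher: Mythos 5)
Your proposal is correct and follows essentially the same route as the paper's proof: the same bias computation yielding $\E_\pi[\epsilon\1(\rho>\tau)]$, the same factor-of-$2$ splitting of the cross term between the truncated-IPS and direct-method parts, and the same Jensen step using $\sum_{a}\pi(a\given x)\le 1$ and $0\le\hat r\le\rmax$ to bound the DM contribution. The only cosmetic difference is that you bound the per-sample variance by the raw second moment $\E[Z_1^2]$ and use $\E[r^2\given x,a]\le\sigma^2+\rmax^2$ directly, whereas the paper routes the IPS part through the law of total variance before bounding $\E[r\given x,a]^2\le\rmax^2$; the two computations produce the identical bound.
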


The proposed estimator interpolates between DM and IPS.
For $\tau = 0$, \OA coincides with DM,
while $\tau \rightarrow \infty$ yields IPS. Consequently, \OA estimator
is minimax optimal when $\tau$ is appropriately chosen. However, unlike
IPS and DR, the \OA and \OADR estimators are by design more robust to
large (or heavy-tailed) importance weights. Several estimators related
to \OA have been previously studied:
%, and we discuss a couple of them here.

\begin{packed_enum}
\item \citet{bottou2013counterfactual} consider a special case
  of \OA with $\hat{r} \equiv 0$, meaning that all
  the actions with large importance weights are eliminated
  from IPS.
%They also propose a specific choice of $\tau$.
  We refer to this method as \emph{Trimmed IPS}.
\item \citet{thomas2016data} study an estimator similar to \OA in the more
  general setting of reinforcement learning. Their \emph{MAGIC}
  estimator can be
  seen as using several candidate thresholds $\tau$ and then
  evaluating the policy by a weighted sum of the estimators
  corresponding to each $\tau$. Similar to our approach of automatically
  determining $\tau$, they determine the weighting of estimators
  via optimization (as we discuss below).
%They address the questions of picking
%these thresholds and the weights in a specific manner for their
%estimator called MAGIC, and we discuss these aspects in more detail
%in the following subsection.
\end{packed_enum}

\subsection{Automatic Parameter Tuning}\label{sec:autotuning}

So far we have discussed the properties of the \OA estimators
assuming that the parameter $\tau$ is chosen well. Our goal
is to obtain the best of IPS and DM, but
a poor choice of $\tau$ might easily give us the worst
of the two estimators. Therefore, a method for selecting $\tau$
plays an essential role. A natural criterion would be to
pick $\tau$ that minimizes the MSE of the resulting estimator. Since we
do not know the precise MSE (as $v^\pi$ is unknown), an alternative is
to minimize its data-dependent estimate. Recalling that the MSE
can be written as the sum of variance and squared bias, we estimate
and bound the terms individually.

Recall that we are working with a data set $(x_i, a_i, r_i)$ and
$\rho_i\coloneqq\pi(a_i\given x_i)/\mu(a_i\given x_i)$. Using this data, it is
straightforward to estimate the
variance of the \OA estimator. Let
$Y_i(\tau)$ denote the estimated value that $\pi$ obtains on the data
point $x_i$ according to the \OA estimator with the threshold $\tau$,
that is
\[
Y_i(\tau)\coloneqq
r_i \rho_i \1(\rho_i{\leq}\tau) + \!\sum_{a \in \cA}\!
\hat{r}(x_i,a)\pi(a\given x_i) \1(\rho(x_i,a){>}\tau),
\]
and $\bar{Y}(\tau)\coloneqq\frac{1}{n} \sum_{i=1}^n Y_i(\tau)$. Since
$\hat{v}_{\mathrm{\OA}} = \bar{Y}(\tau)$ and the $x_i$ are
i.i.d., the variance can be estimated as
\begin{align}
\Var(\bar{Y}(\tau)) \approx \frac{1}{n^2} \sum_{i=1}^n
(Y_i(\tau)-\bar{Y}(\tau))^2 =: \widehat{\Var}_\tau,
\label{eq:var-tune}
\end{align}
where the approximation above is clearly consistent since the random
variables $Y_i$ are appropriately bounded as long as the rewards are
bounded, because the importance weights are capped at the threshold $\tau$.
%Note that we have explicitly denoted the dependence on $\tau$ in the
%\OA estimator above.

Next we turn to the bias term. For understanding bias, we look at the
MSE bound in Theorem~\ref{thm:MSEbound}, and observe that the last
term in that theorem is precisely the squared bias. Rather
than using a direct bias estimate, which would require knowledge of
the error in $\hat{r}$, we will upper bound this term. We
assume that the function $\rmax(x,a)$ is known. This is not limiting
since in most practical applications an
\emph{a priori} bound on the rewards is known. Then we
can upper bound the squared bias as
\begin{align*}
&\E_{\pi}\bigBracks{\epsilon\1(\rho>\tau)}^2
\le
 \E_{\pi}\bigBracks{\rmax\1(\rho>\tau)}^2.
%&\E_\pi[\epsilon~\given\rho{>}\tau]^2\,\pi(\rho{>}\tau)^2 \leq
%\E_\pi[\rmax\given\rho{>}\tau]^2\,\pi(\rho{>}\tau)^2.
\end{align*}
Replacing the expectation with an average, we obtain
\begin{align}
\notag \widehat{\text{Bias}}^2_\tau\coloneqq
\biggBracks{\frac{1}{n}\sum_{i=1}^n
  \E_\pi\bigBracks{R_{\max}\1(\rho>\tau) \bigGiven x_i} }^2 .
\end{align}

With these estimates, we pick the threshold $\widehat{\tau}$ by
optimizing the sum of estimated variance and the upper bound on bias,
\begin{equation}
  \widehat{\tau}\coloneqq\argmin_{\tau} \widehat{\Var}_\tau +
  \widehat{\text{Bias}}^2_\tau.
\label{eq:tau-tune}
\end{equation}
Our upper bound on the bias is rather conservative,
as it upper bounds the error of DM at the largest possible value for
every data point. This has the effect of favoring the use of the
unbiased part in \OA whenever possible, unless the variance would
overwhelm even an arbitrarily biased DM. This conservative choice, however,
immediately implies the minimax optimality of the \OA
estimator using $\widehat{\tau}$,
because the incurred bias is
no more than our upper bound, and it is incurred only when the minimax
optimal IPS estimator would be suffering an even larger variance.
%\alekh{Do we need to be more precise about this claim?}

%Finally, this development is quite related
Our automatic tuning is related to the MAGIC estimator
of~\citet{thomas2016data}. The key differences are that we pick only
one threshold $\tau$, while they combine the estimates with many
different $\tau$s using a weighting function. They pick this
weighting function by optimizing a bias-variance tradeoff, but with
significantly different bias and variance estimators. In our experiments,
the automatic tuning using \Eq{tau-tune} generally works better
than MAGIC.
%In the
%experiments, we evaluate their approach for picking $\tau$ and find
%that the choice $\widehat{\tau}$ in \Eq{tau-tune} generally works
%better.

\section{Experiments}\label{sec:exp}

\begin{figure*}[p]%[!th]	
	\centering
	\begin{subfigure}[t]{0.41\textwidth}
		\centering
		\includegraphics[width=\textwidth]{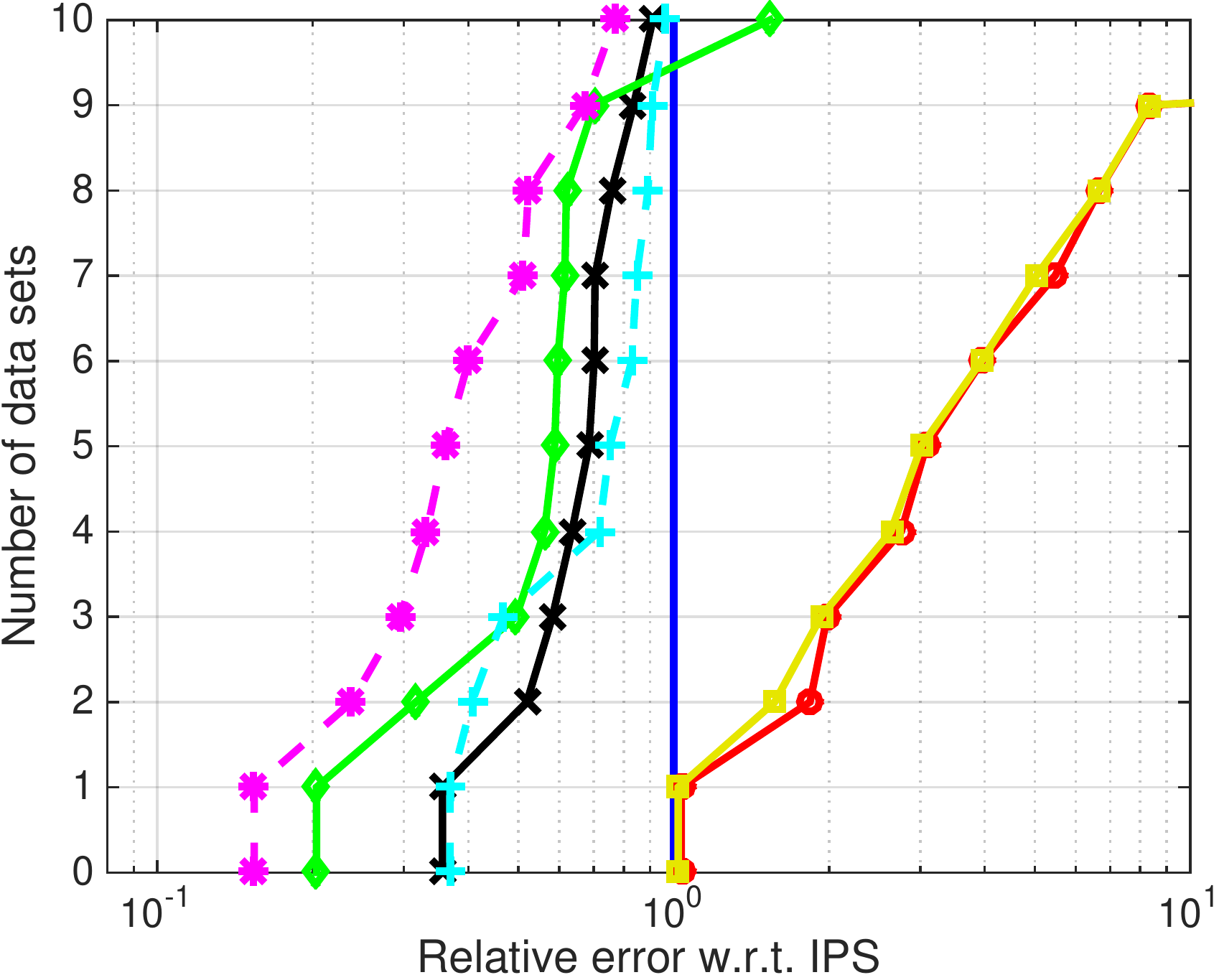}
		\caption{Deterministic reward}\label{fig:CDF-raw}		
	\end{subfigure}
	\quad\quad
	\begin{subfigure}[t]{0.4\textwidth}
		\centering
		\includegraphics[width=\textwidth]{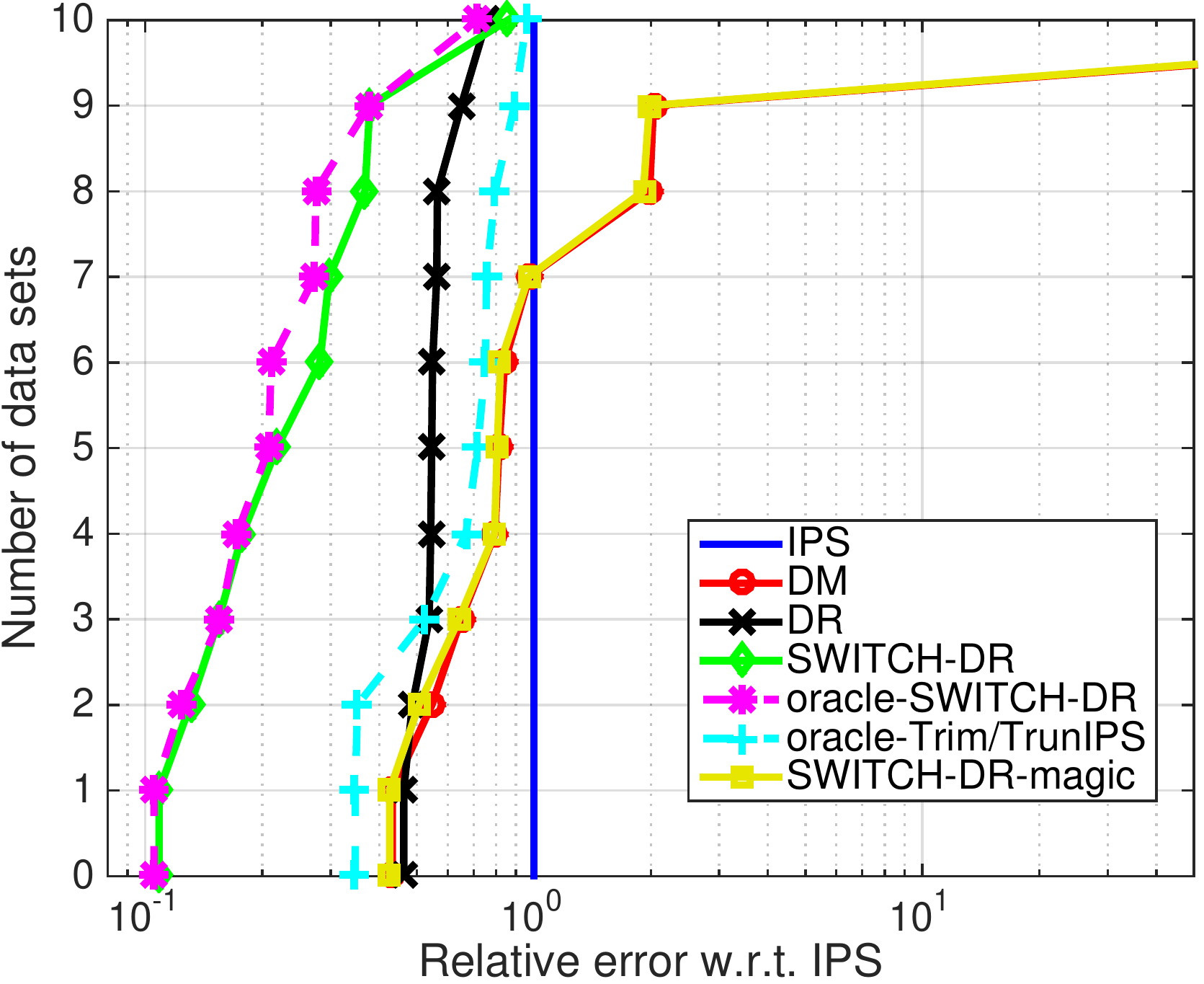}%{figs/CDF-noisy-natural}
		\caption{Noisy reward}\label{fig:CDF-noisy}
	\end{subfigure}\\
        %\smallskip
        %\includegraphics[width=0.8\textwidth]{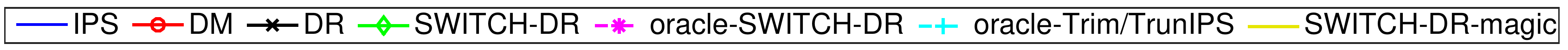}
	\caption{The number of UCI data sets
%(out of 10)
          where each method
          achieves at least a given Rel.~MSE. On the left, the UCI
          labels are used as is; on the right, label noise is
          added. Curves towards top-left achieve smaller MSE in more
          cases. Methods in dashed lines are ``cheating'' by choosing
          the threshold $\tau$ to optimize test MSE. \OADR outperforms
          baselines and our tuning of $\tau$ is not too far from the
          best possible. Each data set uses an $n$ which is the size of
          the data set, drawn via bootstrap sampling and results are
          averaged over 500 trials.}\label{fig:CDF}
%\end{figure*}
%\begin{figure*}[!th]
\bigskip
	\centering
%  \\
%  \smallskip
  	\hspace{2em}\includegraphics[width=0.9\textwidth]{figs/legend}
  \\
  \medskip
  \begin{subfigure}[t]{0.4\textwidth}
    \centering
    \includegraphics[width=\textwidth]{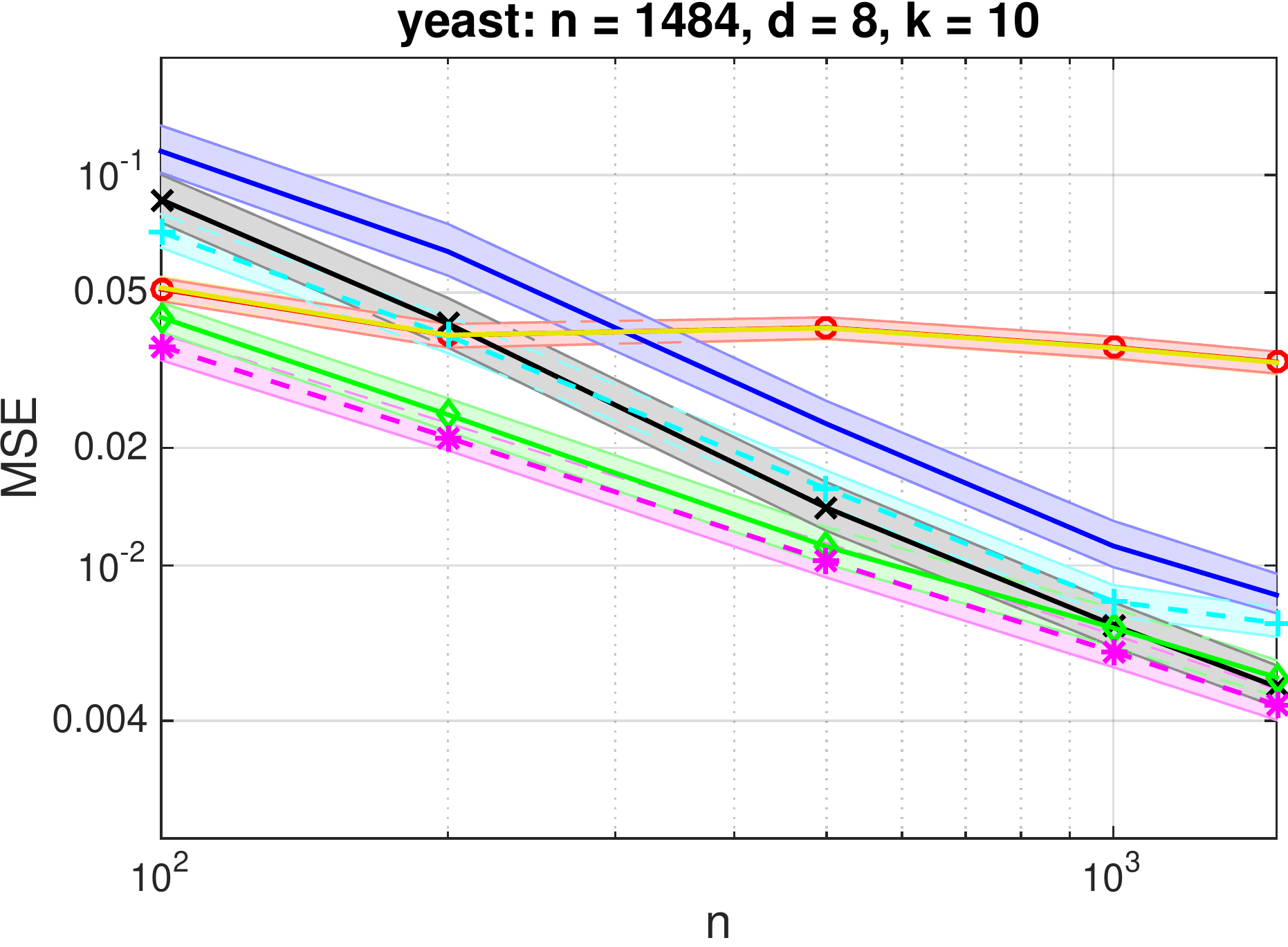}%{figs/\nonsync quantiles-rawyeast}
    \figsqueezeB\caption{yeast / deterministic reward}\label{fig:raw-yeast}		
  \end{subfigure}
  \hspace{0.25in}
  \begin{subfigure}[t]{0.4\textwidth}
    \centering
    \includegraphics[width=\textwidth]{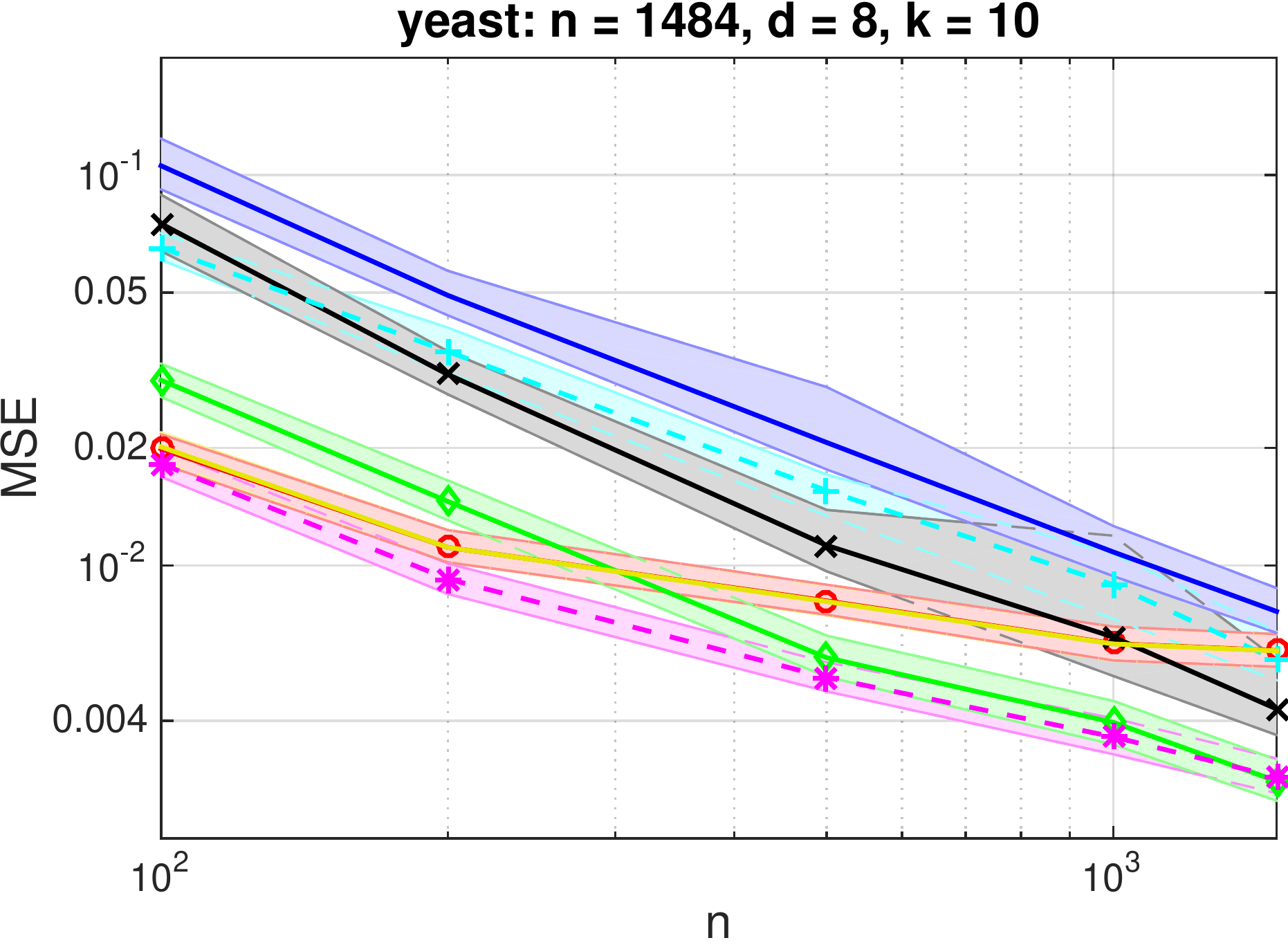}%{figs/\nonsync quantiles-noisyyeast}
    \figsqueezeB\caption{yeast / noisy reward}\label{fig:noisy-yeast}		
  \end{subfigure}
  \\
  \medskip
  \begin{subfigure}[t]{0.4\textwidth}
    \centering
    \includegraphics[width=\textwidth]{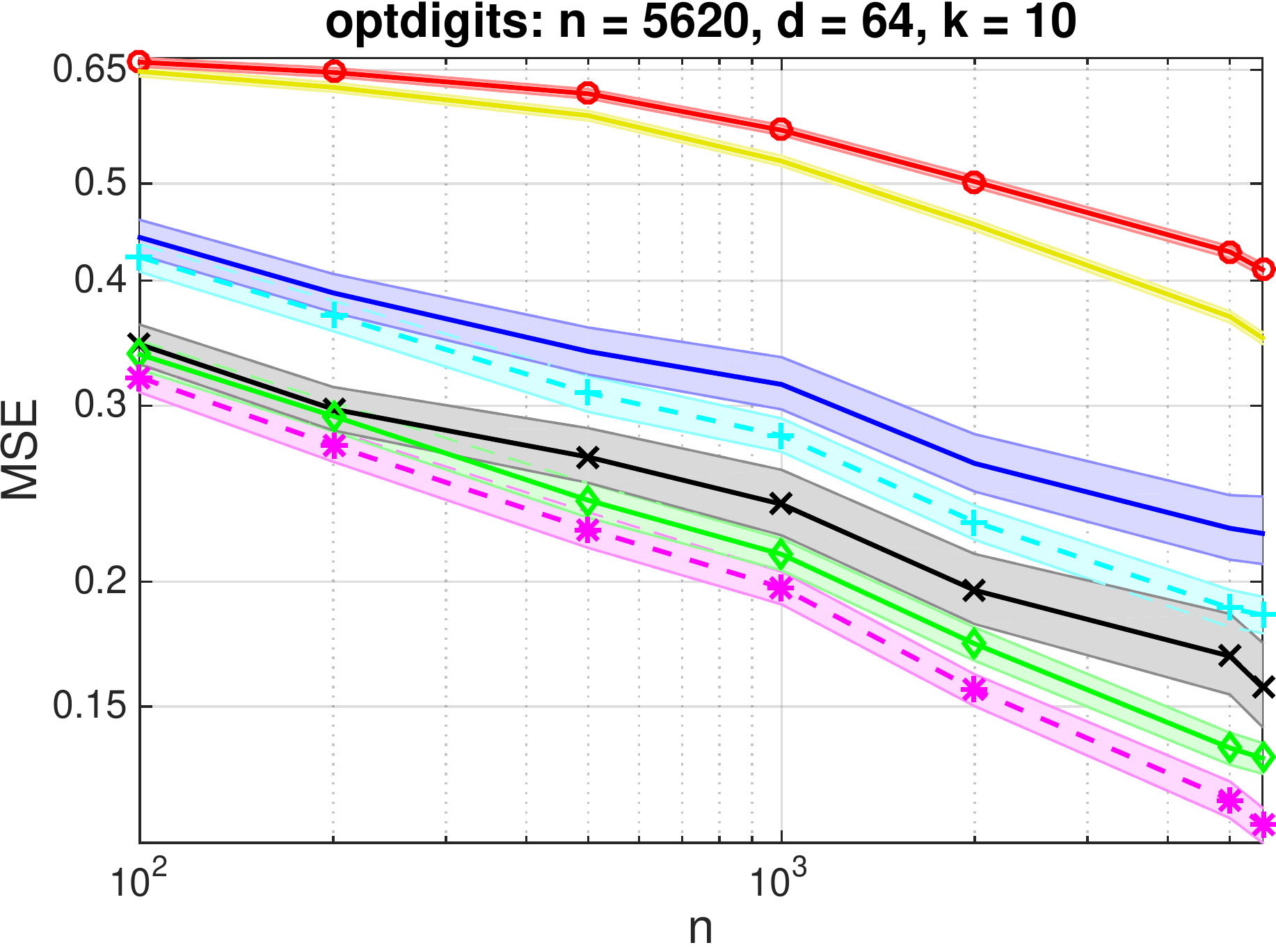}%{figs/\nonsync quantiles-rawoptdigits}
    \figsqueezeB\caption{optdigits / deterministic reward}\label{fig:raw-optdigits}		
  \end{subfigure}
  \hspace{0.25in}
  \begin{subfigure}[t]{0.4\textwidth}
    \centering
    \includegraphics[width=\textwidth]{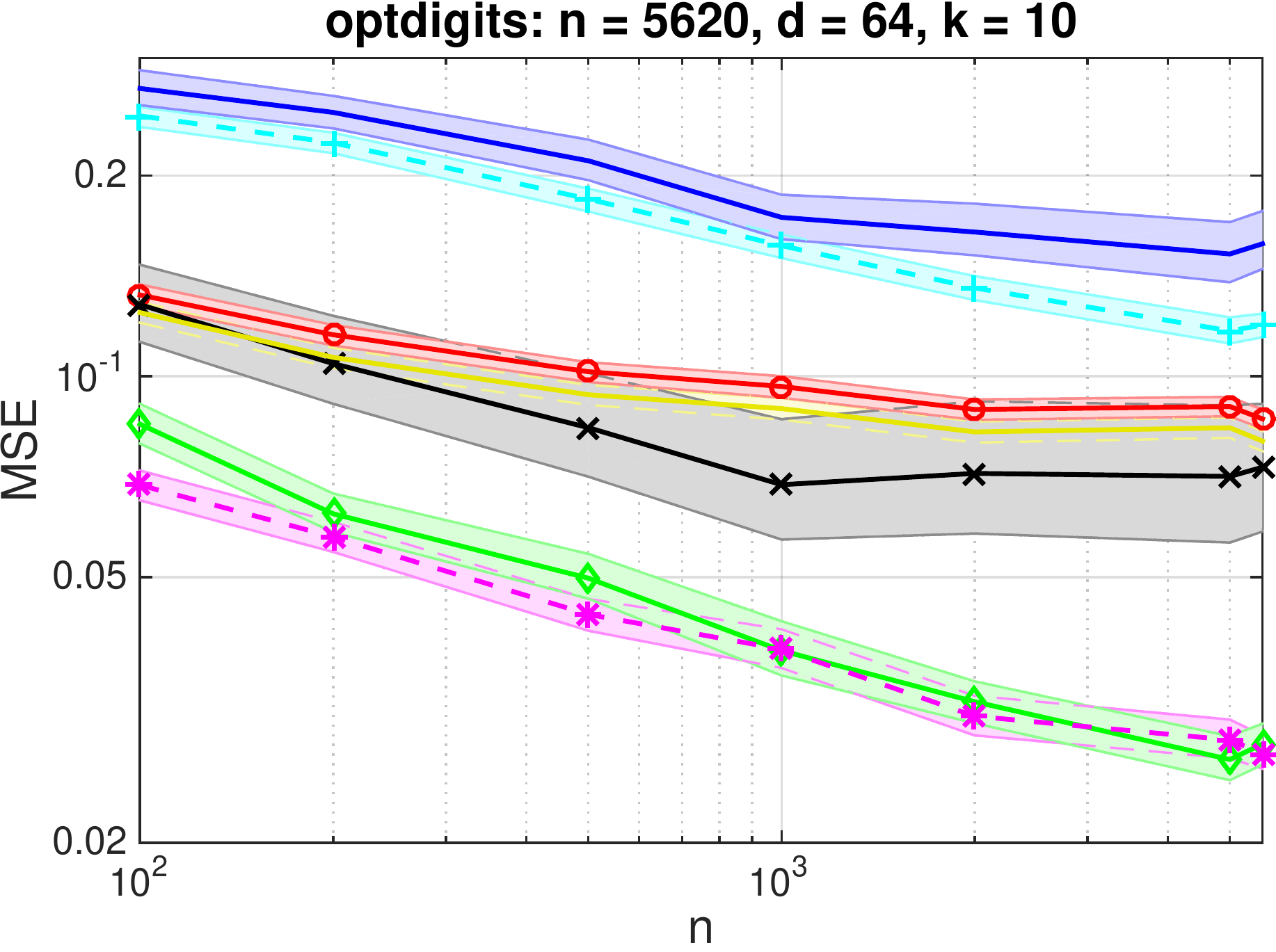}%{figs/\nonsync quantiles-noisyoptdigits}
    \figsqueezeB\caption{optdigits / noisy reward}\label{fig:noisy-optdigits}		
  \end{subfigure}
  \caption{MSE of different methods as a function of input data size. \emph{Top:} optdigits data set.
  \emph{Bottom:} yeast data set. }\label{fig:samplesize}
\end{figure*}

We next empirically evaluate the proposed
\OA estimators on the 10 UCI data sets previously used
for off-policy evaluation \citep{dudik2011doubly}. We convert the multi-class
classification problem to contextual bandits by treating the labels as
actions for a policy $\mu$, and recording the reward of $1$ if the
correct label is chosen, and $0$ otherwise.

In addition to this \emph{deterministic} reward model,
we also consider a \emph{noisy} reward model for each data set,
which reveals the correct reward with probability $0.5$ and outputs a
random coin toss otherwise. Theoretically, this
should lead to bigger $\sigma^2$ and larger variance in all
estimators. In both reward models, $\rmax \equiv 1$ is a valid bound.

The target policy $\pi$ is the deterministic decision of a
logistic regression classifier learned on the multi-class data, while the logging policy $\mu$
samples according to the probability estimates of a logistic model
learned on a covariate-shifted version of the data.
The covariate shift is obtained as in prior work
~\citep{dudik2011doubly,gretton2009covariate}.

In each data set with $n$ examples, we treat the uniform distribution over
the data set itself as a surrogate of the population distribution so
that we know the ground truth of the rewards. Then, in the simulator,
we randomly draw i.i.d.\ data sets of size
$100, 200, 500,1000,2000,5000, 10000,\dotsc$ until reaching $n$, with
$500$ different repetitions of each size.
We estimate MSE of each estimator by taking the empirical
average of the squared error over the $500$ replicates; note that we can calculate
the squared error exactly, because we know $v^\pi$. For some of the methods,
e.g., IPS and DR, the MSE can have a very large variance
due to the potentially large importance weights. This leads to very
large error bars if we estimate their MSE even with $500$
replicates. To circumvent this issue, we report a clipped version of
the MSE that truncates the squared error to $1$, namely $ \MSE =
\E[(\hat{v}-v^\pi)^2 \wedge 1].$ This allows us to get valid
confidence intervals for our empirical estimates of this
quantity. Note that this does not change the MSE estimate of our
approach at all, but is significantly more favorable towards IPS and
DR. In this section, whenever we refer to ``MSE'', we are referring to
this truncated version.

We compare \OA and \OADR against the following baselines:
1.~\emph{IPS};
2.~\emph{DM trained via logistic regression};
3.~\emph{DR};
4.~\emph{Truncated and Reweighted IPS (TrunIPS)};
and
5.~\emph{Trimmed IPS (TrimIPS)}.

In DM, we train $\hat{r}$ and then evaluate the policy on the same
contextual bandit data set. Following \citet{dudik2011doubly}, DR is constructed by randomly splitting the
contextual bandit data into two folds, estimating $\hat{r}$ on one fold, and then evaluating
$\pi$ on the other fold and vice versa, obtaining two estimates.
The final estimate is the average of the two.
%and is standard practice of
%doubly robust estimators when $\hat{r}$ needs to be estimated from the data.
%
TrunIPS is a variant of IPS, where importance weights are
capped at a threshold $\tau$ and then renormalized to sum to one~\citep[see,
  e.g.,][]{bembom2008data}.
TrimIPS is a special case of \OA due to~\citet{bottou2013counterfactual} described earlier,
where $\hat{r} \equiv 0$.

For \OA and \OADR as well as TrunIPS and TrimIPS we
select the parameter $\tau$ by our automatic tuning from
Section~\ref{sec:autotuning}. To evaluate our tuning approach,
we also include the
results for the $\tau$ tuned optimally in hindsight, which we
refer to as the \emph{oracle} setting, and also show results obtained
by the multi-threshold MAGIC approach. In all these approaches we
optimize among 21 possible thresholds, from an exponential grid between
the smallest and the largest importance weight observed in the data,
considering all actions in each observed context. 

In order to stay comparable across data sets and data sizes, our performance
measure is the relative MSE with
respect to the IPS. Thus, for each estimator $\hat{v}$, we calculate
%the relative MSE with respect to the IPS estimator
$\mathrm{Rel.\ MSE}(\hat{v}) = \frac{\mathrm{MSE}(\hat{v})}{\mathrm{ MSE}(\hat{v}_{\mathrm{IPS}})}$.
%so that the numerical value is comparable across data sets and is independent of $n$.

The results are summarized in Figure~\ref{fig:CDF}, plotting the
number of data sets where each method achieves at least a given
relative MSE.%
\footnote{For
  clarity, we have excluded \OA, which
  significantly outperforms IPS, but is dominated by \OADR. Similarly,
  we only report the better of oracle-TrimIPS and oracle-TrunIPS.}
Thus, methods that achieve smaller MSE across more
data sets are towards the top-left corner of the plot, and a larger
area under the curve indicates better performance.
Some of the differences in MSE are several orders of magnitude
large since the relative MSE is shown on the logaritmic scale.
As we see, \OADR dominates all baselines and our empirical
tuning of $\tau$ is not too far from the best possible.
The automatic
tuning by MAGIC tends to revert to DM, because its bias estimate is too
optimistic and so DM is preferred whenever IPS or DR have some significant variance.
The gains of
\OADR are even greater in the noisy-reward setting, where we add label
noise to UCI data.

In Figure~\ref{fig:samplesize}, we illustrate the convergence of MSE
as $n$ increases.
%In particular,
We select two data sets and show how
%\OA and
\OADR performs against baselines in two typical cases: (i) when
the direct method works well initially but is outperformed by IPS and DR
as $n$ gets large, and (ii) when the direct method works
poorly.  In the first case, \OADR outperforms both DM and
IPS, while DR improves over IPS only moderately. In the second case,
\OADR performs about as well as IPS and DR despite a poor performance of DM.
In all cases, \OADR is robust to additional noise in the
reward, while IPS and DR suffer from higher variance.
Results for the remaining data sets are in
%Additional results for all data sets are in
Appendix~\ref{sec:experiments}.

%
%The results are normalized by dividing MSE by the MSE of IPS. In the left-hand panel, the UCI labels
%are used as is; in the right-hand panel, additional label noise is added.

%\begin{figure}
%  \centering
%  \includegraphics[width=0.8\textwidth]{}
%    \includegraphics[width=0.8\textwidth]{}
%  \caption{Comparing \OA estimators with benchmarks on UCI data sets.}\label{fig:UCI1}
%\end{figure}
%
%
%\begin{figure}
%  \centering
%  \includegraphics[width=0.8\textwidth]{}
%    \includegraphics[width=0.8\textwidth]{}
%  \caption{Comparing \OA estimators with benchmarks on UCI data sets, with additional stochastic noise added.}\label{fig:UCI2}
%\end{figure}

\section{Conclusion}
In this paper we have carried out minimax analysis of off-policy evaluation
in contextual bandits and showed that IPS and DR are minimax optimal
in the worst-case, when no consistent reward model is available.
This result complements existing asymptotic
theory with assumptions on reward models, and highlights the
differences between agnostic and consistent settings. Practically,
the result further motivates the importance of using side
information, possibly by modeling rewards directly,
especially when importance weights are too large. Given this
observation, we propose a new class of estimators called \OA that can
be used to combine any importance weighting estimators, including IPS
and DR, with DM. The estimators adaptively switch between DM
when the importance weights are large and either IPS or
DR when the importance weights are small. We show that the new
estimators have favorable theoretical properties and also work well on
real-world data. Many interesting directions remain open for future
work, including
high-probability upper bounds on the finite-sample MSE of \OA
estimators, as well as sharper finite-sample lower bounds under
realistic assumptions on the reward model.

\section*{Acknowledgments}

The work was partially completed during YW's internship at Microsoft
Research NYC from May 2016 to Aug 2016.  The authors would like to
thank Lihong Li and John Langford for helpful discussions, Edward
Kennedy for bringing our attention to related problems and recent
developments in causal inference, and an anonymous reviewer for
pointing out relevant econometric references and providing valuable
feedback that helped connect our work with research on average treatment effects.
%% and Wei He for pointing us to
%% \citep{sun2006exact} for rigorous a measure-theoretic treatment to a
%% continuum of random variables and the corresponding law of large
%% numbers.

%%YW was supported by NSF Award BCS-0941518 to CMU Statistics, a grant
%%by Singapore NRF under its International Research Centre @ Singapore
%%Funding Initiative, and a Baidu Scholarship.

%\newpage
\bibliographystyle{icml2017}
\bibliography{bandits}

\newpage

\appendix
\onecolumn

The supplementary materials is organized as follows. In
Appendices~\ref{sec:proofs_lower}, \ref{sec:otherproofs} and~\ref{sec:utilitylemmas},
we provide detailed proofs of the theoretical results in the
paper. In Appendix~\ref{sec:experiments}, we provide
additional figures for the experiments described in
Section~\ref{sec:exp}.

\section{Proof of Theorem~\ref{thm:minimax}}
\label{sec:proofs_lower}

In this appendix we prove the minimax bound of \Thm{minimax}. The result is obtained by combining the following
two lower bounds:
\begin{theorem}[Lower bound 1]\label{thm:lowerbound:sigma}
For each problem instance such that $\E_\mu[\rho^2\sigma^2] <\infty$, we have
\[
  R_n(\pi;\dx,\mu,\sigma,\rmax)
  \geq
  \frac{\E_\mu[\rho^2\sigma^2]}{32en}
  \left[1-
  \frac{\E_\mu\Bracks{\rho^2\sigma^2 \1\BigParens{\rho\sigma^2>\rmax\sqrt{n\E_\mu[\rho^2\sigma^2]/2}}}}{\E_\mu[\rho^2\sigma^2]}\right]^2.
\]
\end{theorem}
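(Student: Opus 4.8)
The plan is to lower-bound the minimax risk by Le Cam's two-point method, restricting the adversary to Gaussian reward distributions---this is the advertised reduction to Gaussian mean estimation. Both hypotheses share the same marginal over $(x,a)$ (determined by $\dx$ and $\mu$, carrying no information about $v^\pi$) and differ only in the reward conditional. Concretely, for a sign $s\in\{+1,-1\}$ I would set $r\mid x,a\sim N\bigParens{\tfrac12\rmax(x,a)+s\,\delta(x,a),\ \sigma^2(x,a)}$, where $\delta\ge0$ is a perturbation to be designed. Centering at $\tfrac12\rmax$ keeps the mean in $[0,\rmax]$ whenever $\delta\le\tfrac12\rmax$, and the Gaussian variance is exactly $\sigma^2$, so both instances lie in $\cR(\sigma,\rmax)$. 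Since $v^\pi=\E_\mu[\rho\,\E_D[r\mid x,a]]$ and the common $\tfrac12\rmax$ term cancels, the value gap is $\abs{v^\pi_+-v^\pi_-}=2\,\E_\mu[\rho\,\delta]$. (If $\E_\mu[\rho^2\sigma^2]=0$ the claimed bound is trivial, so assume it is positive.)

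Next I would control the information. As the two $n$-sample laws differ only in the reward conditionals, $\mathrm{KL}=n\,\E_\mu\bigBracks{(2\delta)^2/(2\sigma^2)}=2n\,\E_\mu[\delta^2/\sigma^2]$. The design problem is then: maximize the gap $\E_\mu[\rho\delta]$ for a fixed information budget $\E_\mu[\delta^2/\sigma^2]$, subject to the feasibility constraint $\delta\le\tfrac12\rmax$. Writing $\E_\mu[\rho\delta]=\E_\mu[(\rho\sigma)(\delta/\sigma)]$ and applying Cauchy--Schwarz shows the unconstrained optimum points along $\delta\propto\rho\sigma^2$, so I would take $\delta=t\,\rho\sigma^2$ with $t=1/\sqrt{2n\,\E_\mu[\rho^2\sigma^2]}$, which makes $\mathrm{KL}\le1$. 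The constraint $\delta\le\tfrac12\rmax$ then bites exactly when $t\rho\sigma^2>\tfrac12\rmax$, i.e.\ when $\rho\sigma^2>\rmax\sqrt{n\E_\mu[\rho^2\sigma^2]/2}$; on those pairs I would zero out the perturbation. Zeroing only decreases $\mathrm{KL}$ (the budget is still respected), while the gap becomes $t\,\E_\mu\bigBracks{\rho^2\sigma^2\,\1(\rho\sigma^2\le\rmax\sqrt{n\E_\mu[\rho^2\sigma^2]/2})}=t\,\E_\mu[\rho^2\sigma^2]\,\bigParens{1-\kappa}$, where $\kappa$ is precisely the truncation ratio appearing in the statement.

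Finally I would invoke the standard two-point bound $\inf_{\hat v}\max_s\E_s[(\hat v-v^\pi_s)^2]\ge\Delta^2(1-\TV)$ with $\Delta=\tfrac12\abs{v^\pi_+-v^\pi_-}=\E_\mu[\rho\delta]$, together with the Bretagnolle--Huber inequality $\TV\le\sqrt{1-e^{-\mathrm{KL}}}$, which for $\mathrm{KL}\le1$ gives $1-\TV\ge\tfrac{1}{2e}$ (this is the source of the factor $e$). Substituting $\Delta^2=\tfrac{1}{2n}\E_\mu[\rho^2\sigma^2]\bigParens{1-\kappa}^2$ yields a bound of exactly the stated form, and careful (slightly conservative) bookkeeping of the accumulated constants produces the factor $1/(32en)$.

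The main obstacle is the interaction between the Cauchy--Schwarz-optimal perturbation direction and the hard constraint $0\le\E_D[r\mid x,a]\le\rmax$: the optimal $\delta\propto\rho\sigma^2$ becomes infeasible precisely on the heavy pairs where $\rho\sigma^2$ is large, and it is this forced truncation that simultaneously produces the bracketed correction factor and pins down the threshold $\rmax\sqrt{n\E_\mu[\rho^2\sigma^2]/2}$. Verifying that the truncation only helps on the information side while degrading the signal in a controlled way---so that both the leading constant and the correction term emerge as claimed---is the delicate step; the remainder is the routine Gaussian two-point calculation.
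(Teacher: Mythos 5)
Your proposal is correct and follows essentially the same route as the paper's proof: a Le Cam two-point argument with Gaussian rewards sharing the $(x,a)$ marginal, a perturbation proportional to $\rho\sigma^2$ calibrated so the $n$-sample KL is $O(1)$, and a truncation on the heavy pairs where the $[0,\rmax]$ constraint binds, which is exactly what produces the bracketed correction factor and the stated threshold $\rmax\sqrt{n\E_\mu[\rho^2\sigma^2]/2}$. (The paper perturbs one-sidedly from $0$ and clips at $\rmax$ rather than centering at $\rmax/2$ and zeroing out, and your two-point inequality should read $\tfrac{\Delta^2}{2}(1-\TV)$ rather than $\Delta^2(1-\TV)$, but either way the constants land comfortably within the claimed $1/(32en)$.)
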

\begin{theorem}[Lower bound 2]\label{thm:lowerbound:rmax}
Assume that $\E_\mu[\rho^2 R_{\max}^2]<\infty$, and we are given $\lmax\in[0,1]$ and $\delta\in(0,1]$.
Write $\xi\coloneqq\xi_{\lmax}$ and $\lmax'\coloneqq\max\set{\lmax,\delta}$.
Then there exist functions $\hR(x,a)$ and $\hrho(x,a)$
such that
\[
   \hR^2(x,a)\le\rmax^2(x,a)\le (1+\delta)\hR^2(x,a)
\enspace,
\qquad
   \hrho^2(x,a)\le\rho^2(x,a)\le (1+\delta)\hrho^2(x,a)
\]
and the following lower bound holds:
\begin{multline*}
 R_n(\pi;\dx,\mu,
%\sigma\equiv 0,
      \sigma,R_{\max})
\\
  \ge
  \frac{\E_\mu[\xi\hrho^2 \hR^2]}{32en}
  \Bracks{
      1 - \frac{\E_{\mu}
        \left[\xi \hrho^2 \hR^2
        \1\BigParens{\xi \hrho \hR > \sqrt{n\E_{\mu}[\xi \hrho^2 \hR^2]/16}}
        \right]}
      {\E_{\mu}[\xi \hrho^2 \hR^2]}
  }^2
  -
  \lmax'\log\bigParens{5/\lmax'}(1+\delta)\E_\mu[\xi\hrho^2 \hR^2]
\enspace.
\end{multline*}
\end{theorem}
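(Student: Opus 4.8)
The plan is to reduce the minimax risk to the Bayes risk of a carefully chosen prior over \emph{deterministic} reward functions, and then to show that this Bayes risk is no smaller than the risk of a single problem instance with \emph{stochastic} rewards, to which the two-point machinery behind Lower bound~1 (Theorem~\ref{thm:lowerbound:sigma}) can be applied. The whole point of the construction is to manufacture reward variance of order $\rmax^2$ out of a problem whose rewards are deterministic (so that $\sigma$ contributes nothing): for each pair $(x,a)$ I let the mean reward be a Bernoulli random variable taking the value $\rmax(x,a)$ with probability $\theta(x,a)$ and $0$ otherwise, drawn independently across pairs. I first replace $\rmax$ and $\rho$ by piecewise-constant lower approximations $\hR\le\rmax$ and $\hrho\le\rho$ obtained by rounding onto a geometric grid of ratio $\sqrt{1+\delta}$; this yields the claimed sandwich bounds $\hR^2\le\rmax^2\le(1+\delta)\hR^2$ and $\hrho^2\le\rho^2\le(1+\delta)\hrho^2$, makes the construction homogeneous on each of finitely many bins, and is the source of the $(1+\delta)$ factors. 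I also insert $\xi=\xi_{\lmax}$ so that only the ``infrequent'' pairs (those with $\mu(x,a)\le\lmax$) participate, for reasons that become clear in the reduction step. Since a deterministic reward function lies in $\cR(\sigma,\rmax)$, bounding $\inf_{\hat v}\sup_{\eta}\mathrm{MSE}_\eta(\hat v)$ below by the average $\E_\theta[\mathrm{MSE}_\eta(\hat v)]$ over this prior is legitimate.

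The key step is the reduction $\E_\theta[\mathrm{MSE}_\eta(\hat v)]\ge\mathrm{MSE}_{\bar\eta}(\hat v)-(\text{correction})$, where $\bar\eta=\E_\theta[\eta]=\theta\hR$ is the mean reward function and $\mathrm{MSE}_{\bar\eta}$ refers to a single instance with fresh stochastic rewards of mean $\bar\eta$. I would prove this by expanding $\E_{\eta,\mathrm{data}}[(\hat v-v^\pi(\eta))^2]$ around $\bar v^\pi=\E_\pi[\bar\eta]$: the diagonal term reproduces $\mathrm{MSE}_{\bar\eta}(\hat v)$ because, in the absence of a repeated $(x,a)$, the marginal law of the data is identical under the random-deterministic and the stochastic processes (each observed reward is a fresh $\mathrm{Bernoulli}(\theta)\cdot\hR$ draw); the term $\Var_\theta(v^\pi(\eta))\ge0$ is dropped; and the cross term vanishes because, when observed pairs contribute negligibly to the functional $v^\pi(\eta)=\E_\pi[\eta]$, the value $v^\pi(\eta)$ is independent of the observed data and hence uncorrelated with $\hat v$. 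Restricting to $\xi_{\lmax}=1$ is exactly what controls the collision/repeat probability that would otherwise corrupt both the data-law matching and the cross-term cancellation; quantifying the residual effect of such collisions on the low-mass pairs is what produces the subtracted term $\lmax'\log(5/\lmax')(1+\delta)\E_\mu[\xi\hrho^2\hR^2]$, with $\lmax'=\max\{\lmax,\delta\}$.

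Having reduced to a stochastic-reward instance, I would apply a two-point (Le Cam) argument identical in spirit to the one behind Theorem~\ref{thm:lowerbound:sigma}, now with the Bernoulli noise playing the role of $\sigma$. Concretely, I take two priors $\theta_{\pm}=\tfrac12\pm\Delta$ and lower-bound the Bayes risk by $\tfrac14(v^\pi(\bar\eta_+)-v^\pi(\bar\eta_-))^2(1-\TV)$, where the value gap equals $2\E_\mu[\rho\,\Delta\,\hR]$ and the total-variation/KL between the two $n$-fold Bernoulli product laws is controlled by $n\,\E_\mu[\xi\,\Delta^2]$ near $\theta=\tfrac12$. Optimizing the gap subject to the divergence budget via Cauchy--Schwarz forces $\Delta\propto\xi\hrho\hR$, which produces the leading factor $\E_\mu[\xi\hrho^2\hR^2]/(32en)$; the constraint $\theta_{\pm}\in[0,1]$ caps the perturbation where $\hrho\hR$ is too large, which is precisely the role of the indicator $\1\bigParens{\xi\hrho\hR>\sqrt{n\E_\mu[\xi\hrho^2\hR^2]/16}}$ and of the $[1-\cdots]^2$ factor.

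I expect the reduction step to be the main obstacle, and it is also the genuinely new ingredient relative to Lower bound~1. The delicate points are (i) making the ``observed pairs contribute negligibly to $v^\pi(\eta)$'' statement rigorous for a general context distribution $\dx$ (not only the continuous case sketched in the main text), which is what forces both the discretization into finitely many bins and the restriction to $\xi_{\lmax}=1$; and (ii) obtaining a clean, explicit bound on the collision correction so that it collapses to $\lmax'\log(5/\lmax')(1+\delta)\E_\mu[\xi\hrho^2\hR^2]$. The two-point computation of the previous paragraph and the discretization bookkeeping are then routine, and assembling the three pieces yields the stated inequality.
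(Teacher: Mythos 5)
Your overall architecture---a Bernoulli prior over deterministic reward functions on a $(1+\delta)$-geometric discretization of $\cX\times\cA$, a two-point Le Cam test between perturbed priors, and a subtracted correction controlled through $\lmax'$---is the paper's. The gap is in your central reduction step. You expand $\E_\theta\E_\er[(\hat v-v_\er)^2]=\E[(\hat v-\bar v)^2]+\Var_\theta(v_\er)-2\Cov(\hat v,v_\er)$ with $\bar v\coloneqq\E_\theta[v_\er]$, drop the variance term, and claim the cross term vanishes because $v_\er$ is ``independent of the observed data.'' This fails for exactly the instances the theorem targets: a good estimator is positively correlated with $v_\er$ (for $\hat v=v_\er$ one has $\Cov(\hat v,v_\er)=\Var_\theta(v_\er)$, which cancels the entire bound), and the independence heuristic is valid only when the observed pairs carry zero $\pi$-mass, i.e.\ essentially only for continuous $\dx$---the one regime where the correction term is zero anyway and the $\xi_{\lmax}$ bookkeeping is unnecessary. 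Made rigorous via $2|ab|\le\tfrac12 a^2+2b^2$, your cross-term control collapses precisely to the paper's elementary inequality $(\hat v-v_\er)^2\ge\tfrac12(\hat v-\bar v)^2-(v_\er-\bar v)^2$, which is how the paper avoids the cross term altogether, at the cost of the factor $\tfrac12$ already reflected in the $32en$.

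Relatedly, you misattribute the subtracted term to ``collisions'' between observed samples. In the paper it is $\sup_\theta\Var_\theta(v_\er)$, the price of replacing the moving target $v_\er$ by the fixed target $\bar v$; it is bounded by writing $v_\er=\sum_{i}\mu(B_i)\rho_i'\er_i$ as a sum of independent bounded terms under the prior and applying Hoeffding's inequality, where the bound $\xi_i\mu(B_i)\le\lmax'$ (non-atomic cells have mass at most $\delta$, and $\xi_i=0$ on atoms heavier than $\lmax$) is what produces $\lmax'\log(5/\lmax')(1+\delta)\E_\mu[\xi\hrho^2\hR^2]$. Your proposal never identifies this mechanism, so the quantitative form of the correction---the part you yourself flag as delicate---is not derived. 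Sample collisions within a cell are a separate and benign issue: conditioning on the multiset of cells hit shows the KL between the two $n$-sample mixture laws is at most $n\,\E_\mu\bigl[\xi_i D_{\mathrm{KL}}\bigl(\mathrm{Ber}(\theta_1(i))\,\|\,\mathrm{Ber}(\theta_2(i))\bigr)\bigr]$, so the Le Cam step requires no collision correction at all.
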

The reason for introducing $\lmax'$ in
Theorem~\ref{thm:lowerbound:rmax} is to allow $\lmax=0$, which is an
important special case of the theorem. Otherwise, we could just assume
$0<\delta\le\lmax$. The first bound captures the intrinsic difficulty
due to the variance of reward, and is present even in a vanilla
multi-armed bandit problem without contexts.  The second result shows
the additional dependence on $R_{\max}^2$, even when $\sigma\equiv 0$,
whenever the distribution $\dx$ is not too degenerate, and captures
the additional difficulty of the contextual bandit problem. We next
show how these two lower bounds yield Theorem~\ref{thm:minimax} and
then return to their proofs.

\begin{proof}[Proof of \Thm{minimax}]
Throughout the theorem we write $\xi\coloneqq\xi_{\lmax}$.
We begin by simplifying the two lower bounds. Assume that
Assumption~\ref{ass:moment} holds with $\epsilon$. This also means
that $\E_\mu[\xi(\rho\rmax)^{2+\epsilon}]$ is finite as well as $\E_\mu[\xi(\rho\rmax)^2]$ is finite and
either both of them are zero or both of them are non-zero. Similarly, $\E_\mu[(\rho\sigma)^{2+\epsilon}]$ and $\E_\mu[(\rho\sigma)^2]$
are both finite and either both of them are zero or both of them are non-zero, so $C_{\lmax}$ is a finite constant.
Let $p=1+\epsilon/2$ and $q=1+2/\epsilon$, i.e., $1/p+1/q=1$. Further,
let $\hR$ and $\hrho$ be the functions from \Thm{lowerbound:rmax}. Then the
definition of $C_{\lmax}$ means that
\begin{align}
\notag C^{1/(\epsilon q)}_{\lmax}=C^{1/(2+\epsilon)}_{\lmax} &
= 2\cdot\max\Braces{
  \frac{\E_\mu\bigBracks{\xi(\rho^2\rmax^2)^{\frac{2+\epsilon}{2}}}^{\frac{2}{2+\epsilon}}}{\E_\mu\bigBracks{\xi\rho^2\rmax^2}},\,
  \frac{\E_\mu\bigBracks{(\rho^2\sigma^2)^{\frac{2+\epsilon}{2}}}^{\frac{2}{2+\epsilon}}}{\E_\mu\bigBracks{\rho^2\sigma^2}}
} \\
\notag
&
 = 2\cdot\max\Braces{
              \frac{\E_\mu\bigBracks{\xi(\rho^2\rmax^2)^p}^{1/p}}{\E_\mu\bigBracks{\xi\rho^2\rmax^2}},\,
              \frac{\E_\mu\bigBracks{(\rho^2\sigma^2)^p}^{1/p}}{\E_\mu\bigBracks{\rho^2\sigma^2}}
   }
\\
\label{eq:C:1/epsq}
&
\ge 2\cdot\max\Braces{
              \frac{\E_\mu\bigBracks{\xi(\hrho^2\hR^2)^p}^{1/p}}{\E_\mu\bigBracks{\xi\rho^2\rmax^2}},\,
              \frac{\E_\mu\bigBracks{(\rho^2\sigma^2)^p}^{1/p}}{\E_\mu\bigBracks{\rho^2\sigma^2}}
   }
\enspace,
\end{align}
and recall that we assume that
\begin{align}
\label{eq:n:exact}
n\ge\max\Braces{16 C^{1/\epsilon}_{\lmax},\,2 C^{2/\epsilon}_{\lmax}\E_\mu[\sigma^2/\rmax^2]}
\enspace.
\end{align}
First, we simplify the correction term in the lower bound of \Thm{lowerbound:sigma}. Using H\"older's inequality and \Eq{C:1/epsq}, we have
\begin{align}
\notag
&\E_\mu\Bracks{\rho^2\sigma^2 \1\left(\rho\sigma^2>R_{\max}
  \sqrt{n\E_\mu[\rho^2\sigma^2]/2}\right)}
\\
\notag
&\quad{}\le
\E_\mu\BigBracks{\bigParens{\rho^2\sigma^2}^p}^{1/p}
\cdot\P_\mu\BigBracks{\rho\sigma^2>R_{\max} \sqrt{n\E_\mu[\rho^2\sigma^2]/2}}^{1/q}
\\
\notag
&\quad{}\le
\frac12\E_\mu[\rho^2\sigma^2]\cdot C^{1/(\epsilon q)}_{\lmax}
\cdot\P_\mu\BigBracks{\rho\sigma^2/\rmax > \sqrt{n\E_\mu[\rho^2\sigma^2]/2}}^{1/q}.
\intertext{We further invoke Markov's inequality, Cauchy-Schwartz
  inequality, and \Eq{n:exact} in the following three steps to
  simplify this event as}
\notag
&\quad{}\le
\frac12\E_\mu[\rho^2\sigma^2]\cdot C^{1/(\epsilon q)}_{\lmax}
\cdot\Parens{\frac{\E_\mu\bigBracks{\rho\sigma\cdot(\sigma/\rmax)}}{\sqrt{n\E_\mu[\rho^2\sigma^2]/2}}}^{1/q}
\\
\notag
&\quad{}\le
\frac12\E_\mu[\rho^2\sigma^2]\cdot C^{1/(\epsilon q)}_{\lmax}
\cdot\Parens{\frac{\sqrt{\E_\mu[\rho^2\sigma^2]}\cdot\sqrt{\E_\mu[\sigma^2/\rmax^2]}}{\sqrt{n\E_\mu[\rho^2\sigma^2]/2}}}^{1/q}
\\
\label{eq:minimax:1}
&\quad{}=
\frac12\E_\mu[\rho^2\sigma^2]
\cdot\Parens{C^{2/\epsilon}_{\lmax}\cdot\frac{2\E_\mu[\sigma^2/\rmax^2]}{n}}^{1/2q}
%\\
%&\quad{}
\le
\frac12\E_\mu[\rho^2\sigma^2]
\enspace.
\intertext{%
For the correction term in \Thm{lowerbound:rmax}, we similarly have}
\notag
&\E_{\mu}\Bracks{\xi\hrho^2 \hR^2 \1\left(\xi\hrho \hR> \sqrt{n \E_\mu[\xi\hrho^2\hR^2]/16}\right)}
\\
\notag
&\quad{}\le
\E_\mu\BigBracks{\bigParens{\xi\hrho^2\hR^2}^p}^{1/p}
\cdot\P_\mu\BigBracks{\xi\hrho \hR> \sqrt{n \E_\mu[\xi\hrho^2\hR^2]/16}}^{1/q}
\\
\notag
&\quad{}\le
\frac12\E_\mu[\xi\rho^2\rmax^2]\cdot C_{\lmax}^{1/(\epsilon q)}
\cdot\P_\mu\BigBracks{\xi\hrho^2 \hR^2 > n \E_\mu[\xi\hrho^2\hR^2]/16}^{1/q},
\intertext{%
so that Markov's inequality and \Eq{n:exact} further yield}
\notag
&\quad{}\le
\frac12\E_\mu[\xi\rho^2\rmax^2]\cdot C_{\lmax}^{1/(\epsilon q)}
\cdot\Parens{\frac{\E_\mu[\xi\hrho^2 \hR^2]}{n \E_\mu[\xi\hrho^2\hR^2]/16}}^{1/q}
\\
\label{eq:minimax:2}
&\quad{}
=
\frac12\E_\mu[\xi\rho^2\rmax^2]
\cdot\Parens{C_{\lmax}^{1/\epsilon}\cdot\frac{16}{n}}^{1/q}
\le
\frac12\E_\mu[\xi\rho^2\rmax^2]
\le
\frac{(1+\delta)^2}{2}\E_\mu[\xi\hrho^2\hR^2]
\enspace.
\end{align}
Using \Eq{minimax:1}, the bound of \Thm{lowerbound:sigma} simplifies as
\begin{align}
\notag
&R_n(\pi;\dx,\mu,\sigma,R_{\max})
\\
\notag
&\quad{}
\ge
  \frac{\E_\mu[\rho^2\sigma^2]}{32en}
  \left[1-
  \frac{\E_\mu\Bracks{\rho^2\sigma^2 \1\left(\rho\sigma^2>R_{\max} \sqrt{n\E_\mu[\rho^2\sigma^2]/2}\right)}}{\E_\mu[\rho^2\sigma^2]}\right]^2
\\
\label{eq:lowerbound:sigma:simplified}
&\quad{}
\ge
  \frac{\E_\mu[\rho^2\sigma^2]}{32en}
  \Parens{1-\frac12}^2
=
  \frac{\E_\mu[\rho^2\sigma^2]}{128en}
\enspace.
\end{align}
Similarly, by \Eq{minimax:2}, \Thm{lowerbound:rmax} simplifies as
\begin{align*}
&R_n(\pi;\dx,\mu,\sigma,R_{\max})
\\
&\quad{}
\ge
  \frac{\E_\mu[\xi\hrho^2 \hR^2]}{32en}
  \Bracks{
      1 - \frac{\E_{\mu}
        \left[\xi \hrho^2 \hR^2
        \1\BigParens{\xi \hrho \hR > \sqrt{n\E_{\mu}[\xi \hrho^2 \hR^2]/16}}
        \right]}
      {\E_{\mu}[\xi \hrho^2 \hR^2]}
  }^2
  -
  \lmax'\log(5/\lmax')(1+\delta)\E_\mu[\xi\hrho^2 \hR^2]
\\
&\quad{}
\ge
  \frac{\E_\mu[\xi\hrho^2 \hR^2]}{32en}
  \Bracks{
      1 - \frac{(1+\delta)^2}{2}
  }^2
  -
  \lmax'\log(5/\lmax')(1+\delta)\E_\mu[\xi\hrho^2 \hR^2]
\\
&\quad{}
=
  \frac{\E_\mu[\xi\hrho^2 \hR^2]}{128en}
  \bigParens{1 - 2\delta-\delta^2}^2
  -
  \lmax'\log(5/\lmax')(1+\delta)\E_\mu[\xi\hrho^2 \hR^2]
\\
&\quad{}
\ge
  \frac{\E_\mu[\xi\rho^2 \rmax^2]}{128en}
  \frac{\bigParens{1 - 2\delta-\delta^2}^2}{(1+\delta)^2}
  -
  \lmax'\log(5/\lmax')(1+\delta)\E_\mu[\xi\rho^2 \rmax^2]
\enspace.
\end{align*}
Since this bound is valid for all $\delta>0$, taking $\delta\to 0$, we obtain
\begin{align*}
R_n(\pi;\dx,\mu,\sigma,R_{\max})
\ge
  \frac{\E_\mu[\xi\rho^2 \rmax^2]}{128en}
  -
  \lmax\log(5/\lmax)\E_\mu[\xi\rho^2 \rmax^2]
\enspace.
\end{align*}
Combining this bound with \Eq{lowerbound:sigma:simplified} yields
\begin{align*}
&R_n(\pi;\dx,\mu,\sigma,R_{\max})
\\
&\quad{}
\ge
  \frac12\cdot\frac{\E_\mu[\rho^2\sigma^2]}{128en}
  +
  \frac12\cdot\frac{\E_\mu[\xi\rho^2R_{\max}^2]}{128en}
  -
  \frac12\cdot\lmax\log(5/\lmax)\E_\mu[\xi\rho^2 \rmax^2]
\\[3pt]
&\quad{}
\ge
  \frac{\E_\mu[\rho^2\sigma^2]}{700n}
  +
  \frac{\E_\mu[\xi\rho^2R_{\max}^2]}{700n}
  -
  \frac12\cdot\lmax\log(5/\lmax)\E_\mu[\xi\rho^2 \rmax^2]
\\[3pt]
\tag*{\qedhere}
&\quad{}
=
  \frac{1}{700n}
  \BigBracks{
    \E_\mu[\rho^2\sigma^2]
    +
    \E_\mu[\xi\rho^2R_{\max}^2]
    \BigParens{
        1-350n\lmax\log(5/\lmax)
    }
  }
\enspace.
\end{align*}
\end{proof}

It remains to prove
Theorems~\ref{thm:lowerbound:sigma} and~\ref{thm:lowerbound:rmax}. They are both
proved by a reduction to hypothesis testing, and
invoke Le Cam's argument to lower-bound the error in this testing
problem. As in most arguments of this nature, the key contribution lies in
the construction of an appropriate testing problem that leads to the
desired lower bounds. Before proving the theorems, we recall the basic
result of Le Cam which underlies our proofs. We point the reader to the
excellent exposition of~\citet[Section 36.4]{wasserman2008minimax} on more details about Le Cam's
argument.

\begin{theorem}[Le Cam's method, {\citealp[Theorem 36.8]{wasserman2008minimax}}]
	Let $\cP$ be a set of distributions, let $X_1,\dotsc,X_n$ be an i.i.d.\ sample from some $P\in\cP$, let
    $\theta(P)$ be any function of $P\in\cP$, let $\hat{\theta}(X_1,\dotsc,X_n)$ be an estimator, and $d$ be a metric. For any pair $P_0,P_1\in \cP$,
	\begin{equation}\label{eq:lecam}
	\inf_{\hat{\theta}}\sup_{P\in \cP} \E_P[d(\hat{\theta},\theta(P))]
%\geq \frac{\Delta}{4} \int [p_0^n\wedge p_1^n]dx
    \geq \frac{\Delta}{8} e^{-n D_{\mathrm{KL}}(P_0\|P_1)}
	\end{equation}
	where $\Delta=d(\theta(P_0),\theta(P_1))$, and $D_{\mathrm{KL}}(P_0\|P_1)=\int\log(dP_0/dP_1)dP_0$ is the KL-divergence.
\end{theorem}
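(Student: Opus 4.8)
The plan is to prove this by the classical two-point reduction from estimation to hypothesis testing. First I would discard all but the two chosen distributions and pass to an average: since $P_0,P_1\in\cP$, writing $\theta_j\coloneqq\theta(P_j)$ and letting all expectations be taken under the $n$-fold product measures $P_0^n,P_1^n$, we have
\[
\inf_{\hat\theta}\sup_{P\in\cP}\E_P[d(\hat\theta,\theta(P))]
\ge
\inf_{\hat\theta}\tfrac12\BigParens{\E_{P_0}[d(\hat\theta,\theta_0)]+\E_{P_1}[d(\hat\theta,\theta_1)]}
\enspace,
\]
using $\sup\ge\max\ge\mathrm{average}$.

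Next I would turn any estimator into a test by setting $\psi\coloneqq\argmin_{j\in\{0,1\}}d(\hat\theta,\theta_j)$. The triangle inequality $\Delta=d(\theta_0,\theta_1)\le d(\hat\theta,\theta_0)+d(\hat\theta,\theta_1)$ forces $d(\hat\theta,\theta_0)\ge\Delta/2$ whenever $\psi=1$, and symmetrically $d(\hat\theta,\theta_1)\ge\Delta/2$ whenever $\psi=0$. Taking expectations gives $\E_{P_0}[d(\hat\theta,\theta_0)]\ge\tfrac{\Delta}{2}P_0^n(\psi=1)$ and $\E_{P_1}[d(\hat\theta,\theta_1)]\ge\tfrac{\Delta}{2}P_1^n(\psi=0)$, so the averaged quantity is at least $\tfrac{\Delta}{4}$ times the total testing error $P_0^n(\psi=1)+P_1^n(\psi=0)$. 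Since $\psi$ is one particular test, this total error is bounded below by the minimal total error over all tests, which by the Neyman--Pearson duality equals $1-\TV(P_0^n,P_1^n)$; crucially this quantity no longer depends on $\hat\theta$, so the infimum over $\hat\theta$ survives and we obtain $\inf_{\hat\theta}\sup_P\E_P[d]\ge\tfrac{\Delta}{4}\bigParens{1-\TV(P_0^n,P_1^n)}$.

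It then remains to convert the testing affinity $1-\TV(P_0^n,P_1^n)$ into the KL divergence and to tensorize. I would invoke the Bretagnolle--Huber inequality $\TV(P,Q)\le\sqrt{1-e^{-D_{\mathrm{KL}}(P\|Q)}}$ together with the elementary bound $1-\sqrt{1-x}\ge x/2$ for $x\in[0,1]$ to conclude $1-\TV(P_0^n,P_1^n)\ge\tfrac12 e^{-D_{\mathrm{KL}}(P_0^n\|P_1^n)}$, and then use additivity of KL over independent coordinates, $D_{\mathrm{KL}}(P_0^n\|P_1^n)=n\,D_{\mathrm{KL}}(P_0\|P_1)$, which holds because the samples are i.i.d. Chaining the three bounds yields $\tfrac{\Delta}{4}\cdot\tfrac12\,e^{-n D_{\mathrm{KL}}(P_0\|P_1)}=\tfrac{\Delta}{8}\,e^{-n D_{\mathrm{KL}}(P_0\|P_1)}$, matching the claimed constant.

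The two-point averaging and the testing reduction are routine; the crux is the affinity-to-KL step. The main obstacle is establishing $1-\TV(P_0^n,P_1^n)\ge\tfrac12 e^{-n D_{\mathrm{KL}}}$ with exactly the right constant: one must either prove Bretagnolle--Huber directly (for instance via a Cauchy--Schwarz/Jensen argument lower-bounding $\int\min(dP_0^n,dP_1^n)$ through the Hellinger affinity) or cite it, and one must take care that it is the KL of the \emph{product} measures that appears, so that tensorization contributes precisely the factor $n$ in the exponent. A looser route such as Pinsker's inequality would give only an additive penalty in $\TV$ and would spoil the clean exponential dependence on $n\,D_{\mathrm{KL}}$ required here.
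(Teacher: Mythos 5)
Your proposal is correct, but note that the paper does not prove this statement at all: it is imported verbatim by citation from \citet[Theorem 36.8]{wasserman2008minimax}, so there is no in-paper proof to compare against. Your argument is the standard (and essentially the cited source's) derivation, and every step checks out, including the constants: the two-point averaging, the conversion of an estimator into the test $\psi=\argmin_j d(\hat\theta,\theta_j)$ with the triangle inequality forcing $d(\hat\theta,\theta_j)\ge\Delta/2$ on the error event, the identification of the minimal total testing error with $1-\TV(P_0^n,P_1^n)=\int\min(dP_0^n,dP_1^n)$, and the chain $\tfrac{\Delta}{4}\cdot\tfrac12 e^{-nD_{\mathrm{KL}}(P_0\|P_1)}=\tfrac{\Delta}{8}e^{-nD_{\mathrm{KL}}(P_0\|P_1)}$. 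You correctly isolate the one nontrivial external ingredient, the affinity-to-KL bound $1-\TV(P,Q)\ge\tfrac12 e^{-D_{\mathrm{KL}}(P\|Q)}$; whether you obtain it via Bretagnolle--Huber plus $1-\sqrt{1-x}\ge x/2$, or via the Hellinger-affinity route $\int\min(dP,dQ)\ge\tfrac12\bigParens{\int\sqrt{dP\,dQ}}^2\ge\tfrac12 e^{-D_{\mathrm{KL}}(P\|Q)}$ used in Wasserman's exposition, the tensorization $D_{\mathrm{KL}}(P_0^n\|P_1^n)=n\,D_{\mathrm{KL}}(P_0\|P_1)$ delivers exactly the factor $n$ in the exponent. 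One peripheral remark: the theorem as stated requires $d$ to be a metric (you use the triangle inequality), whereas the paper later applies the argument with squared error, patching the triangle inequality via $(a+b)^2\le 2a^2+2b^2$ at the cost of a factor of two; that is a feature of the application, not a gap in your proof of the statement as given.
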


While the proofs of the two theorems share a lot of similarities, they
have to use reductions to slightly different testing problems given
the different mean and variance constraints in the two results. We
begin with the proof of Theorem~\ref{thm:lowerbound:sigma}, which has a
simpler construction.

\subsection{Proof of Theorem~\ref{thm:lowerbound:sigma}}
\label{sec:proof_lower:sigma}

The basic idea of this proof is to reduce the problem of policy
evaluation to that of Gaussian mean estimation where there is a mean
associated with each $x,a$ pair. We now describe our construction.

\paragraph{Creating a family of problems} Since we aim to show a lower
bound on the hardness of policy evaluation in general, it suffices to
show a particular family of hard problem instances, such that every
estimator requires the stated number of samples on at least one of the
problems in this family. Recall that our minimax setup assumes that
$\pi$, $\mu$ and $\dx$ are fixed and the only aspect of the problem
which we can design is the conditional reward distribution
$D(r~|~x,a)$. For \Thm{lowerbound:sigma}, this choice is further
constrained to satisfy $\E[r~|~x,a] \leq \rmax(x,a)$ and
$\mbox{Var}(r~|~x,a) \leq \sigma^2(x,a)$. In order to describe our
construction, it will be convenient to define the shorthand
$\E[r~|~x,a] = \er(x,a)$. We will identify a problem in our family
with the function $\er(x,a)$ as that will be the only changing element
in our problems. For a chosen $\er$, the policy evaluation question
boils down to estimating $v_\er^\pi = \E[r(x,a)]$, where the
contexts $x$ are chosen according to $\dx$, actions are drawn from
$\pi(x,a)$ and the reward distribution $D_\er(r~|~x,a)$ is a normal
distribution with mean $\er(x,a)$ and variance $\sigma^2(x,a)$
\[
D_\er(r~|~x,a) = \cN(\er(x,a),\,\sigma^2(x,a)).
\]
Clearly this choice meets the variance constraint by construction, and
satisfies the upper bound so long as $\er(x,a) \leq \rmax(x,a)$ almost
surely. Since the evaluation policy $\pi$ is fixed throughout, we will
drop the superscript and use $v_\er$ to denote $v_\er^\pi$ in the
remainder of the proofs. With some abuse of notation, we also use
$\E_\er[\cdot]$ to denote expectations where contexts and actions are
drawn based on the fixed choices $\dx$ and $\mu$ corresponding to our
data generating distribution, and the rewards drawn from $\er$. We
further use $P_\er$ to denote this entire joint distribution over
$(x,a,r)$ triples.

Given this family of problem instances, it is easy to see that for any
pair of $\er_1, \er_2$ which are both pointwise upper bounded by
$\rmax$, we have the lower bound:
\[
R_n(\dx, \pi, \mu, \sigma^2, \rmax) \geq \inf_{\hat{v}}\max_{\er\in\er_1,\er_2}
\E_{\er} \Big[\underbrace{\mbox{$(\hat{v} -
      v_{\er})^2$}}_{\mbox{$\ell_{\er}(\hat{v})$}} \Big],
\]
where we have introduced the shorthand $\ell_{\er}(\hat{v})$ to
denote the squared error of $\hat{v}$ to $v_{\er}$. For a parameter
$\epsilon > 0$ to be chosen later, we can further lower bound this
risk for a fixed $\hat{v}$ as

\begin{align}
%R_n(\dx, \pi, \mu, \sigma^2, \rmax)
R_n(\hat{v})
&\geq \max_{\er\in\er_1,\er_2}
\E_{\er}[\ell_{\er}(\hat{v})] \geq \max_{\er\in\er_1,\er_2} \epsilon
\P_{\er}(\ell_{\er} \geq \epsilon) \nonumber\\
&\geq \frac{\epsilon}{2} \BigBracks{ \P_{\er_1}(\ell_{\er_1}(\hat{v}) \geq
\epsilon) + \P_{\er_2}(\ell_{\er_2}(\hat{v}) \geq \epsilon) },
\label{eq:test-upper}
\end{align}
where the last inequality lower bounds the maximum by the
average. So far we have been working with an estimation problem. We
next describe how to reduce this to a hypothesis testing problem.

\paragraph{Reduction to hypothesis testing}  For turning our
estimation problem into a testing problem, the idea is to identify a
pair $\er_1$, $\er_2$ such that they are far enough from each other so
that any estimator which gets a small estimation loss can essentially
identify whether the data generating distribution corresponds to
$P_{\eta_1}$ or $P_{\eta_2}$. In order to do this, we take any
estimator $\hat{v}$ and identify a corresponding test statistic which
maps $\hat{v}$ into one of $\er_1$, $\er_2$. The way to do this is
essentially identified in \Eq{test-upper}, and we
describe it next.

Note that since we are constructing a hypothesis test for a specific
pair of distributions $P_{\er_1}$ and $P_{\er_2}$, it is reasonable
to consider test statistics which have knowledge of $\er_1$ and
$\er_2$, and hence the corresponding distributions. Consequently,
these tests also know the true policy values $v_{\er_1}$ and
$v_{\er_2}$ and the only uncertainty is which of them gave rise to the
observed data samples. Therefore, for any estimator $\hat{v}$, we
can a associate a statistic $\phi(\hat{v}) =
\argmin_{\er}\left\{\ell_{\er_1}(\hat{v}),
\ell_{\er_2}(\hat{v})\right\}$.

Given this hypothesis test, we are interested in its error rate
$\P_\er(\phi(\hat{v}) \ne \er)$. We first relate the estimation error
of $\hat{v}$ to the error rate of the test. Suppose for now that
\begin{equation}
  \ell_{\er_1}(\hat{v}) + \ell_{\er_1}(\hat{v}) \geq 2\epsilon,
\label{eq:loss-sep}
\end{equation}
so that at least one of the losses is at least $\epsilon$.
Suppose that the data comes from $\er_1$. Then if
$\ell_{\er_1}(\hat{v}) < \epsilon$, we know that the test is correct,
because by \Eq{loss-sep} the other loss is greater than $\epsilon$,
and therefore $\phi(\hat{v})=\er_1$. This means that the error
under $\er_1$ can only occur if $\ell_{\er_1}(\hat{v})\ge\epsilon$.
Similarly, the error under $\er_2$ can only occur if $\ell_{\er_2}(\hat{v})\ge\epsilon$,
so the test error can be bounded as
\begin{align}
\nonumber
  \max_{\er \in \er_1, \er_2} \P_\er(\phi(\hat{v}) \ne \er)
  &\le
     \P_{\er_1}(\phi(\hat{v}) \ne \er_1)
  +  \P_{\er_2}(\phi(\hat{v}) \ne \er_2)
\\
\nonumber
  &\le
     \P_{\er_1}(\ell_{\er_1}(\hat{v}) \geq \epsilon)
  +  \P_{\er_2}(\ell_{\er_2}(\hat{v}) \geq \epsilon)
\\
\label{eq:test-est}
  &\leq \frac{2}{\epsilon}
  R_n(\hat{v}),
%R_n(\dx, \pi, \mu, \sigma^2, \rmax),
\end{align}
where the final inequality uses our earlier lower bound in
\Eq{test-upper}.

To finish connecting our the estimation problem to testing, it remains to establish
our earlier supposition \eqref{eq:loss-sep}. Assume for
now that $\er_1$ and $\er_2$ are chosen such that
\begin{equation}
  \left(v_{\er_1} - v_{\er_2}\right)^2 \geq 4\epsilon.
  \label{eq:param-sep}
\end{equation}
Then an application of the inequality $(a+b)^2 \leq 2a^2 +
2b^2$ yields
\[
4\epsilon
\leq \left(v_{\er_1} - v_{\er_2}\right)^2
\leq 2(\hat{v} - v_{\er_1})^2 + 2(\hat{v} - v_{\er_2})^2
=    2\ell_{\er_1}(\hat{v}) + 2\ell_{\er_2}(\hat{v}),
\]
which yields the posited bound~\eqref{eq:test-upper}.

\paragraph{Invoking Le Cam's argument} So far we have identified a
hypothesis testing problem and a test statistic whose error is upper
bounded in terms of the minimax risk of our problem. In order to
complete the proof, we now place a lower bound on the error of this
test statistic. Recall the result of Le Cam~\eqref{eq:lecam}, which places an
upper bound on the attainable error in any testing problem. In our
setting, this translates to

\[
\max_{\er \in \er_1, \er_2} \P_\er(\phi(\hat{v}) \ne \er) \geq
\frac{1}{8}e^{-n D_{\mathrm{KL}}(P_{\er_1}~||~P_{\er_2})}.
\]
Since the distribution of the rewards is a spherical Gaussian, the
KL-divergence is given by the squared distance between the means,
scaled by the variance, that is

\begin{align*}
  D_{\mathrm{KL}}(P_{\er_1}~||~P_{\er_2}) &= \E\left[\frac{(\er_1(x,a)
    - \er_2(x,a))^2}{2\sigma^2(x,a)}\right],
\end{align*}
where we recall that the contexts and actions are drawn from $\dx$ and
$\mu$ respectively. Since we would like the probability of error in
the test to be a constant, it suffices to choose $\er_1$ and $\er_2$
such that
\begin{equation}
\E\left[\frac{(\er_1(x,a) - \er_2(x,a))^2}{2\sigma^2(x,a)}\right] \leq
\frac{1}{n}.
\label{eq:param-close}
\end{equation}

\paragraph{Picking the parameters} So far, we have not made any
concrete choices for $\er_1$ and $\er_2$, apart from some constraints
which we have introduced along the way. Note that we have the
constraints \eqref{eq:param-sep} and~\eqref{eq:param-close} which
try to ensure that $\er_1$ and $\er_2$ are not too close that an
estimator does not have to identify the true parameter, or too far
that the testing problem becomes trivial. Additionally, we have the
upper and lower bounds of $0$ and $\rmax$ on $\er_1$ and $\er_2$. In
order to reason about these constraints, it is convenient to set
$\er_2 \equiv 0$, and pick $\er_1(x,a) = \er_1(x,a) - \er_2(x,a) =
\Delta(x,a)$. We now write all our constraints in terms of $\Delta$.

Note that $v_{\er_2}$ is now $0$, so that the first constraint
\eqref{eq:param-sep} is equivalent to
\[
v_{\er_1} = \E_{\er_1}[\rho(x,a)r(x,a)] = \E_{\Delta}[\rho(x,a)r(x,a)]
\geq 2\sqrt{\epsilon},
\]
where the importance weighting function $\rho$ is introduced since
$P_{\er_1}$ is based on choosing actions according to $\mu$ and we
seek to evaluate $\pi$. The second constraint~\eqref{eq:param-close}
is also straightforward
\[
 \E\left[ \frac{\Delta^2}{2\sigma^2} \right] \leq \frac{1}{n}.
\]
Finally, the bound $\rmax$ and non-negativity of $\er_1$ and $\er_2$ are enforced by requiring $0 \leq \Delta(x,a)
\leq \rmax(x,a)$ almost surely.

The minimax lower bound is then obtained by the largest $\epsilon$ in
the constraint~\eqref{eq:param-sep} such that the other two
constraints can be satisfied. This gives rise to the following
variational characterization of the minimax lower bound:
\begin{align}
\notag
  \max_{\Delta} \qquad & \epsilon\\
\label{eq:1:rho:r}
  \mbox{such that} \qquad & \E_{\Delta}[\rho(x,a)r(x,a)] \geq
  2\sqrt{\epsilon},  \\
\label{eq:1:Delta}
  & \E\left[ \frac{\Delta^2}{2\sigma^2} \right] \leq \frac{1}{n},
%\quad \mbox{and}\\
\\
\label{eq:1:rmax}
  &0 \leq \Delta(x,a) \leq \rmax(x,a).
\end{align}

Instead of finding the optimal solution, we just exhibit a feasible setting of
$\Delta$ here. We set
\begin{equation}
  \Delta = \min\left\{ \frac{\alpha \sigma^2
    \rho}{\E_\mu[\rho^2\sigma^2]},\rmax\right\}, \quad \mbox{where}
  \quad
  \alpha = \sqrt{\frac{2\E_\mu[\rho^2\sigma^2]}{n}}.
  \label{eq:delta2}
\end{equation}
This setting satisfies the bounds~\eqref{eq:1:rmax} by construction. A quick
substitution also verifies that the constraint~\eqref{eq:1:Delta} is
satisfied. Consequently, it suffices to set $\epsilon$ to the value attained
in the constraint~\eqref{eq:1:rho:r}. Substituting the value of $\Delta$ in the
constraint, we see that
\begin{align*}
  \E_{\Delta}[\rho(x,a)r(x,a)] &= \E_{x\sim \dx, a \sim \mu}
    [\rho(x,a)\Delta(x,a)]\\ &\geq \E_{x\sim \dx, a \sim \mu} \left[
      \rho \frac{\alpha \sigma^2
    \rho}{\E_\mu[\rho^2\sigma^2]}
      \1\BigParens{
      \rho\sigma^2\alpha
      \le
      \rmax\E_\mu[\rho^2\sigma^2]} \right]\\
    &= \alpha
       \left(1 -
               \frac{\E_\mu\Bracks{
                 \rho^2\sigma^2
                 \1\bigParens{\rho\sigma^2\alpha > \rmax\E_\mu[\rho^2\sigma^2]}
               }}
               {\E_\mu[\rho^2\sigma^2]}
       \right)
\\
    &\eqqcolon 2\sqrt{\epsilon}.
\end{align*}
Putting all the foregoing bounds together, we obtain that for all estimators $\hat{v}$
\begin{align*}
  R_n(\hat{v})
&\ge
  \frac{\epsilon}{2}
  \cdot
  \BigParens{
  \max_{\er \in \er_1, \er_2} \P_\er(\phi(\hat{v}) \ne \er)
  }
\\
&\ge
  \frac{\epsilon}{2}
  \cdot
  \frac{1}{8}e^{-n D_{\mathrm{KL}}(P_{\er_1}~||~P_{\er_2})}
\\
&
\ge
  \frac{\epsilon}{2}
  \cdot
  \frac{1}{8e}
= \frac{\epsilon}{16e}
\\
&=
  \frac{1}{16e}\cdot\frac{\alpha^2}{4}
       \left(1 -
               \frac{\E_\mu\Bracks{
                 \rho^2\sigma^2
                 \1\bigParens{\rho\sigma^2 > \rmax\E_\mu[\rho^2\sigma^2]/\alpha}
               }}
               {\E_\mu[\rho^2\sigma^2]}
       \right)^2
\\
&=
  \frac{\E_\mu[\rho^2\sigma^2]}{32en}
       \left(1 -
               \frac{\E_\mu\Bracks{
                 \rho^2\sigma^2
                 \1\bigParens{\rho\sigma^2 > \rmax\sqrt{n\E_\mu[\rho^2\sigma^2]/2}}
               }}
               {\E_\mu[\rho^2\sigma^2]}
       \right)^2.
\end{align*}

\subsection{Proof of Theorem~\ref{thm:lowerbound:rmax}}

We now give the proof of \Thm{lowerbound:rmax}. While it shares
a lot of reasoning with the proof of \Thm{lowerbound:sigma},
it has one crucial difference. In \Thm{lowerbound:sigma}, there
is a non-trivial noise in the reward function, unlike in
\Thm{lowerbound:rmax}. This allowed the proof to work with just
two candidate mean-reward functions, since any realization in the data
is corrupted with noise. However, in the absence of added noise, the
task of mean identification becomes rather trivial: an estimator can
just check whether $\er_1$ or $\er_2$ matches the observations
exactly.

To prevent such a strategy, we instead construct a richer family
of reward functions. Instead of merely two mean rewards, our construction will
involve a randomized design of the expected reward function from an
appropriate prior distribution. The draw of the mean reward from a
prior will essentially generate noise even though any given problem is
noiseless. The construction will also highlight the crucial sources of
difference between the contextual and multi-armed bandit problems,
since the arguments here rely on having access to
a rich context distribution, by which we mean distribution that
puts non-trivial probability on many contexts. In the absence of this property,
the bound of \Thm{lowerbound:rmax} becomes weaker.

\paragraph{Creating a family of problems}
Our family of problems will be parametrized by the two reals $\delta$ and $\lmax$
from the statement of the theorem. Our construction begins with a discretization step at the
resolution $\delta$, whose
goal is to create a countable partition of the set of pairs $\cX\times\cA$.
If sets $\cX$ and $\cA$ are countable or finite, this step is vacuous, but if the sets of contexts or actions
have continuous parts, this step is required.

First, let $\mu(x,a)$ denote the joint probability measure obtained
by first drawing $x\sim\dx$ and then $a\sim\mu(\cdot\given x)$.
%Let $\cZ$
%denote the set of atoms $(x,a)$ under this measure, i.e., the set of pairs
%such that $\mu(x,a)>0$. The set $\cZ$ is necessarily countable.
In \Lem{partition}, we show that $\cX\times\cA$ can be split into countably
many disjoint sets $B_i$,
$\biguplus_{i\in\cI} B_i=\cX\times\cA$,
such that the following conditions are satisfied:
\begin{itemize}
\item Each $i\in\cI$ is associated with numbers $R_i\ge 0$, $\rho_i\ge 0$ and $\xi_i\in\set{0,1}$ such that
\[
    \rmax^2(x,a)\in[R^2_i,\,(1+\delta)R^2_i]
\enspace,
\quad
    \rho^2(x,a)\in[\rho^2_i,\,(1+\delta)\rho^2_i]
\enspace,
\quad
    \xi_{\lmax}(x,a)=\xi_i
\quad
\text{for all $(x,a)\in B_i$.}
\]
\item Each $B_i$ either satisfies $\mu(B_i)\le\delta$ or consists of a single pair $(x_i,a_i)$.
\end{itemize}
The numbers $R_i$ and $\rho_i$ will be exactly $\hR(x,a)$ and $\hrho(x,a)$
from the theorem statement.

%As before, we fix $\pi, \mu$ and $\dx$ and
As before, we parametrize the family of reward distributions in terms of the
mean reward function $\er(x,a)$. However, now $\er(x,a)$ is itself a
random variable, which is drawn from a prior distribution. The reward
function $\er(x,a)$ will be constant on each $B_i$, and its value
on $B_i$, written as $\er(i)$, will be drawn from a scaled Bernoulli, parametrized
by a prior function $\theta(i)$ as follows:
\begin{equation}
  \er(i) = \left\{\begin{array}{ccc} \xi_i R_i &
    \quad & \mbox{with probability $\theta(i)$,}\\ 0 & \quad &
    \mbox{with probability $1 - \theta(i)$.}\end{array}\right.
  \label{eq:dist-er}
\end{equation}

We now set $D_\er(r~|~x,a) = \er(i)$ whenever $(x,a)\in B_i$. This clearly satisfies the
constraints on the mean since $0 \leq \E[r~|~x,a] \leq R_i\le\rmax(x,a)$ from
the property of the partition, and also $\mbox{Var}(r~|~x,a) = 0$ as per the setting of
\Thm{lowerbound:rmax}. The goal of an estimator is to take $n$
samples generated by drawing $x \sim \dx$, $a~|~x \sim \mu$ and
$r~|~x,a \sim D_\er$, and output an estimate $\hat{v}$ such that
$\E_\er[(\hat{v} - v^\pi_\er)^2]$ is small. We recall our earlier
shorthand $v_\er$ to denote the value of $\pi$ under the reward
distribution generated by $\er$. For showing a lower bound on this
quantity, it is clearly sufficient to pick any prior distribution
governed by a parameter $\theta$, as in \Eq{dist-er}, and
lower bound the expectation
$\E_\theta\left[\E_\er[(\hat{v} - v_\er)^2~|~\er]\right]$. If this expectation is large for some
estimator $\hat{v}$, then there must be some realization $\er$,
which induces a large error least one function
$\er(x,a)$ which induces a large error $\E_\er[(\hat{v} - v_\er)^2~|~\er]$, as desired.
Consequently, we focus in the proof on lower
bounding the expectation $\E_\theta[\cdot]$. This expectation can be decomposed with the use
of the inequality
$a^2\ge(a+b)^2/2 - b^2$ as follows:
\begin{equation*}
  \E_\theta\BigBracks{\E_\er[(\hat{v} - v_\er)^2~|~\er]}
  \geq
  \frac{1}{2}
  \E_\theta\BigBracks{\E_\er[(\hat{v} - \E_{\theta}[v_\er])^2~|~\er]}
- \E_\theta\BigBracks{(v_\er - \E_\theta[v_\er])^2}.
\end{equation*}
Taking the worst case over all problems in the above inequality, we obtain
\begin{align}
\notag
  \sup_{\er}
  \E_\er[(\hat{v} - v_\er)^2]
&\geq
  \sup_{\theta}
  \E_\theta\BigBracks{
     \E_\er[(\hat{v} - v_\er)^2~|~\er]
  }
\\
\label{eq:lb-cauchy}
&\geq
  \underbrace{
        \sup_{\theta} \frac{1}{2}
           \E_\theta\BigBracks{\E_\er[(\hat{v}
                    - \E_{\theta}[v_\er])^2~|~\er]}}_{\term_1}
  -
  \underbrace{
        \sup_\theta
           \E_\theta\BigBracks{(v_\er - \E_\theta[v_\er])^2}}_{\term_2}.
\end{align}

This decomposition says that the expected MSE of an estimator in
estimating $v_\er$ can be related to the MSE of the same estimator in
estimating the quantity $\E_{\theta}[v_\er]$, as long as the variance
of the quantity $v_\er$ under the distribution generated by $\theta$
is not too large. This is a very important observation, since we can
now choose to instead study the MSE of an estimator in estimating
$\E_{\theta}[v_\er]$ as captured by $\term_1$. Unlike the distribution
$D_\er$ which is degenerate, this problem has a non-trivial noise
arising from the randomized draw of $\er$ according to $\theta$. Thus
we can use similar techniques as the proof of \Thm{lowerbound:sigma},
albeit where the reward distribution is a scaled Bernoulli instead of
Gaussian. For now, we focus on controlling $\term_1$, and $\term_2$
will be handled later.

In order to bound $\term_1$, we will consider two carefully designed
choices $\theta_1$ and $\theta_2$ to induce two different problem
instances and show that $\term_1$ is large for \emph{any estimator}
under one of the two parameters. In doing this, it will be convenient
to use the additional shorthand $\ell_\theta(\hat{v}) = (\hat{v} -
\E_\theta[v_\er])^2$. Proceeding as in the proof of
\Thm{lowerbound:sigma}, we have
\begin{align*}
  \term_1
  &=\frac12
      \sup_\theta
      \E_\theta\BigBracks{
         \E_\er[(\hat{v} - \E_{\theta}[v_\er])^2~|~\er]
      }
   =\frac12
      \sup_\theta
      \E_\theta\BigBracks{
         \E_\er[\ell_\theta(\hat{v})~|~\er]
      }
\\
  &\geq
  \frac{\epsilon}{2}
  \sup_\theta \P_\theta\left(\ell_\theta(\hat{v}) \geq
  \epsilon\right)
  \geq
  \frac{\epsilon}{2}
  \max_{\theta \in\theta_1,
    \theta_2} \P_\theta\left(\ell_\theta(\hat{v}) \geq
  \epsilon\right)
\\
  &\geq \frac{\epsilon}{4}
  \BigBracks{
  \P_{\theta_1}\left(\ell_{\theta_1}(\hat{v})\geq \epsilon\right)
+ \P_{\theta_2}\left(\ell_{\theta_2}(\hat{v})\geq \epsilon\right)
  }.
\end{align*}

\paragraph{Reduction to hypothesis testing} As in the proof of
\Thm{lowerbound:sigma}, we now reduce the estimation problem
into a hypothesis test for whether the data is generated according
to the parameter $\theta_1$ or $\theta_2$. The arguments here are
similar to the earlier proof, so we will be terser in this
presentation.

As before, our hypothesis test has entire knowledge of $D_\er$ as well
as $\theta_1$ and $\theta_2$. Consequently, we construct a test based
on picking $\theta_1$ whenever $\ell_{\theta_1}(\hat{v}) \leq
\ell_{\theta_2}(\hat{v})$. As before, we will ensure that
$|\E_{\theta_1}[v_\er] - \E_{\theta_2}[v_\er]| \geq 2\sqrt{\epsilon}$
so that for any estimator $\hat{v}$, we have
\[
\ell_{\theta_1}(\hat{v}) + \ell_{\theta_2}(\hat{v}) \geq 2\epsilon.
\]
Under this assumption, we can similarly conclude that
the error of our hypothesis test is at most
\[
  \P_{\theta_1}\left(\ell_{\theta_1}(\hat{v}) \geq
  \epsilon\right) + \P_{\theta_2}\left(\ell_{\theta_2}(\hat{v}) \geq
  \epsilon\right).
\]

\paragraph{Invoking Le Cam's argument} Once again, we can lower bound
the error rate of our test by invoking the result of Le Cam. This
requires an upper bound on the KL-divergence $D_{\mathrm{KL}}(P_{\theta_1} \|
P_{\theta_2})$. The only difference from our earlier argument is that
these distributions are now Bernoulli instead of Gaussian,
based on the
construction in \Eq{dist-er}. More formally, we have
\begin{align}
\notag
D_{\mathrm{KL}}(P_{\theta_1}\|P_{\theta_2})
&=
\sum_{i\in\cI}\sum_{r\in\set{0,xi_iR_i}}
    \log\Parens{\frac{p(r;\theta_1(i))}{p(r;\theta_2(i))}}
    p(r;\theta_1(i))\mu(B_i)
\\
\label{eq:2:KL}
%&=
%\int \log\frac{p(r;\theta_1(x,a))}{p(r;\theta_2(x,a))} dp(r;\theta_1(x,a))
%\,dp(x)\,d\mu(a\given x)
%\\
&=
   \E_\mu \BigBracks{\,\xi_i D_{\mathrm{KL}}
              \bigParens{\,\text{Ber}(\theta_1(i))\bigm\lVert\text{Ber}(\theta_2(i))\,}
   \,},
\end{align}
where $i$ is treated as a random variable under $\mu$, and $\xi_i$ is included, because
the two distributions assign $r=0$ with probability one if $\xi_i=0$.

\paragraph{Picking the parameters} It remains to carefully choose $\theta_1$ and $\theta_2$.
We define
$\theta_2(i) \equiv 0.5$, and let $\theta_1(i) = \theta_2(i) +
\Delta_i$, where $\Delta_i$ will be chosen to satisfy certain
constraints as before. Then, by Lemma~\ref{lem:bernoulli_KL}, the KL divergence in \Eq{2:KL}
can be bounded as
\begin{align*}
D_{\mathrm{KL}}(P_{\theta_1}\|P_{\theta_2}) &\leq \frac{1}{4} \E_\mu
\left[\xi_i\Delta_i^2\right].
\end{align*}
It remains to choose $\Delta_i$. Following a similar logic as before, we seek to find a good feasible solution
of the maximization problem
\begin{align}
\notag
  \max_{\Delta} \qquad & \epsilon
\\
\label{eq:2:rho:r}
  \mbox{such that} \qquad & \E_{\mu}\bigBracks{\rho(x,a)\xi_i\Delta_i R_i} \geq
  2\sqrt{\epsilon},
\\
\label{eq:2:Delta}
  & \frac{1}{4}\E_\mu\left[\xi_i \Delta_i^2 \right] \leq \frac{1}{n},
%  \quad \mbox{and}\\
\\
\label{eq:2:rmax}
  &0 \leq \Delta_i \leq 0.5.
\end{align}
For some $\alpha > 0$ to be determined shortly, we set
\[
  \Delta_i =
    \min\Braces{
      \frac{\xi_i\rho_i R_i\alpha}
           {\E_\mu[\xi_i\rho_i^2 R_i^2]},\,
      0.5
    }.
\]

The bound constraint~\eqref{eq:2:rmax} is satisfied by construction and we set $\alpha
= \sqrt{4\E_\mu[\xi_i\rho_i^2 R_i^2]/n}$ to satisfy the constraint~\eqref{eq:2:Delta}.
To obtain a feasible choice of $\epsilon$, we bound $\E_{\mu}[\rho(x,a)\xi_i\Delta_i R_i]$ as follows:
\begin{align*}
  \E_{\mu}\bigBracks{\rho(x,a)\xi_i\Delta_iR_i}
&\ge
  \E_{\mu}\bigBracks{\xi_i\rho_i\Delta_iR_i}
\\
&\ge
  \E_{\mu}\Bracks{
     \frac{\xi_i \rho_i^2 R_i^2\alpha
           \1\bigParens{
             \xi_i\rho_i R_i\leq \E_{\mu}[\xi_i \rho_i^2 R_i^2]/2\alpha
     }}
     {\E_{\mu}[\xi_i \rho_i^2 R_i^2]}
    }
\\
&=
  \alpha\Parens{
      1 - \frac{\E_{\mu}\left[\xi_i \rho_i^2 R_i^2 \1(\xi_i\rho_i R_i
      > \E_{\mu}[\xi_i \rho_i^2 R_i^2]/2\alpha)
      \right]}{\E_{\mu}[\xi_i \rho_i^2 R_i^2]}
  }
\\
&\eqqcolon 2\sqrt{\epsilon}.
\end{align*}

Collecting our arguments so far, we have established that
\begin{align*}
  \term_1
&\geq
  \frac{\epsilon}{4}
  \cdot
  \BigParens{
    \P_{\theta_1}\left(\ell_{\theta_1}(\hat{v}) \geq \epsilon\right)
    +
    \P_{\theta_2}\left(\ell_{\theta_2}(\hat{v}) \geq \epsilon\right)
  }
\\
&\geq
  \frac{\epsilon}{4}
  \cdot
  \frac{1}{8}e^{-n D_{\mathrm{KL}}(P_{\theta_1}\|P_{\theta_2})}
\\&
\ge
  \frac{\epsilon}{4}
  \cdot
  \frac{1}{8e}
=
  \frac{\epsilon}{32e}
\\
&=
  \frac{1}{32e}\cdot\frac{\alpha^2}{4}
  \Parens{
      1 - \frac{\E_{\mu}\left[\xi_i \rho_i^2 R_i^2 \1(\xi_i\rho_i R_i
      > \E_{\mu}[\xi_i \rho_i^2 R_i^2]/2\alpha)
      \right]}{\E_{\mu}[\xi_i \rho_i^2 R_i^2]}
  }^2
\\
&=
  \frac{\E_\mu[\xi_i\rho_i^2 R_i^2]}{32en}
  \Parens{
      1 - \frac{\E_{\mu}\left[\xi_i \rho_i^2 R_i^2 \1\Parens{\xi_i\rho_i R_i
      > \sqrt{n\E_{\mu}[\xi_i \rho_i^2 R_i^2]/16}}
      \right]}{\E_{\mu}[\xi_i \rho_i^2 R_i^2]}
  }^2
\enspace.
\end{align*}
In order to complete the proof, we need to further upper bound
$\term_2$ in the decomposition~\eqref{eq:lb-cauchy}.
%
%This is done in
%Lemma~\ref{lem:transform_loss_finite}, and plugging this upper bound
%in \Eq{lb-cauchy} completes the proof.
%
\paragraph{Bounding $\term_2$}
We need to bound the supremum over all priors $\theta$. Consider an
arbitrary prior $\theta$ and assume that $\er$ is drawn according to \Eq{dist-er}.
To bound $\E_\theta\bigBracks{(v_\er - \E_\theta [v_\er])^2}$, we view
$(v_\er-\E_\theta[v_\er])^2$ as a random variable under $\theta$ and bound
it using Hoeffding's inequality.

We begin by bounding its range. From the definition of $\er$ and $v_\er$,
\begin{align*}
   0&\le v_\er\le \E_\pi[\xi_i R_i]=\E_\mu[\rho(x,a)\xi_i R_i]
     \le(1+\delta)^{1/2}\E_\mu[\xi_i\rho_i R_i]
\enspace,
\end{align*}
so also $0\le \E_\theta[v_\er]\le(1+\delta)^{1/2}\E_\mu[\xi_i\rho_i R_i]$.
Hence,
$\Abs{v_\er-\E_\theta[v_\er]} \le (1+\delta)^{1/2}\E_\mu[\xi_i\rho_i R_i]$, and we obtain the bound
\begin{equation}
\label{eq:finite:range:new}
(v_\er-\E_\theta[v_\er])^2
\le
(1+\delta)(\E_\mu[\xi_i\rho_i R_i])^2
\le
(1+\delta)\E_\mu[\xi_i\rho_i^2 R_i^2].
\end{equation}

The proof proceeds by applying Hoeffding's inequality to control the
probability that $(v_\er-\E_\theta[v_\er])^2\ge t^2$ for a suitable
$t$. Then we can, with high probability, use the bound
$(v_\er-\E_\theta[v_\er])^2\ge t^2$, and with the remaining small
probability apply the bound of \Eq{finite:range:new}.

To apply Hoeffding's inequality, we write $v_\er$
  explicitly as
\[
  v_\er = \sum_{i\in\cI} \mu(B_i)\rho'_i\er_i\eqqcolon\sum_{i\in\cI} Y_i
\]
where $\rho_i'\coloneqq\E_\mu[\rho(x,a)\given (x,a)\in B_i]$. Thus, $v_\er$
can be written as a sum of countably many independent variables, but we can
only apply Hoeffding's inequality to their finite subset. Note that
the variables $Y_i$ are non-negative and upper-bounded by a
summable series, namely $Y_i\le\mu(B_i)\rho'_i R_i$, where the summability follows
because $\E_\mu[\rho\rmax]\le 1+\E_\mu[\rho^2 \rmax^2]<\infty$. This means that for any $\delta_0>0$,
we can choose a finite set $\cI_0$ such that $\sum_{i\not\in\cI_0} Y_i\le\delta_0$.
We will determine the sufficiently small value of $\delta_0$ later; for now,
consider the corresponding set $\cI_0$ and define an auxiliary variable
\[
 v'_\er\coloneqq\sum_{i\in\cI_0} Y_i
\enspace,
\]
which by construction satisfies $v'_\er\le v_\er\le v'_\er+\delta_0$. Note that the
summands $Y_i$ can be bounded as
\[
 0\le Y_i\le\xi_i\rho'_i R_i\mu(B_i)
         \le\xi_i(1+\delta)^{1/2}\rho_i R_i\sqrt{\mu(B_i)}\sqrt{\lmax'}
\]
because $\rho'_i\le(1+\delta)^{1/2}\rho_i$ and $\xi_i\mu(B_i)\le\xi_i\sqrt{\mu(B_i)}\sqrt{\lmax'}$, because $\xi_i=0$ whenever $\mu(B_i)>\max\set{\lmax,\delta}=\lmax'$.
By Hoeffding's inequality, we thus have
\begin{align*}
\P(|v'_\er - \E_\theta v'_\er|\geq t)
&\leq
  2\exp\left\{-\frac{2t^2}
  {(1+\delta)
   \sum_{i\in\cI_0}
   \xi_i\rho_i^2 R_i^2\mu(B_i)\lmax'
  }
  \right\}
\\
&\le
  2\exp\left\{-\frac{2t^2}{(1+\delta)\lmax'\E_\mu [\xi_i\rho_i^2 R_i^2]}\right\}
.
\end{align*}
Now take $t = \sqrt{\lmax'\log(4/\lmax')(1+\delta)\E_\mu[\xi_i\rho_i^2 R_i^2]/2}$ in
the above bound, which yields
\[
\P\BigBracks{(v'_\er - \E_\theta v'_\er)^2\geq t^2}
=
\P\BigBracks{|v'_\er - \E_\theta v'_\er|\geq t}
\le
\frac{\lmax'}{2}
\enspace.
\]
Now, we can go back to analyzing $v_\er$. We set $\delta_0$ sufficiently small, so
$t+\delta_0\le\sqrt{\lmax'\log(5/\lmax')(1+\delta)\E_\mu [\xi_i\rho_i^2 R_i^2]/2}$.
Thus, using \Eq{finite:range:new}, we have
\begin{align*}
\E_\theta\Bracks{(v_\er - \E_\theta v_\er)^2}
&\le
(t+\delta_0)^2\cdot\P\BigBracks{(v_\er - \E_\theta v_\er)^2 <(t+\delta_0)^2}
\\
&\qquad\qquad{}
+
(1+\delta)\E_\mu[\xi_i\rho_i^2 R_i^2]\cdot
\P\BigBracks{(v_\er - \E_\theta v_\er)^2\geq (t+\delta_0)^2}
\\
&\le
(t+\delta_0)^2
+
(1+\delta)\E_\mu[\xi_i\rho_i^2 R_i^2]\cdot
\P\BigBracks{(v'_\er - \E_\theta v'_\er)^2\geq t^2}
\\
&=
\frac{\lmax'\log(5/\lmax')(1+\delta)\E_\mu[\xi_i\rho_i^2 R_i^2]}{2}
+
(1+\delta)\E_\mu[\xi_i\rho_i^2 R_i^2]\cdot\frac{\lmax'}{2}
\\
\tag*{\qedhere}
&
\le
\lmax'\log(5/\lmax')(1+\delta)\E_\mu[\xi_i\rho_i^2 R_i^2]
\enspace.
\end{align*}
Combining this bound with the bound on $\term_1$ yields the theorem.

\begin{lemma}
\label{lem:partition}
Let $\cZ\coloneqq\cX\times\cA$ be a subset of $\R^d$, let $\mu$ be a probability measure
on $\cZ$ and $\rmax$ and $\rho$ be non-negative measurable functions on $\cZ$.
Given $\lmax\in[0,1]$, define a random variable $\xi_{\lmax}(z)\coloneqq\1(\mu(z)\le\lmax)$.
Then for any $\delta\in(0,1]$, there exists
a countable index set $\cI$ and disjoint sets $B_i\subseteq\cZ$ alongside non-negative
reals $R_i$, $\rho_i$ and $\xi_i\in\set{0,1}$ such that the following conditions
hold:
\begin{itemize}[noitemsep]
\item Sets $B_i$ form a partition of $\cZ$, i.e., $\cZ=\uplus_{i\in\cI} B_i$.
\item Reals $R_i$ and $\rho_i$ approximate $\rmax$ and $\rho$, and $\xi_i$ equals $\xi_{\lmax}$ as follows:
\[
    \rmax^2(z)\in[R^2_i,\,(1+\delta)R^2_i]
\enspace,
\quad
    \rho^2(z)\in[\rho^2_i,\,(1+\delta)\rho^2_i]
\enspace,
\quad
    \xi_{\lmax}(z)=\xi_i
\quad
\text{for all $z\in B_i$.}
\]
\item Each set $B_i$ either satisfies $\mu(B_i)\le\delta$ or consists of a single $z\in\cZ$.
\end{itemize}
\end{lemma}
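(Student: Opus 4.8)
The plan is to build the partition in two stages: first group points according to the approximate values of $\rmax^2$, $\rho^2$ and $\xi_{\lmax}$, and then chop each resulting group into pieces that are either light or singletons. For the first stage I would lay down a multiplicative grid of ratio $1+\delta$. Concretely, for $j\in\Z$ let $J_j\coloneqq[(1+\delta)^j,(1+\delta)^{j+1})$, and add a separate bin for the value $0$; do the same for $\rho^2$. For each triple $m=(j,k,\xi)$ with $j,k\in\Z\cup\set{-\infty}$ (where $-\infty$ denotes the zero bin) and $\xi\in\set{0,1}$, set
\[
  C_m\coloneqq\set{z\in\cZ:\ \rmax^2(z)\in J_j,\ \rho^2(z)\in J_k,\ \xi_{\lmax}(z)=\xi},
\]
and assign $R_m^2\coloneqq(1+\delta)^j$ (or $0$ when $j=-\infty$), $\rho_m^2\coloneqq(1+\delta)^k$ (or $0$), and $\xi_m\coloneqq\xi$. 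There are countably many such $C_m$, they partition $\cZ$, and on each $C_m$ the inequalities $\rmax^2\in[R_m^2,(1+\delta)R_m^2]$, $\rho^2\in[\rho_m^2,(1+\delta)\rho_m^2]$ and $\xi_{\lmax}=\xi_m$ hold by construction (the half-open bins are contained in the required closed ones, and the zero bin gives $[0,0]$). Each $C_m$ is measurable once $\xi_{\lmax}$ is, which holds because $\set{z:\mu(\set z)>\lmax}$ is contained in the set of atoms of $\mu$, and a finite measure has only countably many atoms, so this set is Borel.

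For the second stage I note that a given $C_m$ may have $\mu(C_m)>\delta$, so I refine it. First pull out every atom of $\mu$ lying in $C_m$ as its own singleton $B=\set{z}$; there are only countably many atoms in total, and singletons are always admissible and trivially satisfy the approximation conditions with the exact value constants. The leftover set $N_m\coloneqq C_m\setminus\set{\text{atoms of }\mu}$ then carries an atomless restriction of $\mu$ (every $z\in N_m$ has $\mu(\set z)=0$), and the goal becomes partitioning $N_m$ into countably many Borel pieces of $\mu$-measure at most $\delta$. Granting this, each such piece inherits the constants $(R_m,\rho_m,\xi_m)$, a countable union over $m$ of countable families stays countable, all pieces are pairwise disjoint (the $C_m$ are disjoint, and singletons and blocks inside each $C_m$ are disjoint), and together they cover $\cZ$, which yields the lemma.

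The one genuinely technical step is splitting the atomless set $N_m$ into light pieces, and I would carry it out with a dyadic stopping-time argument. Enumerate the dyadic subcubes of $\R^d$ and process a cube $Q$ by declaring $N_m\cap Q$ a block whenever $\mu(N_m\cap Q)\le\delta$, and otherwise subdividing $Q$ into its $2^d$ children and recursing. The stopped cubes form an antichain in the nesting order, hence are pairwise disjoint, and their traces on $N_m$ all have measure at most $\delta$ by the stopping rule. The crux is that the recursion terminates at every point $z\in N_m$: along the nested cubes $Q_\ell(z)\downarrow\set{z}$ we have $\mu(N_m\cap Q_\ell(z))\to\mu(N_m\cap\set{z})=0$ by continuity from above (using $\mu(\cZ)=1<\infty$ and atomlessness of $\mu$ on $N_m$), so some ancestor of $z$ is stopped. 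Consequently $\set{N_m\cap Q:\ Q\text{ stopped}}$ is a countable partition of $N_m$ into sets of measure at most $\delta$, which is exactly what the second stage requires. I expect this termination-via-continuity argument to be the only place demanding care; the value-class construction in the first stage is essentially bookkeeping.
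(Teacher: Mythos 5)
Your proof is correct, and its overall architecture matches the paper's: peel off the (countably many) point masses as singletons, bucket the values of $\rmax^2$ and $\rho^2$ on a multiplicative grid of ratio $1+\delta$, and split what remains of each bucket into pieces of $\mu$-measure at most $\delta$. The one place where you genuinely diverge is that last splitting step. The paper invokes a theorem of Sierpi\'nski: once the point masses are removed, $\mu$ is atomless on the remainder, so any measurable set of measure $a$ contains subsets of every measure in $[0,a]$; this lets it cut the non-atomic part into exactly $\lceil 1/\delta\rceil$ pieces of measure at most $\delta$, and the argument works on an abstract measure space. You instead run a dyadic stopping-time (Calder\'on--Zygmund-style) decomposition, subdividing cubes until the trace has measure at most $\delta$ and using continuity from above at each point $z$ with $\mu(\set{z})=0$ to guarantee that every point lands in a stopped cube. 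Your route is more elementary and self-contained, at the cost of leaning on the Euclidean embedding $\cZ\subseteq\R^d$ (which the lemma does grant) and of producing countably rather than finitely many pieces per bucket---neither of which affects the conclusion. The remaining bookkeeping (measurability of $\xi_{\lmax}$ via countability of the set of point masses, admissibility of singletons, disjointness and countability of the final family) is handled correctly in your writeup.
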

\begin{proof}
Let $\cZ\coloneqq\cX\times\cA$. We begin our construction by separating out atoms, i.e.,
the elements $z\in\cZ$ such that $\mu(z)>0$. Specifically, we
write $\cZ=\cZna\uplus\cZa$ where $\cZa$ consists of atoms and $\cZna$ of all non-atoms. The set $\cZa$ is either
finite or countably infinite, so $\cZna$ is measurable.

By a theorem of \citet{Sierpinski22}, since $\mu$ does not have any atoms on $\cZna$,
it must be continuous on $\cZna$ in the sense that
if $A$ is a measurable subset of $\cZna$ with $\mu(A)=a$ then for any $b\in[0,a]$, there exists a measurable set $B\subseteq A$
such that $\mu(B)=b$. This means that we can decompose $\cZna$ into $N\coloneqq\lceil 1/\delta\rceil$ sets $\cZna_1,\cZna_2,\dotsc,\cZna_N$
such that each has a measure at most $\delta$ and $\cZna=\biguplus_{j=1}^N\cZna_j$.

We next ensure the approximation properties for $\rmax$ and $\rho$. We begin by a countable decomposition of non-negative reals.
We consider the countable index set $\cJ\coloneqq\Z\cup\set{-\infty}$ and define the sequence $a_j\coloneqq(1+\delta)^{j/2}$, for $j\in\Z$.
Positive reals can then be decomposed into the following intervals indexed by $\cJ$:
\[
   I_{-\infty}\coloneqq\set{0}
\enspace,
\qquad
   I_j\coloneqq(a_j,a_{j+1}]
\quad
\text{for $j\in\Z$.}
\]
It will also be convenient to set $a_{-\infty}\coloneqq0$. Thus, the construction of $I_j$ guarantees that for all $j\in\cJ$ and all
$t\in I_j$ we have $a_j^2\le t^2\le(1+\delta)a_j^2$.

The desired partition, with the index set $\cI=\cZa\;\cup\;[N]\times\cJ^2$, is as follows:
\begin{align*}
 &\text{for $i=z\in\cZa$}:
&&
  B_i\coloneqq\set{z},\;
  R_i\coloneqq\rmax(z),\;
  \rho_i\coloneqq\rho(z),\;
  \xi_i\coloneqq \xi_{\lmax}(z);
\\
 &\text{for $i=(j,j_R,j_\rho)\in[N]\times\cJ^2$:}
&&
  B_i\coloneqq\cZna_j\cap\rmax^{-1}(I_{j_R})\cap\rho^{-1}(I_{j_\rho}),
\\
\tag*{\qedhere}
 &
&&
  R_i\coloneqq a_{j_{R}},\;
  \rho_i\coloneqq a_{j_{\rho}},\;
  \xi_i\coloneqq 1.
\end{align*}
\end{proof}

\section{Proof of Theorem~\ref{thm:MSEbound}}
%\section{Other proofs}
\label{sec:otherproofs}

%% \begin{proof}[Proof of the IPS upper bound~(\ref{eqn:IPS-upper})]
%%   The variance (therefore MSE) of the importance weighting estimator
%%   can be decomposed into two parts
%%   \begin{align*}
%%     \Var \left(\hat{v}_{\mathrm{IPS}}\right) &=
%%     \E_{\mu}(\Var(\hat{v}_{\mathrm{IPS}} | x,a) + \Var_\mu
%%     (\E(\hat{v}_{\mathrm{IPS}}|x,a)) =
%%     \frac{1}{n}\E_{\mu}[\rho^2\Var(r|x,a)] + \frac{1}{n}\Var_\mu[ \rho
%%       \E(r|x,a)]\\
%%     &\leq \frac{1}{n}\E_{\mu}[\rho^2\sigma^2] +
%%     \frac{1}{n}\E_\mu[ \rho^2 \E(r|x,a)^2] \leq
%%     \frac{1}{n}\E_{\mu}[\rho^2\sigma^2] +
%%     \frac{1}{n}\E_{\mu}[\rho^2R_{\max}^2]
%%   \end{align*}
%% \end{proof}
	
%\begin{proof}[Proof of Theorem~\ref{thm:MSEbound}]

  Let $A_x\coloneqq\left\{a\in \cA :\:
  \rho(x,a) \leq\tau\right\}$. For brevity, we write
  $A_{i}\coloneqq A_{x_i}$. We decompose the mean squared error into
  the squared bias and variance and
  control each term separately,
\[
  \text{MSE}(\hat{v}_{\mathrm{\OA}}) =
  \bigAbs{\E[\hat{v}_{\mathrm{\OA}}]- v^\pi}^2 +
  \Var[\hat{v}_{\mathrm{\OA}}].
\]

  We first calculate the bias. Note that bias is incurred only in the
  terms that fall in $A^c_x$, so
  \begin{align*}
    \E[\hat{v}_{\mathrm{\OA}}]- v^\pi &= \E\left[\sum_{a\in A_x^c}
      \hat{r}(x,a) \pi(a|x) \right]  - \E\left[\sum_{a\in
        A_x^c} \E[r|x,a]\,\pi(a|x)\right]\\
    &=  \E_\pi \BigBracks{\bigParens{\hat{r}(x,a)-\E[r|x,a]}
                          ~\mathbf{1}(a\in A_x^c)
               }
\\
    &=  \E_\pi \bigBracks{\,\epsilon(x,a)
                          \,\mathbf{1}(\rho>\tau)
               \,}
%     = \E_{\pi} \left[\hat{r}(x,a)-\E(r|x,a)~\middle|~
%      a\in A_x^c\right] \P_\pi(a\in A_x^c) \\
%    &= \E_{\pi}[\epsilon(x,a)~ |~ a\in A_x^c ] \P_\pi(a\in A_x^c),
  \end{align*}
  where we recall that $\epsilon(x,a) = \hat{r}(x,a) - \E[r|x,a]$.
  %% Alternatively, we can use Cauchy-Schwarz inequality to get an upper bound that does not depend on $\tau$:
  %% $$
  %% \big(\E_\pi \epsilon(x,a) 1_{\{a\in A_x^c\}} \big)^2 \leq  \E_\pi \big(\epsilon(x,a)^2\big) \E_\pi(1_{\{a\in A_x^c\}}^2) = \E_\pi \big(\epsilon(x,a)^2\big)\P_\pi(a\in A_x^c).
  %% $$

  Next we upper bound the variance. Note that the variance
  contributions from the IPS part and the DM part are not independent,
  since the indicators $\rho(x_i,a) > \tau$ and $\rho(x_i,a) \leq
  \tau$ are mutually exclusive. To simplify the analysis, we use the
  following inequality that holds for any random variable $X$ and $Y$:
  $$
  \Var(X+Y) \leq  2\Var(X) + 2\Var(Y).
  $$
  This allows us to calculate the variance of each part separately.
  \begin{align*}
    \Var [\hat{v}_{\mathrm{\OA}}]
&\leq
    2\,
    \Var\!\Bracks{
        \frac{1}{n}\sum_{i=1}^n \left[r_i \rho_i \1(a_i\in
        A_i)\right]
    }
    +
    2\, \Var\!\Bracks{
    \frac{1}{n}\sum_{i=1}^n\sum_{a\in \cA} \hat{r}(x_i,a)\pi(a|x_i)
    \1(a \in A^c_i)
    }
\\
&=
    \frac{2}{n}\, \Var_\mu \bigBracks{r \rho \1(a\in A_x)}
    +
    \frac{2}{n}\, \Var\!\Bracks{
    \sum_{a\in A_x^c} \hat{r}(x,a)\pi(a|x)
    }
\\
&=
    \frac{2}{n}\,
    \E_\mu \Var\bigBracks{r \rho \1(a\in A_x)\bigGiven x,a}
    +
    \frac{2}{n}\, \Var_\mu \E\bigBracks{r\rho \1(a\in A_x)\bigGiven x,a}
    +
    \frac{2}{n}\, \Var\!\Bracks{
    \sum_{a\in A_x^c}\hat{r}(x,a)\pi(a|x)
    }
\\
&\leq
    \frac{2}{n}\,
    \E_\mu \Var\bigBracks{r \rho \1(a\in A_x)\bigGiven x,a}
    +
    \frac{2}{n}\, \E_\mu\BigBracks{\,\E[r\rho \1(a\in A_x)\given x,a]^2\,}
    +
    \frac{2}{n}\, \E\BiggBracks{\,
    \BigParens{\,\sum_{a\in A_x^c}\hat{r}(x,a)\pi(a|x)\,}^2
    \,}
\\
&\leq
    \frac{2}{n}\, \E_\mu\bigBracks{\sigma^2 \rho^2 \1(a\in A_x)}
    +
    \frac{2}{n}\, \E_\mu\bigBracks{R_{\max}^2\rho^2 \1(a\in A_x)}
    +
    \frac{2}{n}\, \E\BiggBracks{\,
    \BigParens{\,\sum_{a\in A_x^c}\hat{r}(x,a)\pi(a|x)\,}^2
    \,}.
  \end{align*}

  To complete the proof, note that the last term is further upper
  bounded using Jensen's inequality as
  \begin{align*}
    \E\BiggBracks{\,
    \BigParens{\,\sum_{a\in A_x^c}\hat{r}(x,a)\pi(a|x)\,}^2
    \,}
&=
    \E\BiggBracks{\,
    \BigParens{\,
      \sum_{a \in A^c_x} \pi(a|x)
    \,}^2
    \,
    \biggParens{\,
      \sum_{a\in A_x^c}
        \frac{\hat{r}(x,a)\pi(a|x)}{\sum_{a \in A^c_x}\pi(a|x)}
    \,}^2
    \,}
\\
&\leq
    \E\BiggBracks{\,
    \BigParens{\,
      \sum_{a \in A^c_x}\pi(a|x)
    \,}
    \;
    \biggParens{\,
      \sum_{a \in A^c_x}\hat{r}(x,a)^2\pi(a|x)
    \,}
    \,}
\\
&\leq
    \E_\pi\bigBracks{ \rmax^2 \1(\rho > \tau) },
\end{align*}
  where the final inequality uses $\sum_{a \in A^c_x} \pi(a|x) \leq
  1$ and $\hat{r}(x,a) \in [0, \rmax(x,a)]$ almost surely.

  Combining the bias and variance bounds, we get the stated MSE upper bound.
  \qed
%\end{proof}

\section{Utility Lemmas}\label{sec:utilitylemmas}
\begin{lemma}[{\citealp[Theorem~2]{hoeffding1963probability}}]\label{lem:hoeffding}
  Let $X_i \in [a_i,b_i]$ and $X_1,...,X_n$ are drawn independently. Then the empirical mean $\bar{X}= \frac{1}{n}(X_1 + ... + X_n)$ obeys
  $$
  \P(|\bar{X} - \E[\bar{X}]| \geq t) \leq 2 e^{-\frac{2n^2t^2}{\sum_{i=1}^n(b_i-a_i)^2}}.
$$
\end{lemma}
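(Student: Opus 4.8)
The plan is to follow the standard Chernoff-bound route, reducing the two-sided tail to a one-sided exponential moment bound, then controlling the moment generating function of each centered summand via Hoeffding's lemma, and finally optimizing the free parameter. First I would write $S\coloneqq\sum_{i=1}^n(X_i-\E X_i)$, so that $\bar X-\E\bar X = S/n$, and handle the two sides separately using the union bound $\P(|\bar X-\E\bar X|\ge t)\le \P(S\ge nt)+\P(-S\ge nt)$; by symmetry (applying the argument to $-X_i\in[-b_i,-a_i]$, which has the same range $b_i-a_i$) it suffices to bound $\P(S\ge nt)$ and multiply by two. For the one-sided bound, for any $s>0$ Markov's inequality applied to the exponential gives
\[
\P(S\ge nt)=\P\bigParens{e^{sS}\ge e^{snt}}\le e^{-snt}\,\E\bigBracks{e^{sS}}
= e^{-snt}\prod_{i=1}^n \E\bigBracks{e^{s(X_i-\E X_i)}},
\]
where the last equality uses independence of the $X_i$.

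The crux is bounding each factor, which is exactly Hoeffding's lemma: if $Y\in[a,b]$ with $\E Y=0$, then $\E[e^{sY}]\le e^{s^2(b-a)^2/8}$. I would prove this as a self-contained sub-step. By convexity of $y\mapsto e^{sy}$, for $y\in[a,b]$ we have $e^{sy}\le \frac{b-y}{b-a}e^{sa}+\frac{y-a}{b-a}e^{sb}$; taking expectations and using $\E Y=0$ yields $\E[e^{sY}]\le \frac{b}{b-a}e^{sa}-\frac{a}{b-a}e^{sb}$. Writing this as $e^{\phi(s)}$ with $\phi(s)\coloneqq sa+\log\bigParens{\frac{b}{b-a}-\frac{a}{b-a}e^{s(b-a)}}$, a direct computation gives $\phi(0)=\phi'(0)=0$ and $\phi''(s)=p(1-p)(b-a)^2$ for a suitable $p=p(s)\in[0,1]$; since $p(1-p)\le 1/4$, Taylor's theorem with remainder gives $\phi(s)\le s^2(b-a)^2/8$, establishing the lemma. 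Applying it with $Y_i\coloneqq X_i-\E X_i\in[a_i-\E X_i,\,b_i-\E X_i]$ (range exactly $b_i-a_i$, mean zero) yields $\E[e^{s(X_i-\E X_i)}]\le e^{s^2(b_i-a_i)^2/8}$.

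Substituting back, I get
\[
\P(S\ge nt)\le \exp\Bigl\{-snt+\tfrac{s^2}{8}\textstyle\sum_{i=1}^n(b_i-a_i)^2\Bigr\}.
\]
The exponent is a quadratic in $s$ minimized at $s^\ast = 4nt\big/\sum_{i=1}^n(b_i-a_i)^2$, and plugging $s^\ast$ in yields the exponent $-2n^2t^2\big/\sum_{i=1}^n(b_i-a_i)^2$. Combining with the symmetric bound for $\P(-S\ge nt)$ and the union-bound factor of two gives exactly
\[
\P(|\bar X-\E\bar X|\ge t)\le 2\exp\Bigl\{-\tfrac{2n^2t^2}{\sum_{i=1}^n(b_i-a_i)^2}\Bigr\},
\]
as claimed. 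The only genuinely delicate step is Hoeffding's lemma, specifically the bound $\phi''\le(b-a)^2/4$; everything else (Chernoff, independence factorization, the quadratic optimization in $s$) is routine. Since this is a restatement of a classical theorem, I would most likely cite \citet{hoeffding1963probability} directly rather than reproduce the derivation, as the excerpt already does.
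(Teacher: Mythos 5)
Your proof is correct: it is the standard Chernoff-bound argument combined with Hoeffding's lemma for the moment generating function, which is precisely the classical proof of \citet[Theorem~2]{hoeffding1963probability}, and all steps (the union bound, the MGF factorization by independence, the convexity bound $\phi''(s)=p(1-p)(b-a)^2\le (b-a)^2/4$, and the optimization $s^\ast=4nt/\sum_i(b_i-a_i)^2$) check out. The paper gives no proof of its own here---Lemma~\ref{lem:hoeffding} is stated purely as a citation to the original reference---so your reconstruction coincides with the cited classical argument, and your closing remark that one would simply cite \citet{hoeffding1963probability} is exactly what the paper does.
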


\begin{lemma}[Bernoulli KL-divergence] \label{lem:bernoulli_KL}
  For $0<p,q<1$, we have
  $$D_{\mathrm{KL}}(\mathrm{Ber}(p)\|\mathrm{Ber}(q)) \leq (p-q)^2(\frac{1}{q}+\frac{1}{1-q}).$$
\end{lemma}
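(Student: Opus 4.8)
The plan is to bound the divergence directly from its definition, using only the elementary inequality $\log t\le t-1$, valid for every $t>0$. Recall that for Bernoulli distributions
\[
  D_{\mathrm{KL}}(\mathrm{Ber}(p)\|\mathrm{Ber}(q)) = p\log\frac{p}{q} + (1-p)\log\frac{1-p}{1-q}.
\]
First I would apply $\log t\le t-1$ to each of the two logarithmic factors, namely to $t=p/q$ and to $t=(1-p)/(1-q)$. Since the multipliers $p$ and $1-p$ are strictly positive on $(0,1)$, multiplying preserves the inequalities, yielding
\[
  p\log\frac{p}{q}\le p\Parens{\frac{p}{q}-1}=\frac{p(p-q)}{q}
  \enspace,\qquad
  (1-p)\log\frac{1-p}{1-q}\le(1-p)\Parens{\frac{1-p}{1-q}-1}=\frac{(1-p)(q-p)}{1-q}
  \enspace.
\]

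Summing the two bounds and factoring out the common $(p-q)$ gives
\[
  D_{\mathrm{KL}}(\mathrm{Ber}(p)\|\mathrm{Ber}(q)) \le (p-q)\Bracks{\frac{p}{q}-\frac{1-p}{1-q}}
  \enspace.
\]
The remaining step is a short algebraic simplification: placing the bracketed difference over the common denominator $q(1-q)$ collapses the numerator to $p(1-q)-q(1-p)=p-q$, so the right-hand side becomes $(p-q)^2/\bigParens{q(1-q)}$. Finally I would invoke the partial-fraction identity $1/\bigParens{q(1-q)}=1/q+1/(1-q)$, which delivers exactly the claimed bound.

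There is no substantive obstacle in this argument; the only point demanding a moment of care is the direction of the inequality. Since we seek an \emph{upper} bound on $D_{\mathrm{KL}}$, it is important that $\log t\le t-1$ bounds each logarithmic term from above regardless of the sign of $p-q$ (that is, whether $p<q$ or $p>q$), and that the positive weights $p$ and $1-p$ leave the sense of each inequality unchanged, so that the two bounds combine additively in the correct direction.
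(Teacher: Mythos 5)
Your proof is correct and follows essentially the same route as the paper's: both apply $\log t\le t-1$ to each of the two terms in the Bernoulli KL-divergence and then simplify. The only difference is cosmetic algebra --- you factor out $(p-q)$ and combine over the common denominator to get $(p-q)^2/\bigl(q(1-q)\bigr)$, which equals the claimed bound exactly, while the paper expands each term so that the linear pieces $(p-q)$ and $(q-p)$ cancel.
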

\begin{proof}
  \begin{align*}
    D_{\mathrm{KL}}(\text{Ber}(p)\|\text{Ber}(q)) &=  p\log\Parens{\frac{p}{q}} + (1-p)\log\Parens{\frac{1-p}{1-q}} \\
    &\leq p  \frac{p-q}{q} + (1-p)\frac{q-p}{1-q} = \frac{(p-q)^2}{q} + (p-q)  + \frac{(p-q)^2}{1-q} + (q-p)
\\
\tag*{\qedhere}
    &= (p-q)^2 \Parens{\frac{1}{q}+\frac{1}{1-q}}.
  \end{align*}
\end{proof}

\section{Additional Figures from the Experiments}
\label{sec:experiments}
\begin{figure*}
	\centering
						\includegraphics[width=0.8\textwidth]{figs/legend}
						\medskip

	\begin{subfigure}[t]{0.39\textwidth}
		\centering
		\includegraphics[width=\textwidth]{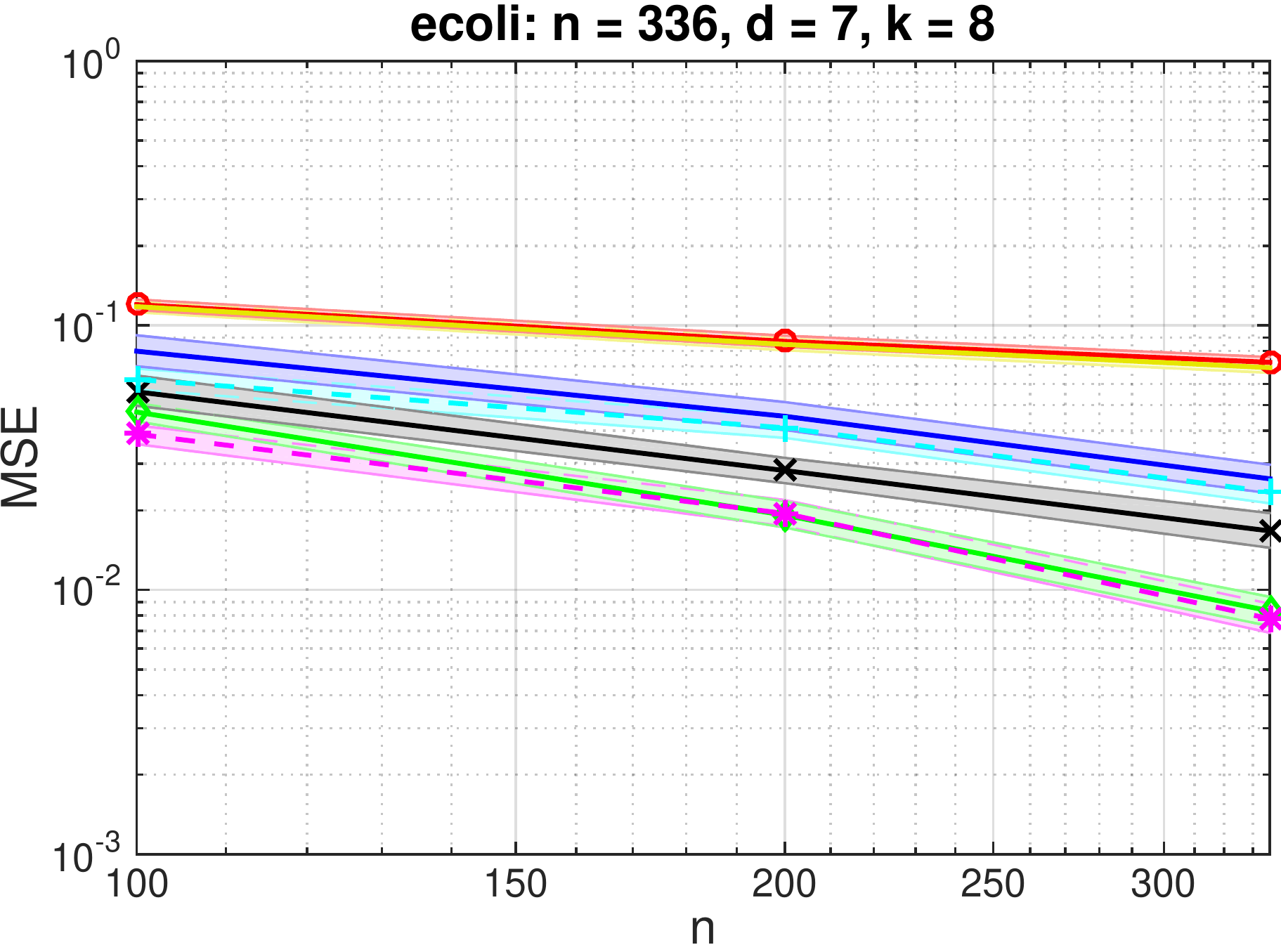}
		\figsqueeze\caption{ecoli / deterministic reward}\label{fig:raw-optecoli}		
	\end{subfigure}
	\hspace{0.25in}
			\begin{subfigure}[t]{0.39\textwidth}
				\centering
				\includegraphics[width=\textwidth]{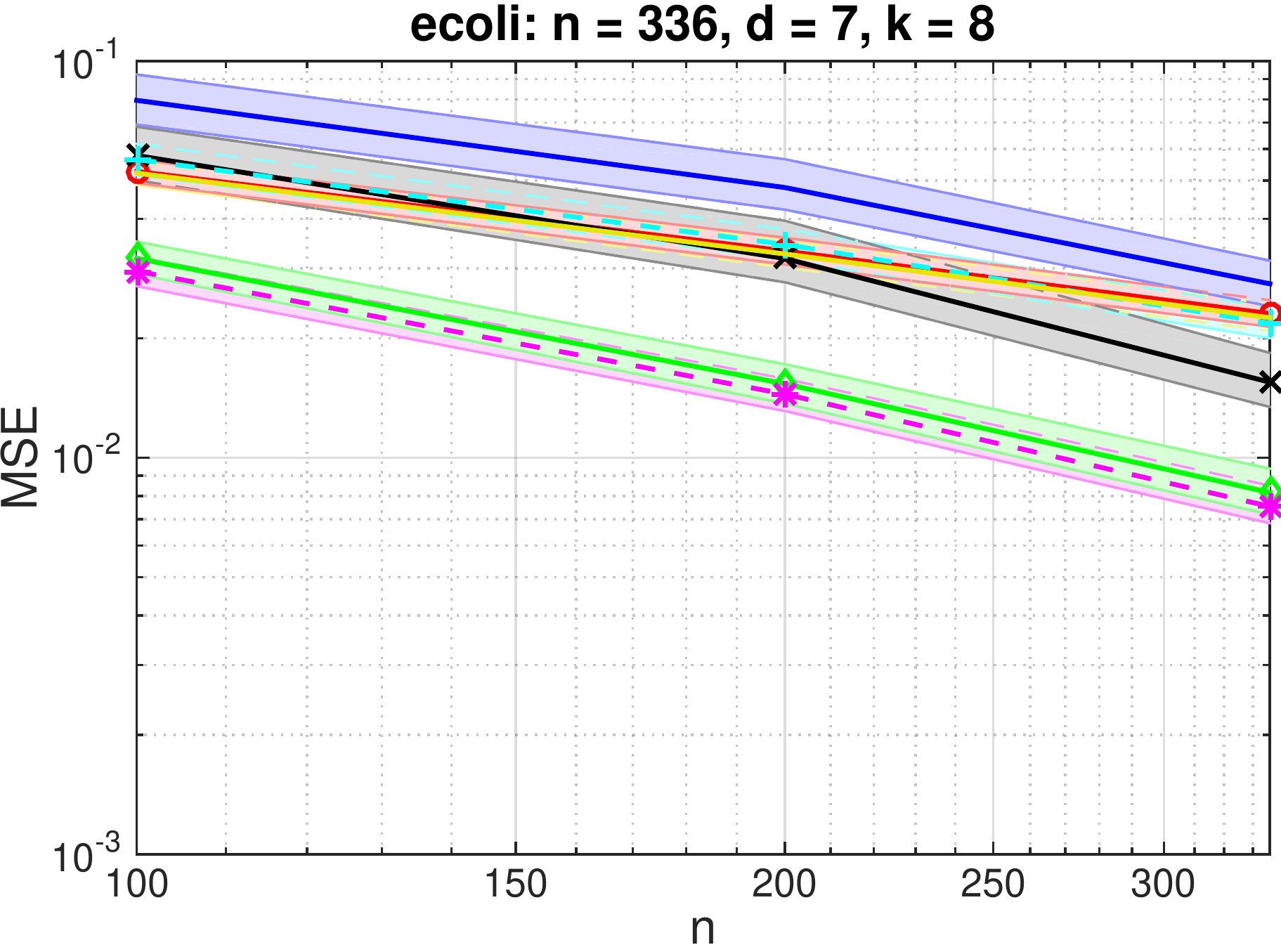}
				\figsqueeze\caption{ecoli / noisy reward}\label{fig:noisy-ecoli}		
			\end{subfigure}\\
	\medskip
	
	\begin{subfigure}[t]{0.39\textwidth}
		\centering
		\includegraphics[width=\textwidth]{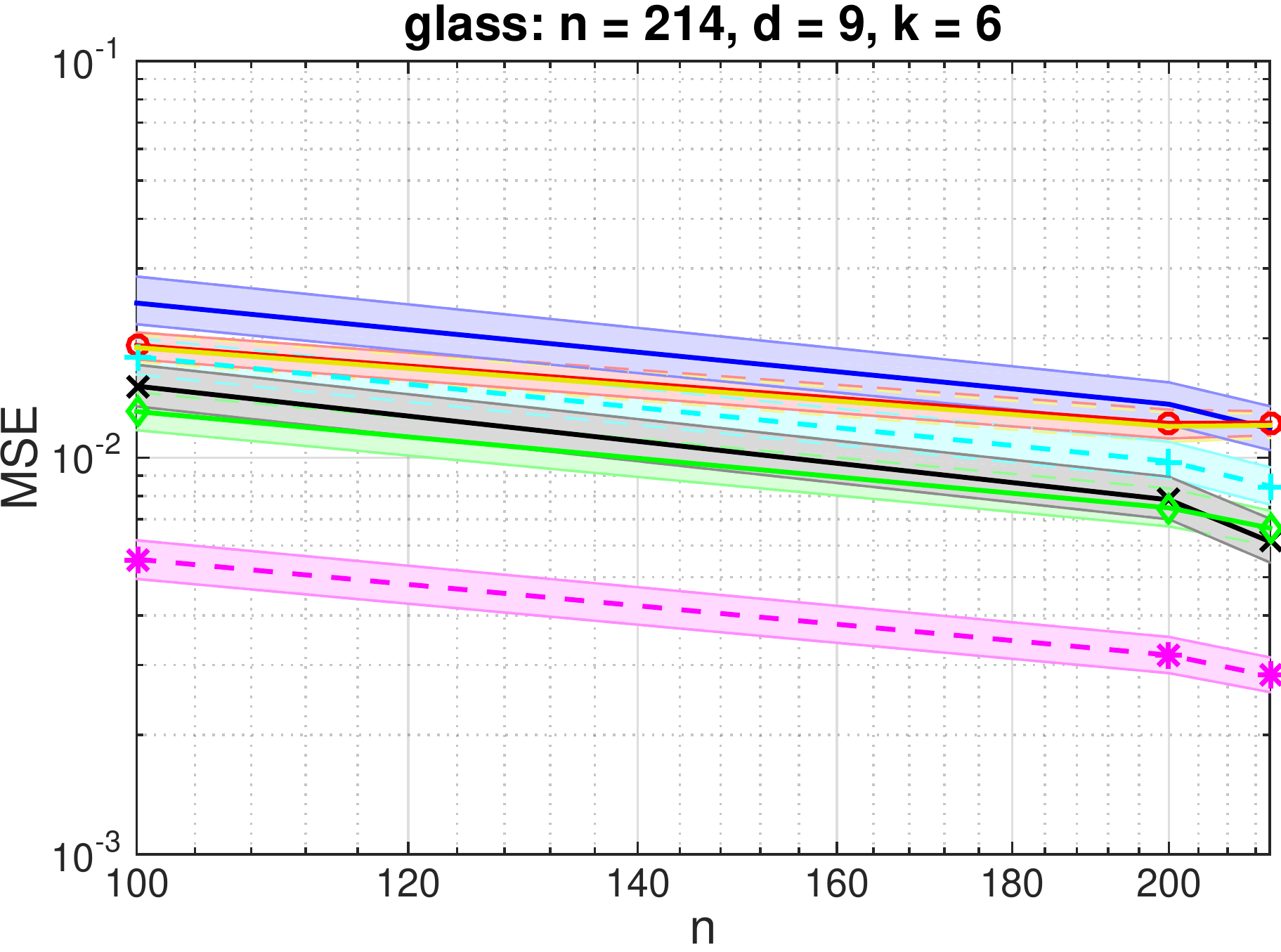}
		\figsqueeze\caption{glass / deterministic reward}\label{fig:raw-glass}		
	\end{subfigure}
		\hspace{0.25in}
		\begin{subfigure}[t]{0.39\textwidth}
			\centering
			\includegraphics[width=\textwidth]{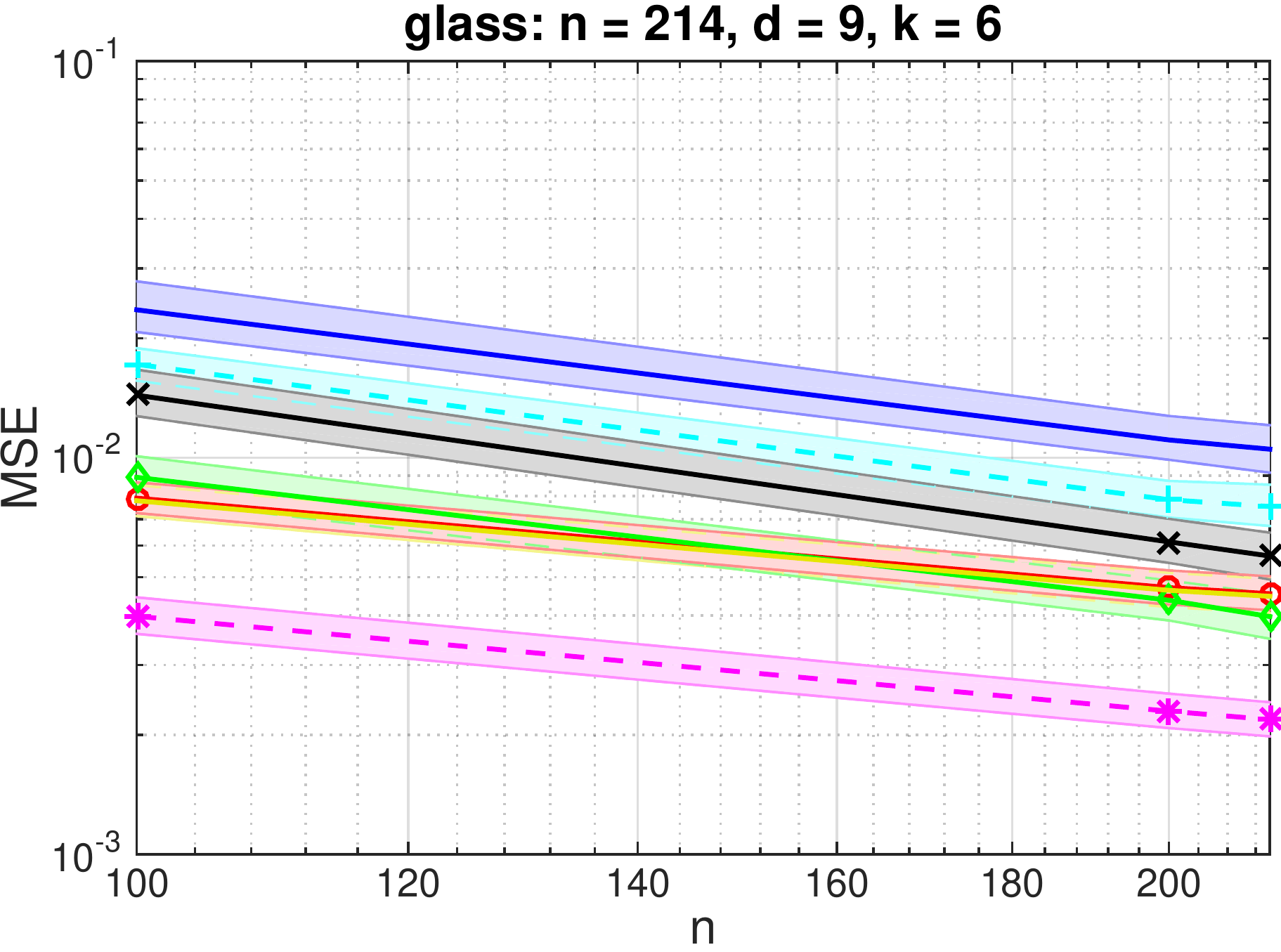}
			\figsqueeze\caption{glass / noisy reward}\label{fig:noisy-glass}		
		\end{subfigure}
		\\		\medskip
	
	\begin{subfigure}[t]{0.39\textwidth}
		\centering
		\includegraphics[width=\textwidth]{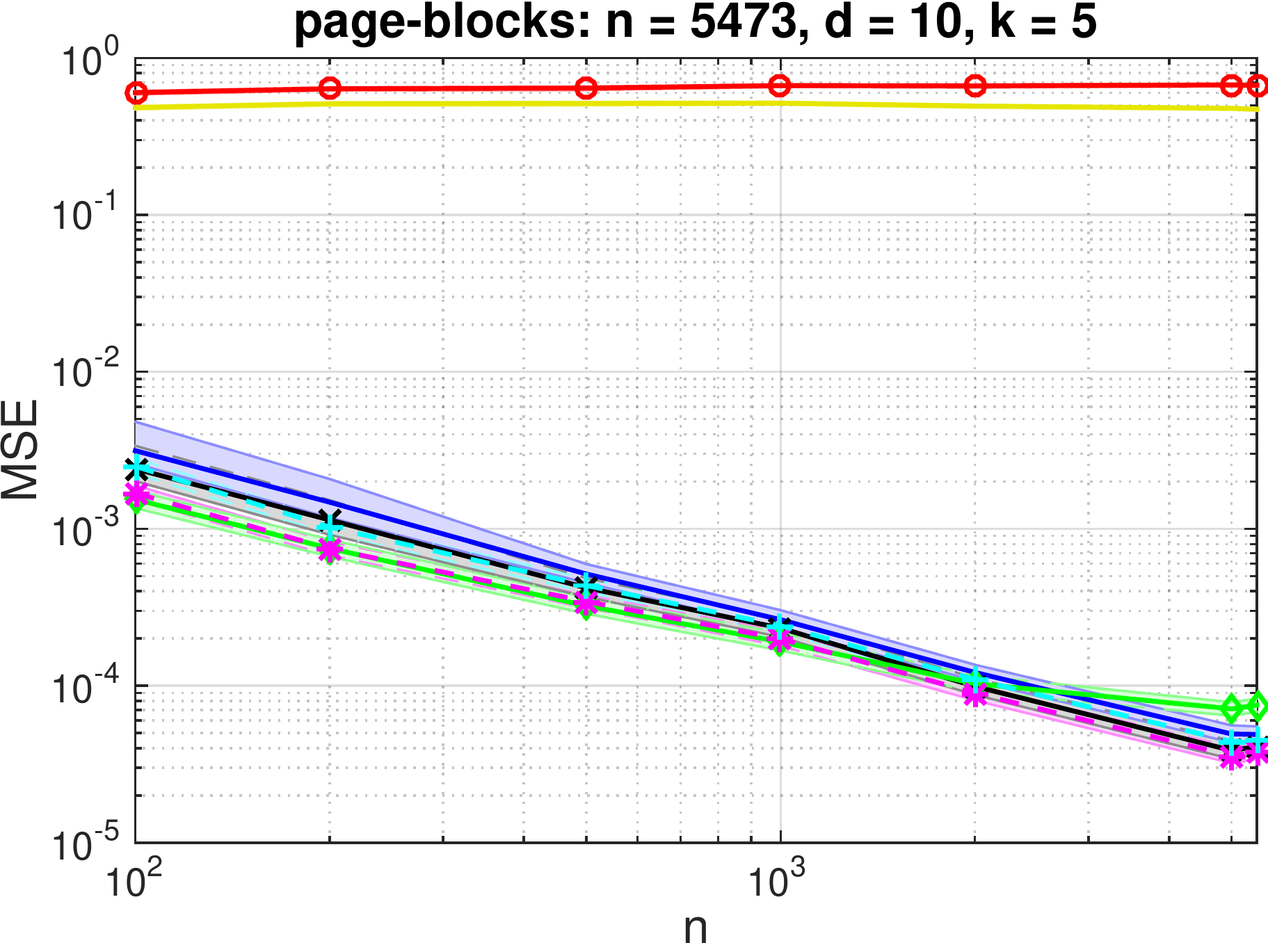}
		\figsqueeze\caption{page-blocks / deterministic reward}\label{fig:raw-page-blocks}		
	\end{subfigure}
	\hspace{0.25in}
		\begin{subfigure}[t]{0.39\textwidth}
			\centering
			\includegraphics[width=\textwidth]{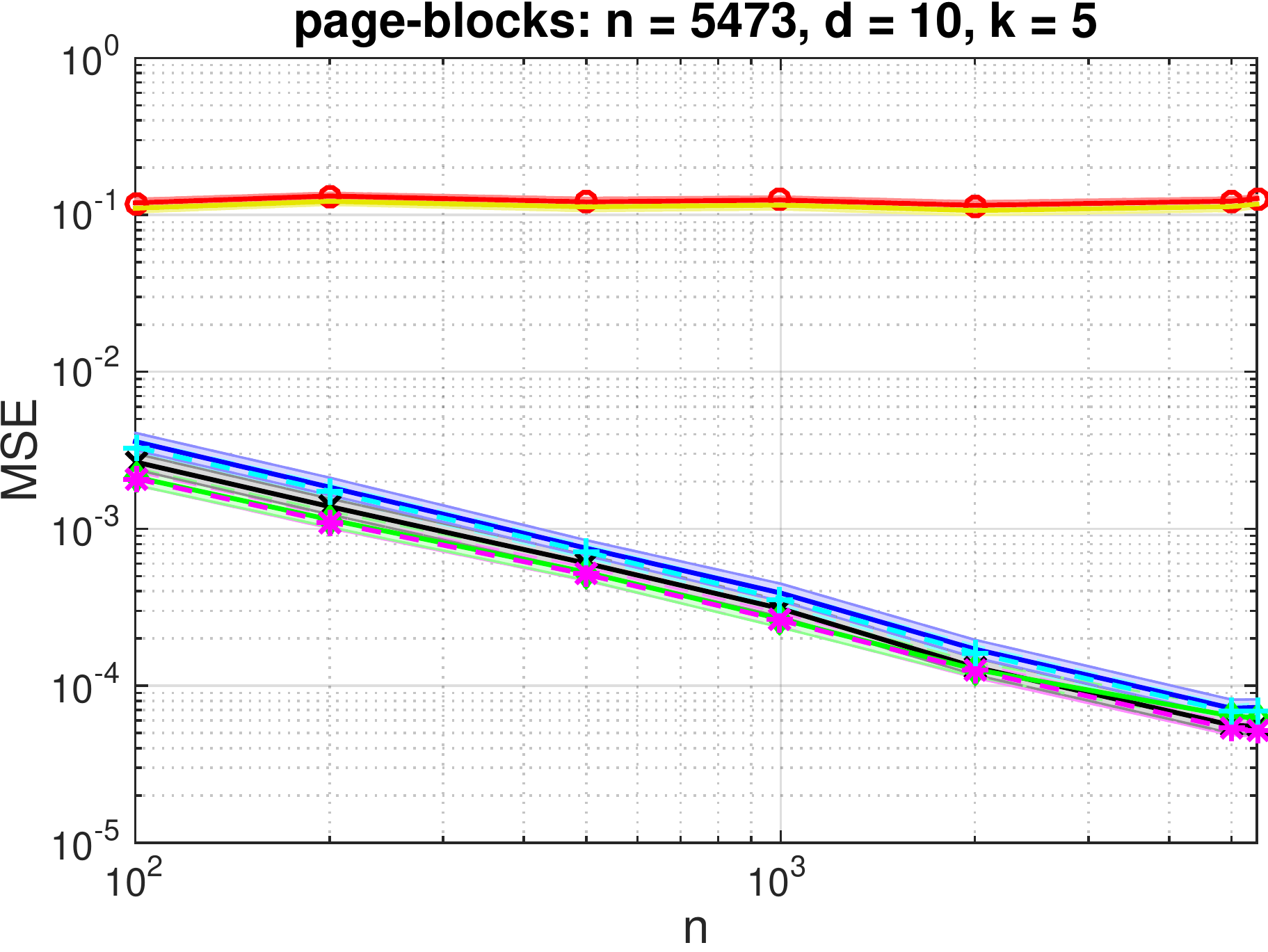}
			\figsqueeze\caption{page-blocks / noisy reward}\label{fig:noisy-page-blocks}		
		\end{subfigure}
	\\
	\medskip
				\begin{subfigure}[t]{0.39\textwidth}
					\centering
					\includegraphics[width=\textwidth]{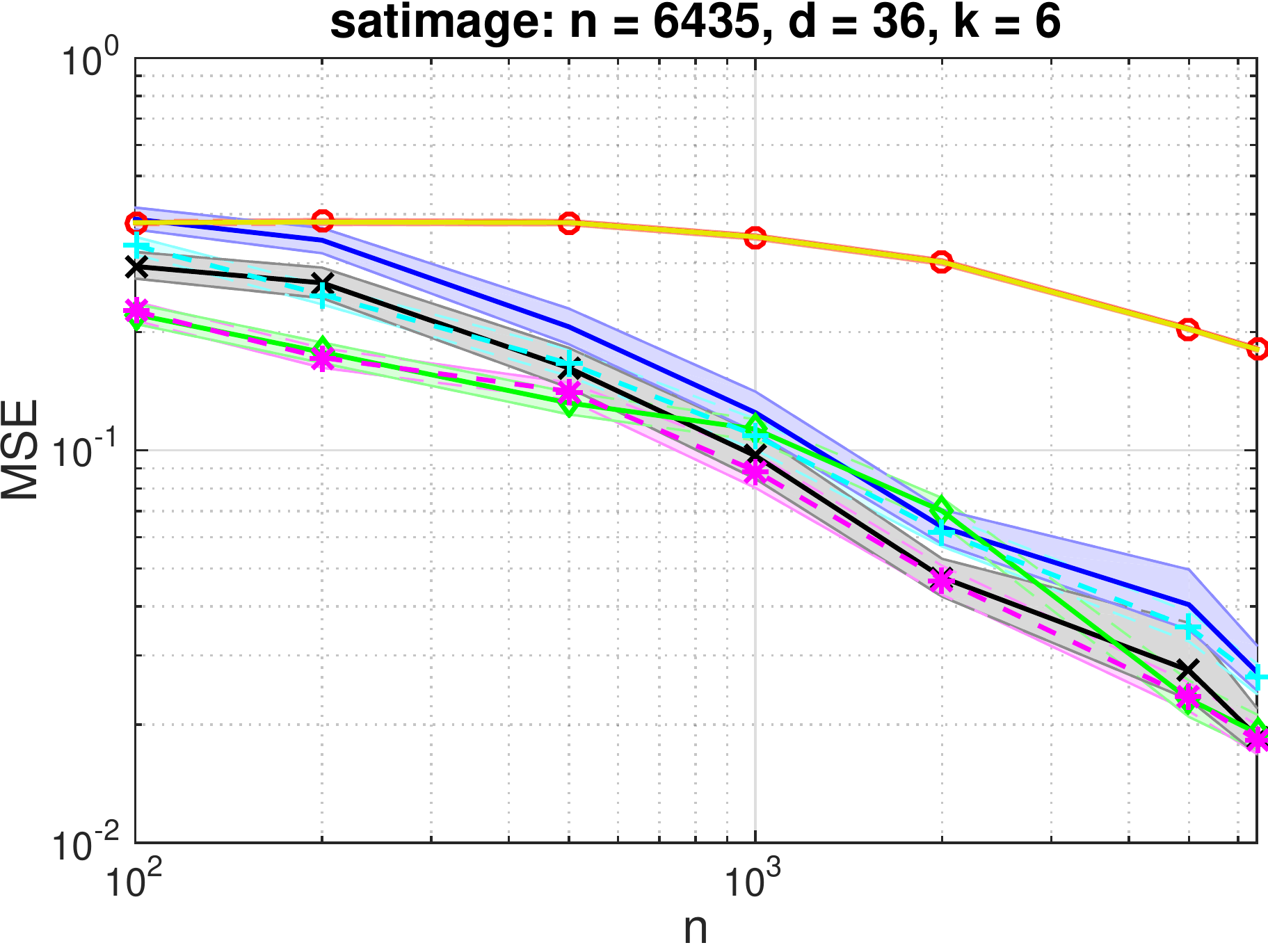}
					\figsqueeze\caption{satimage / deterministic reward}\label{fig:raw-optsatimage}		
				\end{subfigure}
					\hspace{0.25in}
			\begin{subfigure}[t]{0.39\textwidth}
				\centering
				\includegraphics[width=\textwidth]{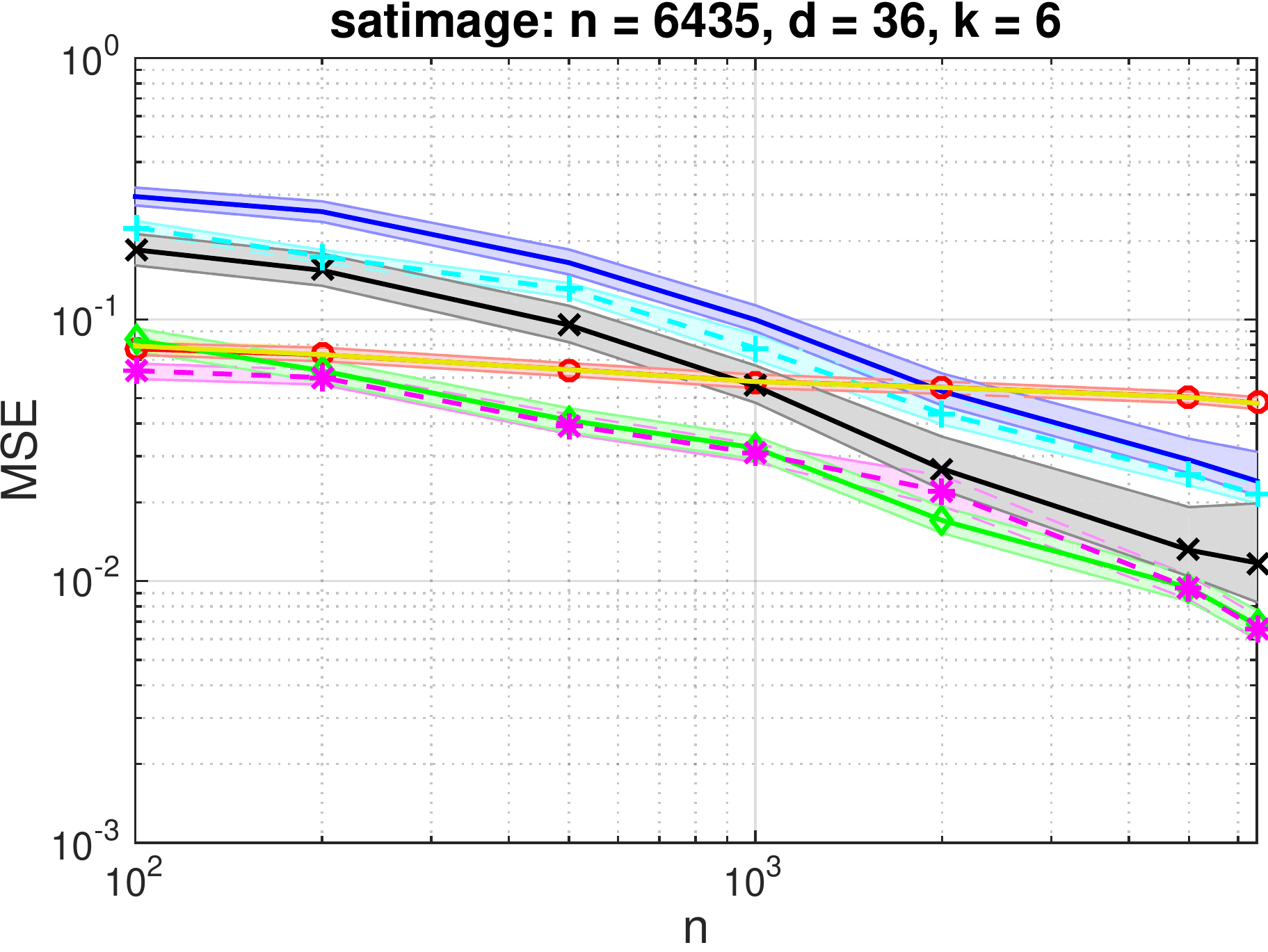}
				\figsqueeze\caption{satimage / noisy reward}\label{fig:noisy-satimage}		
			\end{subfigure}
	\end{figure*}
\begin{figure*}
  \centering
  		
  		\includegraphics[width=0.8\textwidth]{figs/legend}
  				\medskip
  		
	\begin{subfigure}[t]{0.39\textwidth}
	  \centering
	  \includegraphics[width=\textwidth]{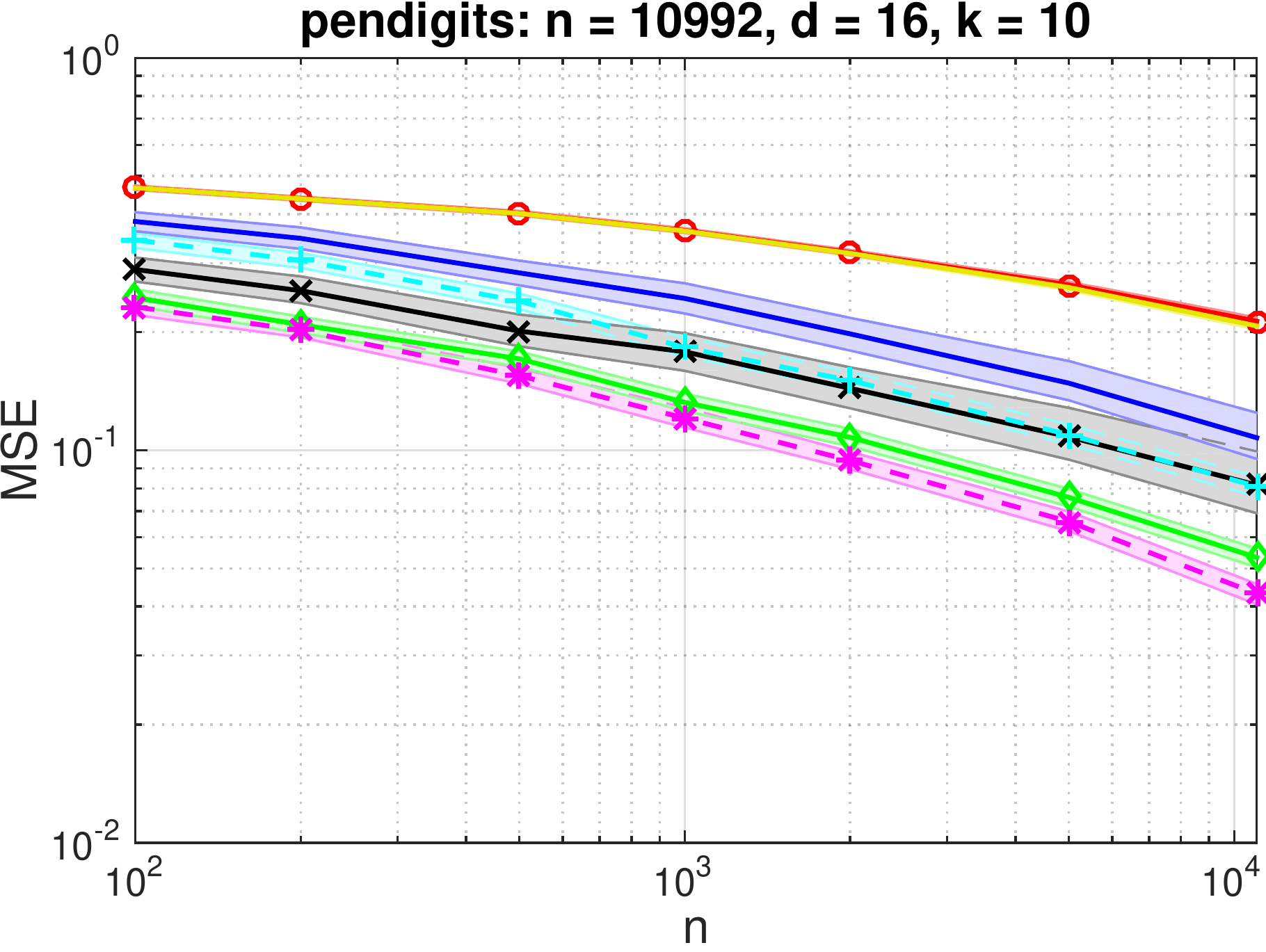}
	  \figsqueeze\caption{pendigits / deterministic reward}\label{fig:raw-pendigits}		
	\end{subfigure}
	\hspace{0.25in}
		\begin{subfigure}[t]{0.39\textwidth}
			\centering
			\includegraphics[width=\textwidth]{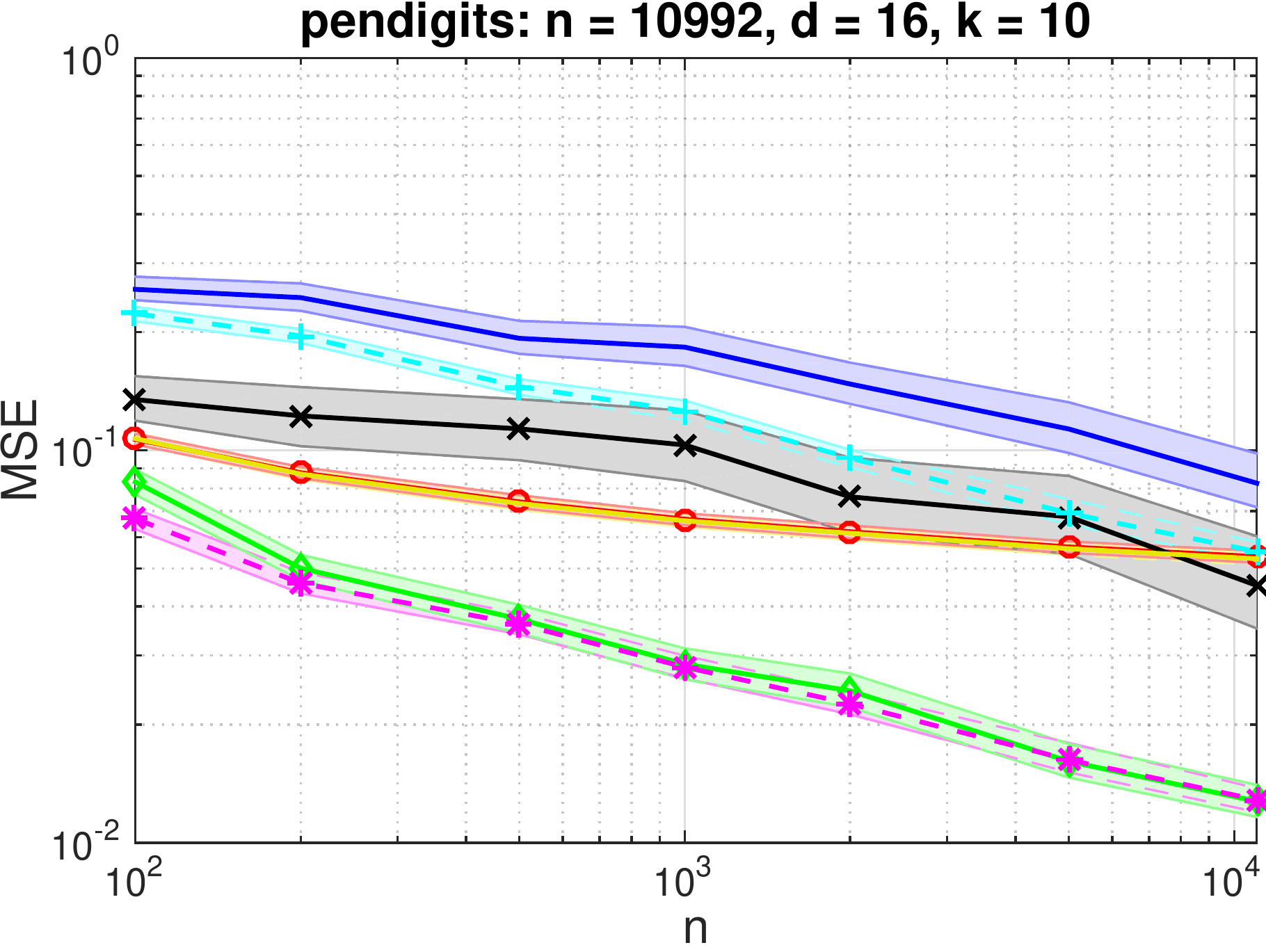}
			\figsqueeze\caption{pendigits / noisy reward}\label{fig:noisy-pendigits}		
		\end{subfigure}\\
		
		\medskip
        \begin{subfigure}[t]{0.39\textwidth}
	  \centering
	  \includegraphics[width=\textwidth]{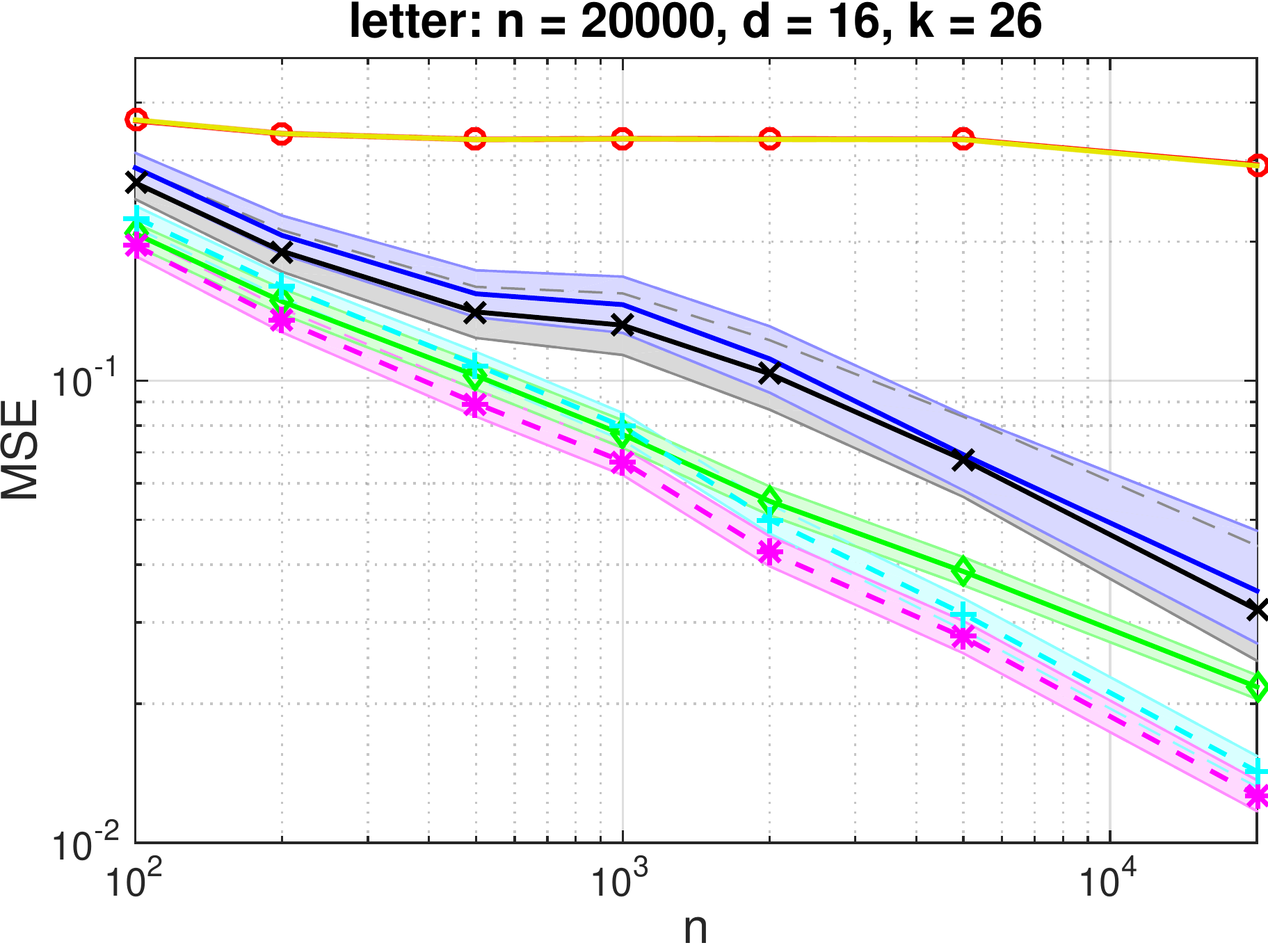}
	  \figsqueeze\caption{letter / deterministic reward}\label{fig:raw-letter}		
	\end{subfigure}
\hspace{0.25in}
	\begin{subfigure}[t]{0.39\textwidth}
	  \centering
	  \includegraphics[width=\textwidth]{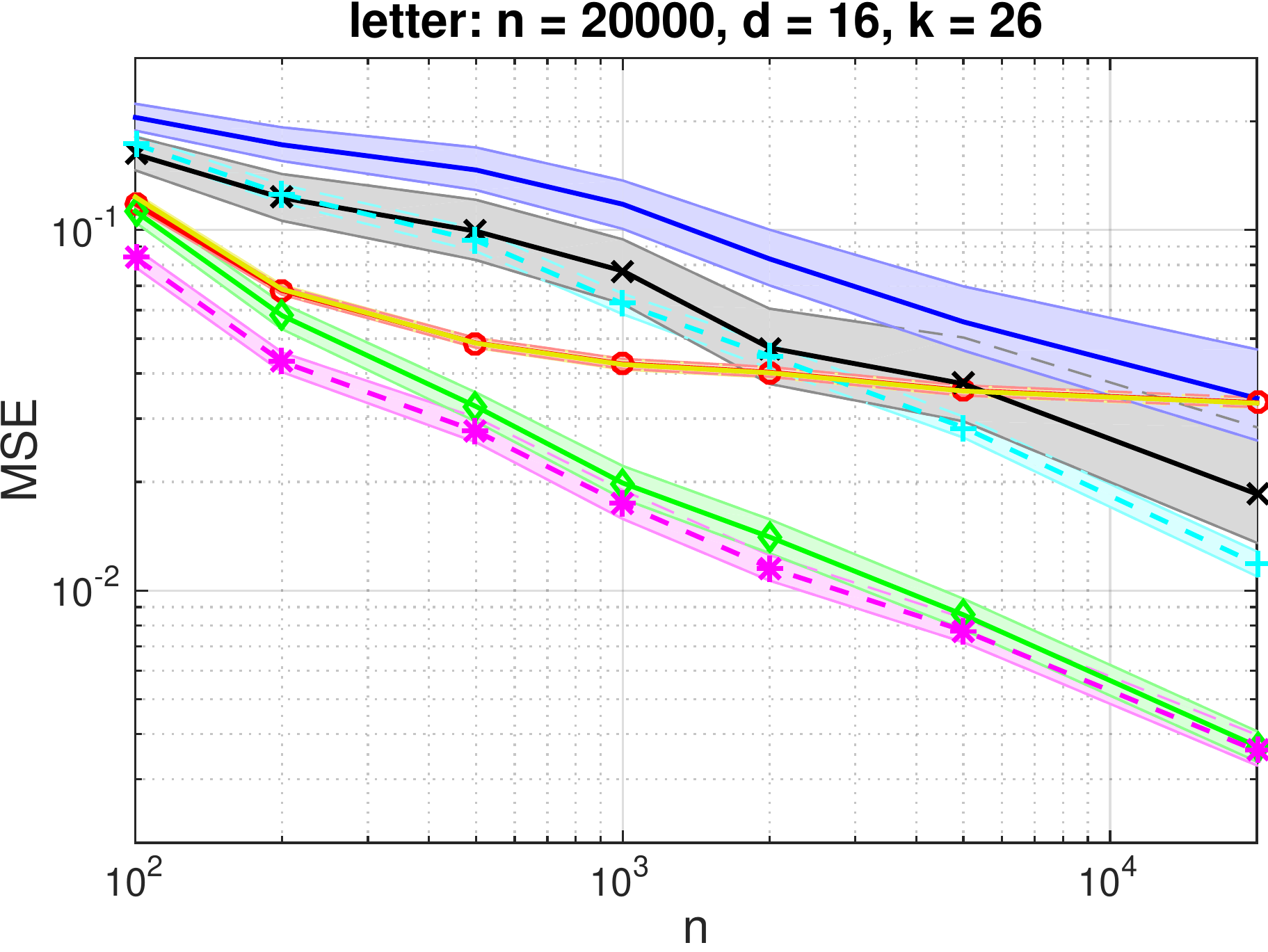}
	  \figsqueeze\caption{letter / noisy reward}\label{fig:noisy-letter}		
	\end{subfigure}\\
%	\end{figure*}
%\begin{figure*}	
%		\centering
\medskip
		\begin{subfigure}[t]{0.39\textwidth}
			\centering
			\includegraphics[width=\textwidth]{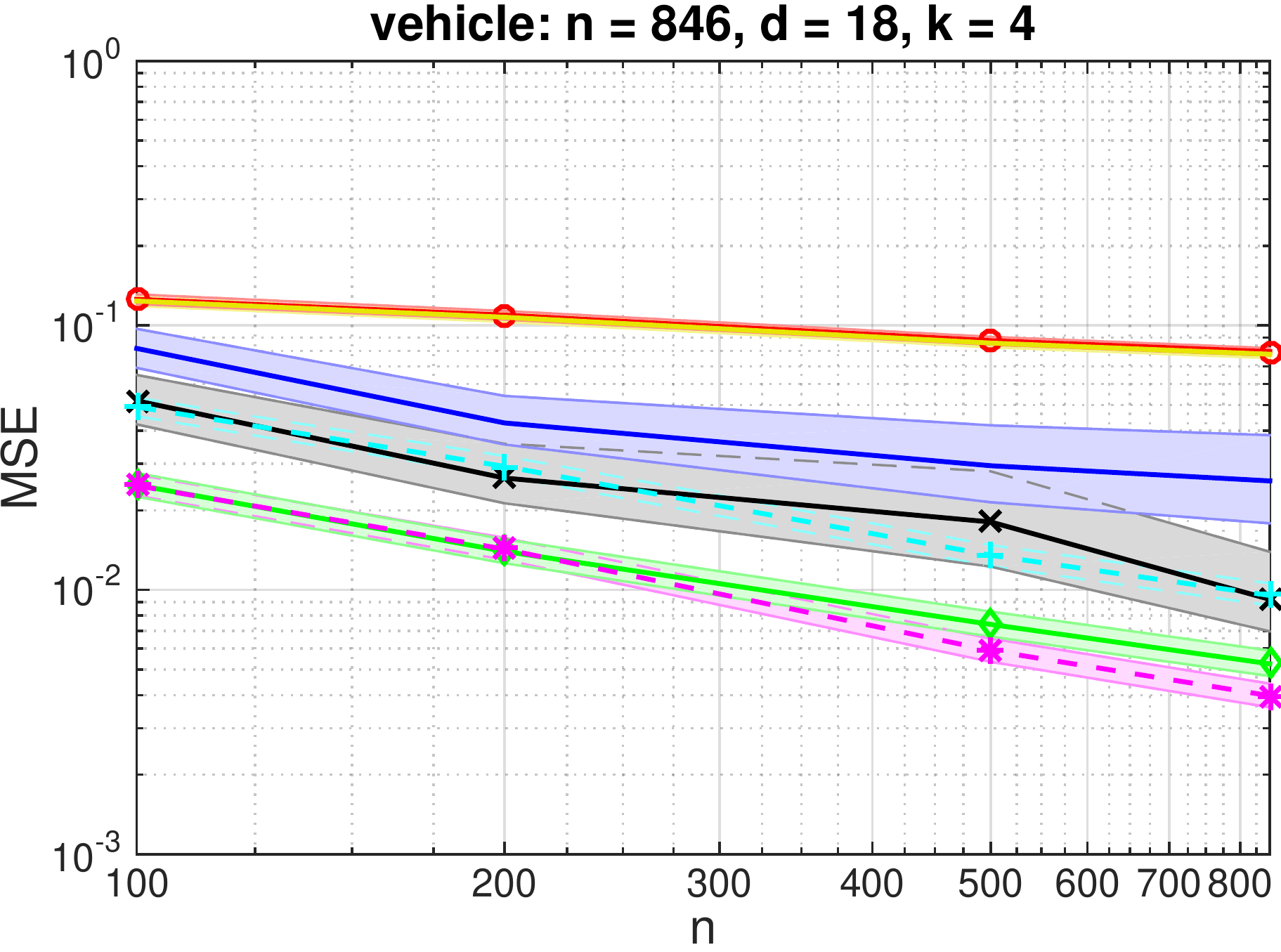}
			\figsqueeze\caption{vehicle / deterministic reward}\label{fig:raw-vehicle}		
		\end{subfigure}
		\hspace{0.25in}
				\begin{subfigure}[t]{0.39\textwidth}
					\centering
					\includegraphics[width=\textwidth]{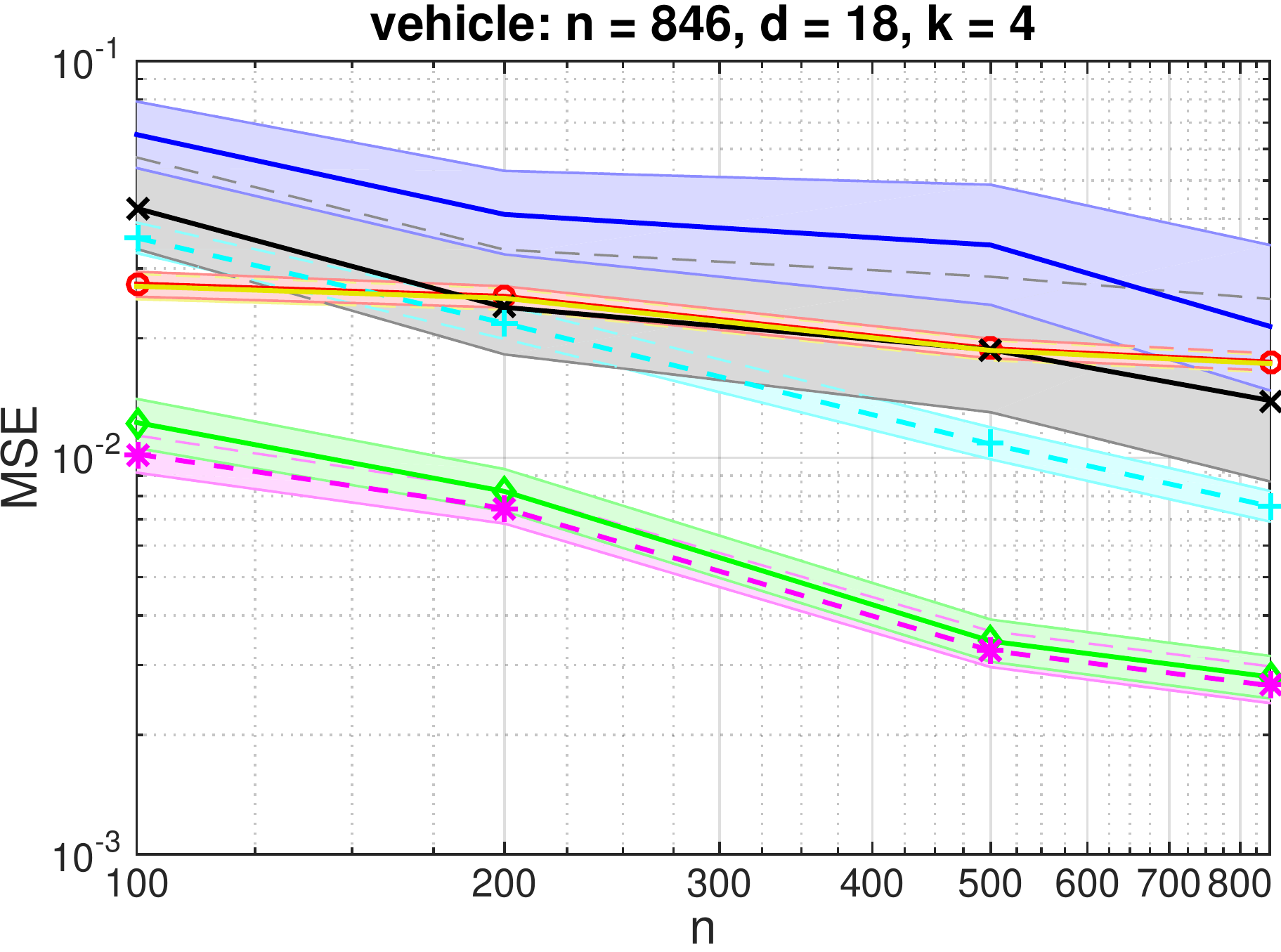}
					\figsqueeze\caption{vehicle / noisy reward}\label{fig:noisy-vehicle}		
				\end{subfigure}\\
				\medskip
		\begin{subfigure}[t]{0.39\textwidth}
			\centering
			\includegraphics[width=\textwidth]{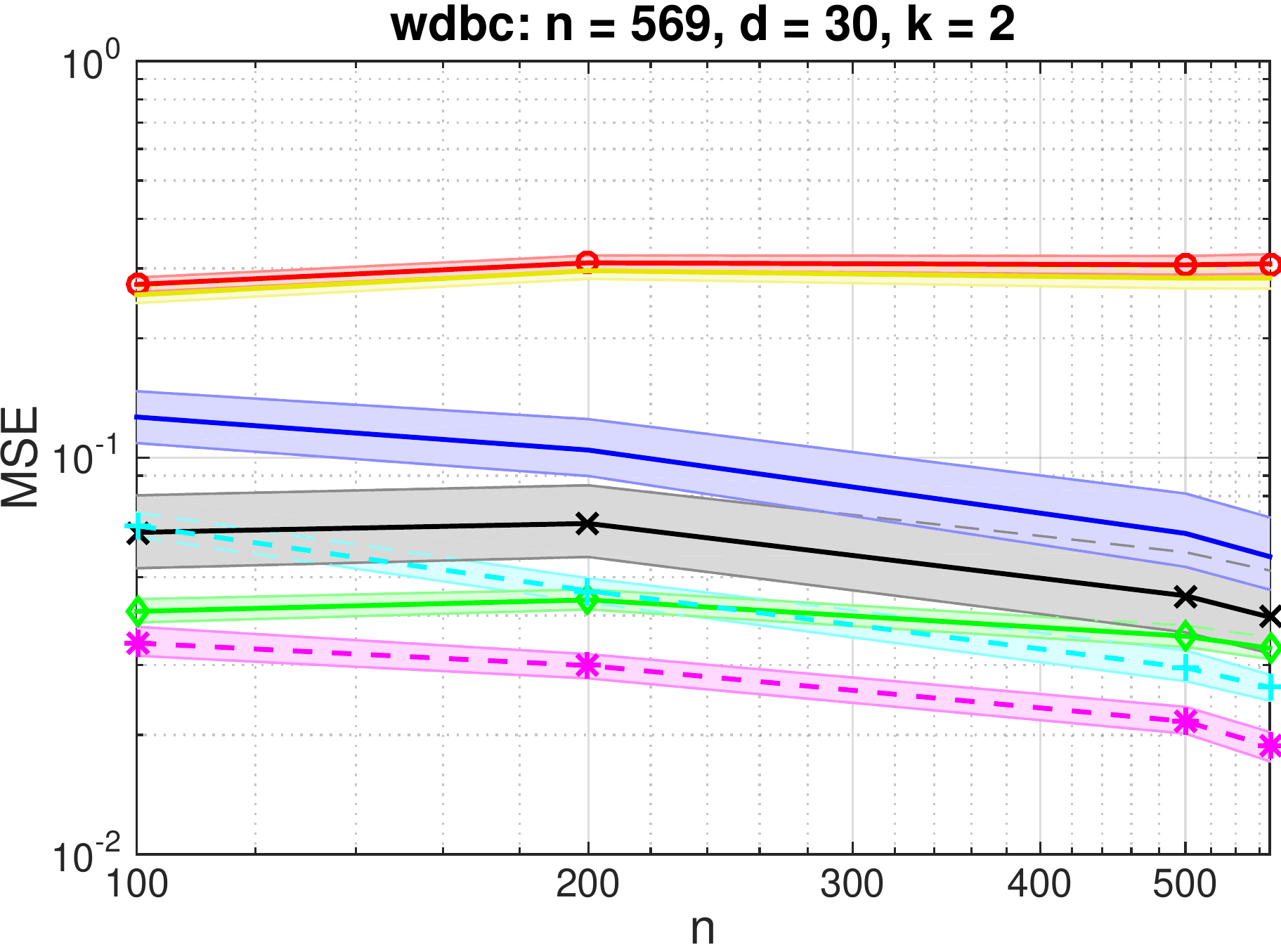}
			\figsqueeze\caption{wdbc / deterministic reward}\label{fig:raw-wdbc}		
		\end{subfigure}
\hspace{0.25in}
		\begin{subfigure}[t]{0.39\textwidth}
			\centering
			\includegraphics[width=\textwidth]{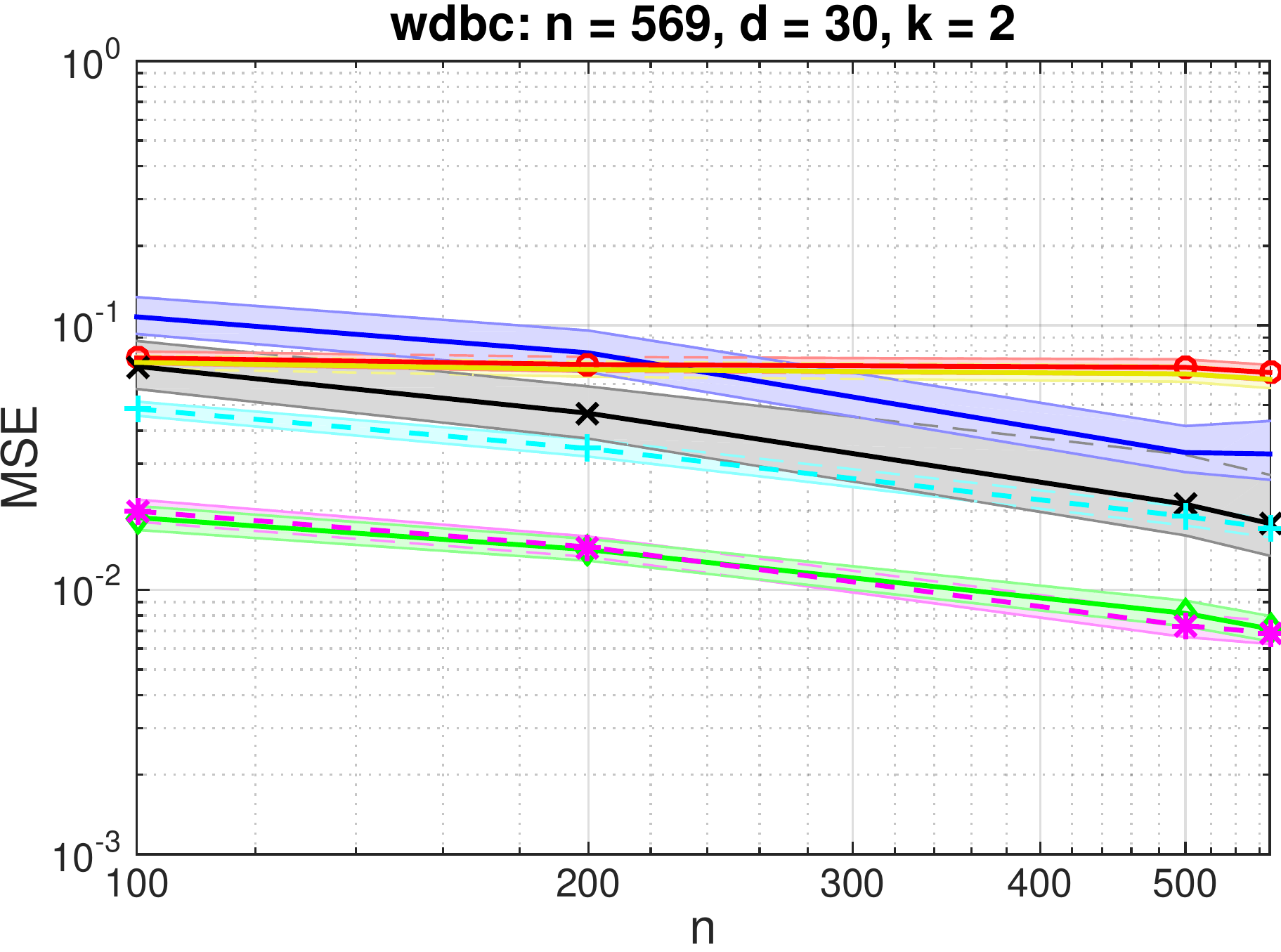}
			\figsqueeze\caption{wdbc / noisy reward}\label{fig:noisy-wdbc}		
		\end{subfigure}
	
\end{figure*}

\end{document}

{\color{OliveGreen}
\section{Formulating the minimax problem}\label{sec:formulation}

Off-policy evaluation is intrinsically a statistical estimation
problem, where the goal is to estimate $v^\pi$.  We study this problem
in a standard minimax framework: given $n$ i.i.d.\ samples according to a
policy $\mu$, what is the smallest mean square error (MSE) \emph{any}
estimator can achieve for evaluating a fixed policy $\pi$, in the
worst case over a particular class of data-generating
distributions. Formally, we seek to study
$$ \inf_{\hat{v}\in\text{all estimators}} \sup_{\text{A class of
    contextual bandit problems}} \E[(\hat{v} - v^\pi)^2].
$$
If an estimator matches the minimax risk (or matches up to constant), then we say it is minimax (or achieves minimax rate).
The tricky question that comes up in all minimax analysis is that what class of problems do we take the maximum over. If this class of problems is too large, then the minimax bound can be too conservative, but if it is too small, then the bound is not widely applicable.

To make matters worse, there are a lot of ways we can define a class of problems. Each contextual bandit problem is parameterized by $\pi,\mu, \cA, \cX, \cD_x$ and $\cD_{r|x,a}$. The question is which subset of these are we taking the maximum over in the minimax problem? \citet{li2015toward} defined a problem for each policy pair $(\pi,\mu)$ separately, for the case when there is one context, and in addition, assume the conditional (on an action) expectation and variance of the reward is bounded for any action.

A natural generalization for us is to consider one minimax problem for every $\cD_x$, $\mu$, $\pi$ separately so that we get problem-specific upper bounds and lower bounds.
To formulate our class of reward-generating
functions, assume we are given maps
$R_{\max}~:~\cX\times \cA \to \mathbb{R}_+$ and
$\sigma~:~\cX\times \cA \to \mathbb{R}_+$. The
class of conditional distributions $\cR(\sigma,R_{\max})$ is defined as
\begin{align*}
\cR(\sigma,R_{\max})
\coloneqq
\Bigl\{
\cD(r|x,a):\:
&
0\leq\E_D[r|x,a]\leq R_{\max}(x,a)
\text{ and }\\
&
\Var_D[r|x,a]\leq \sigma^2(x,a)
\text{ for all }x,a
\Bigr\}
.
\end{align*}

%We then maximize over a class of distributions of the reward:
%$$
%r|x,a \in  \cR(\sigma,R_{\max}) =  \left\{ f\; |\; 0\leq \E f(x,a)\leq R_{\max}(x,a),  \Var f(x,a) \leq \sigma^2(x,a) \text{ for all }x,a\right\}.
%$$
%Note that $\sigma,R_{\max}:  \cX \times \cA \rightarrow \R_+$ and are allowed to change over context and actions.
% Note that when we take $\sigma^2$ and $R_{\max}$ to be fixed constants and $\cD_x$ to be supported only on a small finite subsets of $\cX$, then the problem reduces to the setting for multi-arm bandits \citep{li2015toward}.

Formally, let an estimator be any function $\hat{v}:  (\cX,\cA,\R)^n \rightarrow \R$ that takes in the $n$ data points collected by policy $\mu$, the minimax risk is defined as
\begin{equation}
\label{eq:minimax_def}
R_n(\cD_x,\pi,\mu,\sigma,R_{\max})
\coloneqq
\adjustlimits
\inf_{\hat{v}}
\sup_{\cD(r|x,a)
	\in \cR(\sigma,R_{\max})}
\E\left[(\hat{v}-v^\pi)^2\right].
\end{equation}
A pair of matching upper and lower bounds of the minimax risk, which we will present in Section~\ref{sec:minimax}, should capture the difficulty of the model-free off-policy evaluation problem quite well.

We will also consider settings when there is auxiliary input to the estimator $\hat{v}$. In particular, it is often the case that there exists a model-based direct estimator of the expected reward (a value function) that's given to us (e.g., by training on a hold-out data set), or when we assume knowledge that the value function is within a class of functions (e.g., generalized linear, additive, smooth). In either cases, it is possible that we can do better than \eqref{eq:minimax_def}. We will revisit the point Section~\ref{sec:use_dm}, and then propose a class of new estimators.

\section{The limit of model-free off-policy evaluation}\label{sec:minimax}

 In this section, we investigate the information-theoretic limit of model-free off-policy evaluation as specified in \eqref{eq:minimax_def} and figure out the optimal estimators that nearly attains the minimax risk. The natural candidate is of course the IPS estimator \citep{horvitz1952generalization}:
\begin{equation}\label{eq:IPS}
\hat{v}^\pi_{\text{IPS}} =\sum_{i=1}^n \frac{\pi(a_i,x_i)}{\mu(a_i,x_i)} r_i.
\end{equation}
However, it is unclear in general whether IPS is the best we can do. In particular, could there be a more sophisticated estimator that favorably leverages the bias-variance tradeoff and achieves a better MSE?

\subsection{Multi-arm bandits}
We first look at the multi-arm bandits problem, where the both lower bound and upper bound are known and they matches asymptotically \citep{li2015toward}. The setting is as follows.

Let $k$ be the number of arms. Let $\mu$ be a policy distribution defined on $[k]$ and $\cD$ is some unknown distribution defined on $\R^k$. In each iteration we sample a reward vector $\bm{r}\sim \cD$ and an action $a \sim \mu$, then $\bm{r}_a$ is revealed. After $n$ rounds, we collect a data set
 $\{(a_1,r_1),...,(a_n,r_n)\}$. This can be thought of as a contextual bandits problem where $\cX$ is a singleton.

\citet{li2015toward} defined the minimax problem for multi-arm bandits and show that for sufficiently large $n$, the minimax rate
$$
R_n(\emptyset,\pi,\mu,\sigma^2,R_{\max}) =  \Omega(\frac{1}{n} \E_\mu \rho^2 \sigma^2).
$$
At the limit as $n\rightarrow \infty$, the lower bound is matched by the regression estimator
\begin{equation}\label{eq:reg}
\hat{v}^\pi_{\text{Reg}} = \sum_{a=1}^k \frac{\sum_{i=1}^n r_i \1_{\{a_i=a\}}}{\max\{n(a),1\}} \pi(a)
\end{equation}
which essentially estimates the empirical mean of the reward for each arm(and use $0$ when an arm is not seen at all in the data).
Interestingly, the importance sampling estimator \eqref{eq:IPS} is provably suboptimal, with a possibly very large asymptotic gap. Specifically, the variance of importance sampling estimator can be decomposed into two parts:
$$
\sup_{r|a}\Var{(\hat{v}_{\text{IPS}})} = \sup_{r|a} \left[\frac{1}{n}\E_\mu \Var(\rho r|a)  +  \frac{1}{n} \Var_\mu \E(\rho r|a)\right] = \frac{1}{n}\E_\mu \Var(\rho r|a)  +  \frac{1}{4n}\left[\E_\mu\rho^2\E(r|a)^2  - (v^\pi)^2\right].
$$
When we take supremum over $r|a$, the first term becomes $\frac{1}{n}\E_\mu \rho^2 \sigma^2$ which exactly matches the lower bound, but the second term is proportional to $\frac{1}{n}\E_\mu \rho^2 R_{\max}^2$ for almost every pair of $(\mu,\pi)$\footnote{One exception is that if $\pi$ always chooses $a$ and $\mu(a)>0$, then the second term is $0$ for any reward function. In this trivial case, IPS is in fact optimal.}, making the estimator arbitrarily suboptimal as $R_{\max}$ becomes large.
\begin{comment}
 over the class of distribution in
$$\mathfrak{D}(\sigma^2,R_{\max}) :=\left\{\bm{r}\sim \cD \middle| |r_a|\leq R_{\max}, \Var(r_a)\leq \sigma^2 \text{ for }a =1,...,k\right\}$$.
\begin{equation}\label{eq:minimax_bandits}
  R_n(\pi,\mu, R_{\max},\sigma^2) = \inf_{\hat{v}} \sup_{\cD\in \mathfrak{D}(\sigma^2,R_{\max})}  \E_{\cD\otimes \mu} \left[\hat{v}(\pi,\mu,D^n)-v^\pi_{D}\right]
\end{equation}
This is a per-instance formulation, namely, a problem is defined for each pair of $(\pi,\mu)$ separately. Hopefully, the bound will only depend on properties these policies and there is a single algorithm that is minimax for all problems.
\end{comment}

%\citet{li2015toward}  showed that the the regression estimator is asymptotically optimal as $n  \E_{\cD\otimes \mu} \left[\hat{v}^\pi_{\text{Reg}}-v^\pi_{D}\right]$ converges to the Cramer-Rao lower bound  $\sigma^2\sum_{a} (\pi(a)^2/\mu(a))$ as $n\rightarrow \infty$, while the IPS estimator has a possibly very large asymptotic gap. On the other hand, when the number of data points are small, IPS has better dependence on the size of the action space $k$.

\subsection{Contextual bandits}

Contextual bandits is a more delicate problem as the number of state is infinite (or very large) so a crude reduction to a multi-arm bandits problem with number of arms equal to $|\cX||\cA|$ \citep{li2015toward} might not capture the hardness of the problem. In particular, the optimal regression estimator for multi-arm bandits now seems to be a really bad choice as we never see each context-action pair $(x,a)$ more than once. More formally, when $\cX$ is a continuous domain and $\cD_x$ is absolute continuous,
%and $\cD_x$ is a probability density,
\eqref{eq:reg} now becomes
$$
    \hat{v}_{\text{Reg}} = \int_{x\in \cX} \sum_{a=1}^k\frac{\sum_{i=1}^n r_i \1_{\{a_i=a, x_i=x\}}}{\max\{n(x,a),1\} } \pi(a|x)d\cD_x
$$
and $\hat{v}_{\text{Reg}} \equiv 0$ for any $n<\infty$. The epic failure of the naive regression estimator is not surprising, but does this mean that we need to search for a better estimator or is the IPS estimator already the best we can do?

In this section, we establish stronger information-theoretic lower bounds and show that the importance sampling estimator is in fact optimal (up to a constant) for contextual bandits. We first refine \citet{li2015toward}'s lower bound idea to get a lower bound that does not require the importance weight to be uniformly bounded.
\begin{theorem}[Lower bound 1]\label{thm:lowerbound2}
Let $r|a,x$ be any random variables satisfying $\Var(r|a,x) \leq \sigma^2(a,x)$, and $0 \leq \E(r|a,x)\leq R_{\max}(a,x)$ for all $a,x$. Then for any $\pi,\mu,\sigma^2$ satisfying $\E_\mu \rho^2\sigma^2 <\infty$, we have:
  $$
  %\min_{\hat{v}}\max_{r|x,a\in \cR(\sigma^2,R_{\max})} \E(\hat{v}-v^\pi)^2
  R_n(\cD_x,\pi,\mu,\sigma,R_{\max})
  \geq  \frac{\log 2 \E_\mu\rho^2\sigma^2}{8n} \left[1- \frac{\E_\pi \rho\sigma^2 \1_{\{\rho\sigma^2>R_{\max} \sqrt{n\E_\mu\rho^2\sigma^2/(2\log 2)}\}}}{\E_\pi\rho\sigma^2}\right]^2.
$$
\end{theorem}
The bound captures the intrinsic difficulty in the variance of reward.
The next result shows that the above bound is not tight and the dependence on $R_{\max}^2$ and $\E_\mu \rho^2$ is required for any estimators even if $\sigma^2\equiv 0$.

\begin{theorem}[Lower bound 2]\label{thm:lowerbound}
\begin{comment}
	Assume $\cX$ is a continuous domain with a $\sigma$-algebra and the $\cD_x$ is a density that is absolute continuous w.r.t. to the Lebesgue measure on $\cX$. Let $r|x,a$ be any measurable function to $\R$ such that $r|x,a\in [0,R_{\max}(x,a)]$ and denote $\rho(x,a):=\pi(x,a)/\mu(x,a)$.
	Then for each policy $\pi$, $\mu$, $R_{\max}$ such that $\rho(x,a)$ and $R_{\max}(x,a)$ are measurable functions and $\E_\mu \rho^2 R_{\max}^2 <\infty$, we have:
  $$
%  \min_{\hat{v}}\max_{r|x,a\in \cR(0,R_{\max})} \E(\hat{v} - v^{\pi})^2
  R_n(\cD_x,\pi,\mu,0,R_{\max})
  \geq  \frac{\log 2 \E_\mu\rho^2R_{\max}^2}{4n} \left[1-\frac{\E_{\pi} \left(\rho R_{\max} 1_{\left\{\rho R_{\max}> \sqrt{n \E_\mu (\rho^2R_{\max}^2)/(16\log 2)}\right\}}\right)}{\E_\pi\rho R_{\max}}\right]^2.$$
Moreover, if instead we assume finite $\cX$, if $\cD_x$ obeys that $\max_{x\in\cX}\P_{\cD_x}(x) \leq C/|\cX|$ for some universal constant $C$, then
\end{comment}
Let
%$\cX$ and $\cA$ are discrete sets\footnote{Even if $\cX$ is a continuous domain, e.g., $\R^d$, it can only be represented up to machine precision in practice. In fact, all our arguments directly extend to $\R^d$ when $\cD_x$ is a density. We make this simplifying assumption of discrete $\cX$ to avoid dealing with non-standard extensions to probability space and Lebesgue integral that are needed to even define the expectation for a continuum of independent random variables \citep[see, e.g.,][for one such extensions]{sun2006exact}.}
 $\cD_x$ obeys that $\max_{x\in\cX}\P_{\cD_x}(x) \leq C/|\cX|$ for some constant $C$. Let $r|x,a$ be any deterministic map with range $[0,R_{\max}(x,a)]$ and denote $\rho(x,a):=\pi(x,a)/\mu(x,a)$.
Then for each policy $\pi$, $\mu$, $R_{\max}$ such that $\E_\mu \rho^2 R_{\max}^2 <\infty$, we have:
  \begin{align*}
  %  \min_{\hat{v}}\max_{r|x,a\in \cR(0,R_{\max})} \E(\hat{v} - v^{\pi})^2
  &R_n(\cD_x,\pi,\mu,0,R_{\max})\\
  \geq&  \frac{\log 2 \E_\mu\rho^2R_{\max}^2}{8n} \left[1-\frac{8}{\log 2}\frac{Cn\log|\cX|}{|\cX|} - \frac{\E_{\pi} \left(\rho R_{\max} 1_{\left\{\rho R_{\max}> \sqrt{n \E_\mu (\rho^2R_{\max}^2)/(16\log 2)}\right\}}\right)}{\E_\pi\rho R_{\max}}\right]^2.
  \end{align*}
\end{theorem}

Note that in both lower bounds, we have a multiplicative correction term that is smaller than $1$ but goes to $1$ as $n\rightarrow \infty$ (assuming $|\cX|=\infty$ and for the moment). These correction terms are unavoidable because, when $n$ is very small, and $\rho$ is large $\E_\mu \rho^2 (\sigma^2 + R_{\max}^2)/n$ can be significantly larger than the risk of a trivial constant estimator that always return $0$.

We claim that the correction terms is larger than a constant as soon as $n$ is larger than a constant that depends only on the tail probability decay of $\rho R_{\max}$ and $\rho \sigma^2$. The following remark formalizes the argument.
\begin{remark}\label{rmk:correction_conv}
Let the correction term in Theorem~\ref{thm:lowerbound2} and Theorem~\ref{thm:lowerbound} be $C_1(n)$ and $C_2(n)$ respectively.
Assume $R_{\max}$ is bounded away from $0$ for all $x$ and $a$,
$\rho R_{\max}$ and $\rho \sigma^2$ both have a polynomial tail probability $\P(\cdot > b) \leq O(1/b^{\alpha})$ with any parameter $\alpha>1$\footnote{Note that this is almost necessary, because when $\alpha  \leq 1$ the expectation does not exist.}, then
$$
\max\{C_1(n),C_2(n)\} = 1 - O(n^{-\alpha/2}).
$$
If we further assume $\rho R_{\max}$ and $\rho \sigma^2$ has an exponential tail $\P(\cdot > b)\leq O(b\exp(-b))$, then
$$
\max\{C_1(n),C_2(n)\}=  1-O(\sqrt{n}e^{-\sqrt{n}}).
$$
\end{remark}
\begin{remark}
Theorem~\ref{thm:lowerbound} contains an additional term that depends on $|\cX|$. In contextual bandit, $|\cX|$ is often very large, and our result suggests that as long as $|\cX|/\log|\cX| = \Omega(Cn)$, a lower bound with the same scaling will apply. The assumption on $\cD_x$ is to some extent unavoidable because if $\cD_x$ is very skewed towards a small number of contexts $x$, then the problem essentially becomes a multi-arm bandit and we know that the minimax risk at $n\rightarrow \infty$ is independent to $R_{\max}$ \citep{li2015toward}.
\end{remark}

Now we show that the IPS estimator is nearly optimal.
\begin{lemma}[IPS estimator]\label{lem:IS_upperbound}
Let $\sigma$, $r$ and $\rho$ be functions of $(x,a)$, representing the reward variance, reward expectation and importance weights associated with $(x,a)$. Then
$$
\E(\hat{v}_{\mathrm{IPS}}-v^\pi)^2 \leq  \frac{1}{n}\left(\E_{\mu} \rho^2 \sigma^2 + \frac{1}{4}\E_\mu \rho^2 R_{\max}^2\right).
$$
\end{lemma}
\begin{proof}
The variance (therefore MSE)  of the importance weighting estimator can be decomposed into two parts
\begin{align*}
\Var \left(\hat{v}_{IPS}\right) &= \E_{\mu}(\Var(\hat{v}_{IPS} | x,a) + \Var_\mu (\E(\hat{v}_{IPS}|x,a))
 = \frac{1}{n}\E_{\mu} \rho^2 \sigma^2  +  \frac{1}{n}\Var_\mu( \rho r|x,a).
 %&= \frac{1}{n}\E_{\mu} \rho^2 \sigma^2   +  \frac{1}{n}[\E_\mu( \rho^2 r^2|x,a) - (v^\pi)^2]
\end{align*}
Now the second term,
\begin{align*}
\Var_\mu( \rho \E(r|x,a)) &=  \E \left[ (\rho \E(r|x,a) - \E_\mu \rho \E(r|x,a))^2  \right] = \E \E \left[ (\rho \E(r|x,a) - \E_\mu \rho \E(r|x,a))^2 \middle| \rho R_{\max} = \lambda \right] \\
&\leq \E\Big(\frac{\lambda^2}{4} \Big|  \rho R_{\max} = \lambda \Big) = \frac{1}{4}\E_\mu \rho^2 R_{\max}^2.
\end{align*}
\end{proof}
Lemma~\ref{lem:IS_upperbound} together with the mean of Theorem~\ref{thm:lowerbound2}~and~\ref{thm:lowerbound} characterizes the minimax rate up to a constant.
\begin{corollary}
For sufficiently large $|\cX|$ and $n$:
  $$\inf_{\hat{v}}\sup_{r|a,x \in \cR(\sigma^2,R_{\max})} \E_\mu(\hat{v} - v^{\pi})^2 = \Theta\left[\frac{1}{n} \left(\E_{\mu} \rho^2 \sigma^2 + \E_\mu \rho^2R_{\max}^2\right) \right]. $$
\end{corollary}

%
%\subsection{Extension to Reinforcement learning}
%
%Contextual bandits is a special case of reinforcement learning (RL) with no dynamics, or when the dynamics do not depend on the actions. Therefore the policy-evaluation lower bound naively applies to RL with infinite state-space.
%In more complex settings where actions can affect the state transitions, the results here do not apply directly, since the marginal probability of context/state $x$ at time $t=2,...,H$ also changes with different policies. We therefore defer careful treatment of the off-policy evaluation in RL to future work.
}

\section{Results that ends up not too useful.}

\begin{lemma}[Adapted from {\citet[Theorem~5]{li2015toward}}]
Let $\hat{v}$ be any estimator that takes in $\mu,\pi$ and the data $\{(x_i,a_i,r_i)\}_{i=1,...,n}$ collected by running policy $\mu$.
$$
R(\mu,\pi,\cD_{x}):=\inf_{\hat{v}}\sup_{\substack{r :\sup_{x,a}|\E r(x,a)|\leq 1,\\
 \sup_{x,a}\Var(r(x,a))\leq \sigma^2}}  \E(\hat{v} - v^{\pi})^2 = \Omega\left[\frac{\sigma^2}{n} \E_{\mu} \frac{\pi(a|x)^2}{\mu(a|x)^2}\right].
$$
\end{lemma}

This lower bound does not capture the variation due to the distribution of $x$ and the choice of $a$ by $\mu$, therefore is expected to be loose especially when $\sigma^2$ is small. This does not matter in the case when $|\cX|$ is finite, since one can (as $n$ gets large) hope to see all actions at every $x\in \cX$.

A more naive lower bound that captures that part can be obtained from the special case when $\mu = \pi$ and take $\E r(x,a)$ such that if we marginalize over $a\sim \mu(a|x)$, $\E r(x,a)\sim \text{Ber}(R_{\max},p)$ (e.g., for every new $x$ we can toss a coin and decide whether we receive $0$ or $R_{\max}$  expected reward, independent of $a$). Apparently $v^{\pi}=v^{\mu} = pR_{\max}$ now, but even if $\sigma^2 = 0$, the minimax risk of estimating Bernoulli mean:
\begin{lemma}
Let $\hat{p}$ be any function from $\R^n\rightarrow \R$ that takes in i.i.d.\ sample of size $n$ from $\mathrm{Bernoulli}(p$).
$$
\min_{\hat{p}}\max_{p\in [0,1]} \E(\hat{p}-p)^2 = \frac{n}{4(n+\sqrt{n})^2}.
$$
\end{lemma}
This bound applies to all $D_x$ that has a density (so that we do not see any $x$ more than once).

We can combine the two bounds into the following lemma
\begin{lemma}\label{lem:cont_lowerbound1}
Assume $\cD_{x}$ is a density, then we have
$$
R(\mu,\pi,\cD_{x}) = \Omega(\frac{\sigma^2}{n} \E_{\mu} \frac{\pi(a|x)^2}{\mu(a|x)^2}  + \frac{R_{\max}^2n}{4(n+\sqrt{n})^2}).
$$
\end{lemma}
This bound is also unsatisfactory because the importance scores should somehow enter the second term as well.
Recall that the variance of the importance sampling estimator is:
\begin{align*}
\frac{1}{n}\E\Var(\rho r|x,a)  + \frac{1}{n}\Var\E(\rho r|x,a) = \frac{\sigma^2}{n} \E_{\mu} \frac{\pi(a|x)^2}{\mu(a|x)^2}   +  \frac{1}{n}\Var(\rho \E(r|x,a))
\end{align*}
For problems where $\rho$ and $\E(r|x,a)$ are independent, we can write the second term into:
$$
\Var \rho \Var(\E(r|x,a))   +  \Var\rho  \E(r|x,a)^2  + \Var(\E(r|x,a)) \E(\rho)^2.
$$
It will be nice to have a lower bound that depends on some of these terms. $\E\rho = 1$ by definition of $\rho$.

%To see this, first show that the empirical mean is a sufficient statistic.
%Then by Berry-Esseen Thm, get that it is close to a Gaussian with the same variance, therefore we can invoke the Cramer-Rao bound to get a lower bound. It remains to calculate the variance, for which we can expand into two terms and calculate each. Note that the Bernoulli distribution with $p=0.5$ has the largest variance among all distributions in the bounded domain.

{\color{blue}-----------------DISCUSSION BELOW ----------------------------------\\
I had been thinking along the line of reduction to Bernoulli minimax problem. If we can estimate $v^\pi$ with MSE $\epsilon_n$ using estimator $\hat{v}$,  then the idea is to construct a special case where we can use $\hat{v}$  to form an estimator of $v^\mu$, whose performance is limited by the minimax risk of estimating Bernoulli mean.

Example:
Suppose we fix $v^\pi  = p$.  At every context $x_i$,  choose actions uniformly at random until their cumulative probability in $\pi$ reach $p$. For the last action $a$, toss a coin with probability proportional to the amount of overshoot over $\mu(a)$ and if it is a tail include it in $A_i$, otherwise drop it from $A_i$. Then the assignment is assign $R_{\max}$ if $a_i \in A_i$ and  $0$ otherwise. The expectation of the choices can be thought of as a ``soft'' subset.

Note that conditioned on the context, once $p$ is known we can exactly calculate $\mu(A_i)$ or run simulation to get it using $p$.

Now suppose we have an estimator of $p$, namely $\hat{p}$.  Then we can use it to replace $p$ in such calculations and come up with an estimator of $\mu(A_i)$ for every $i$.  Note that  $\E \frac{1}{n} \sum_{i=1}^n  \mu(A_i)   = v^\mu$.  So when we replace $\mu(A_i)$ with its estimator using $\hat{p}$ and that gives us an estimator of $v^\mu$.

Note that even if $p$ is exact, $\mu(A_i)$ is still random due to the context distribution and how we choose $A_i$, but since it is not always giving us points at the boundary, it should have a much smaller variance than the $\bar{r}=1/n \sum_{i=1}^n r_i$. More concretely, let $\sigma^2=0$
$$
\Var{\bar{r}} = \frac{1}{n}  \left[\E_{x\sim \cD_x}\Var_{a\sim \mu(x)}(r|x,a)   +  \Var_{x\sim \cD_x}\E_{a \sim \mu(x)}(r|x,a) \right]  = \frac{1}{n}  \left[\E_{\cD_x}\Var_{a\sim \mu(x)}(r|x,a)  +  \Var_{\cD_x} \mu(A_i) \right].
$$
while the variance
$$\Var{\frac{1}{n} \sum_{i=1}^n  \mu(A_i)}  = \frac{1}{n}\Var_{\cD_x} \mu(A_i). $$

The risk lower bound of this estimator will translate to a risk lower bound for estimating $p  = v^\pi$.
It is however unclear at the moment, how to construct such set $A_i$ so that we get a meaningful bound (we need to be able to analytically work out the mapping from $p$  to $\mu(A_i)$.

-----------------END OF DISCUSSION---------------------------}\\

\begin{lemma}\label{lem:deterministic}
For deterministic policy $\pi$, we have that
$$R(\mu,\pi,\cD_x) = \Omega\left(\frac{R_{\max}^2}{n\E \mu(\pi(x_i))}\right).$$
\end{lemma}
\begin{proof}
Consider only deterministic policy $\pi$. Denote the number of observed actions such that $a_i = \pi(x_i)$ by $n^*$. First we reduce the problem to a Bernoulli estimation problem by considering the special $r|a,x$ such that $r = R_{\max}$ with probability $p$ whenever $a = \pi(x)$ and $0$ otherwise. Note that for simplicity, we assume this is a deterministic reward. Clearly, the value of policy $\pi$ is equal to $p$. Since we observe each $x$ only once, this problem reduces to estimating the Bernoulli mean using the subset of the given data where $a_i = \pi(x_i)$. Formally, we have
%
%\begin{align*}
%  &\min_{\hat{v}}\max_{0 \leq r|a,x\leq R_{\max}} \E(\hat{v} - v^\pi)^2 \\
%  \geq& R_{\max}^2 \min_{\hat{p}}\max_{p\in [0,1]} \E(\hat{p} - p)^2  \\
%   =& R_{\max}^2\min_{\hat{p}}\max_{p\in [0,1]} \E\E \left[(\hat{p}(n^*) - p)^2 \middle| n^*\right]
%\end{align*}

\begin{align*}
  &\min_{\hat{v}}\max_{r|a,x} \E(\hat{v}(\{x_i,a_i,r_i\}_{i=1}^n) - v^\pi)^2 \geq R_{\max}^2 \min_{\hat{p}}\max_{p\in [0,1]} \E(\hat{p}(\{x_i,a_i,r_i\}_{i=1}^n) - p)^2  \\
   =& R_{\max}^2 \min_{\hat{p}}\max_{p\in [0,1]} \E(\hat{p}(\{x_i,r_i\}_{i: a_i=\pi(x_i)}) - p)^2 \\
   =& R_{\max}^2\min_{\hat{p}}\max_{p\in [0,1]} \E\E \left[(\hat{p}(\{x_i,r_i\}_{i: a_i=\pi(x_i)}) - p)^2 \middle| |\{i: a_i=\pi(x_i)\}|\leq n\E\mu(\pi(x_i))\right]\\
   \geq& R_{\max}^2\min_{\hat{p}}\max_{p\in [0,1]}  0.5 \E \left[(\hat{p}(\{x_i,r_i\}_{i: a_i=\pi(x_i)}) - p)^2 \middle| |\{i: a_i=\pi(x_i)\}|\leq n\E\mu(\pi(x_i))\right]\\
   \geq& R_{\max}^2\min_{\hat{p}}\max_{p\in [0,1]} 0.5 \E \left[(\hat{p}(\{x_i,r_i\}_{i: a_i=\pi(x_i)}) - p)^2 \middle| |\{i: a_i=\pi(x_i)\}|= n\E\mu(\pi(x_i))\right]\\
   =& \Omega\left(\frac{R_{\max}^2}{n\E \mu(\pi(x_i))}\right).
\end{align*}
In Line 2, we make use of that the sufficient statistic for estimating $p$ is those that $a_i$ agrees with $\pi(x_i)$.
Note that in Line 4, we used the fact that the median of the binomial distribution very close to its mean.
\end{proof}

\begin{remark}[Importance sampling estimator is nearly optimal]
  When we have deterministic policy $\pi$ to evaluate, deterministic reward given $(x,a)$ specified by $r(x,\pi(x)) \sim \text{Ber}(p,R_{\max})$, the maximum variance of the importance sampling estimator is
  $$
  \max_{p\in[0,1]} \frac{R_{\max}^2}{n} \left(p \E \frac{1}{\mu(\pi(x))} - p^2\right) = \begin{cases}
  \frac{R_{\max}^2}{n}\E \frac{1}{\mu(\pi(x))} & \text{ if }\E \frac{1}{\mu(\pi(x))} \geq 2\\
\frac{R_{\max}^2}{n} \frac{(\E \frac{1}{\mu(\pi(x))})^2}{4} & \text{otherwise.}
  \end{cases}
  $$
  In the region where $\pi(x)$ and $\mu(x)$ are close, the estimator converges quadratically to the lower bound $\frac{R_{\max}^2}{4n}$, while when the discrepancy is large, it depends linearly in $\E \frac{1}{\mu(\pi(x))}$. This almost matches the lower bound of minimax rate, except that the expectation is taken at a wrong place. When $\mu$ is a uniform exploration policy the expectation can be dropped and the upper bound and lower bound matches.
\end{remark}

Now we will extend the lemma above to any general $\pi$.
\begin{lemma}
Let $K$ be the total number of actions. Any policy $\pi$ obeys
$$\pi = \sum_{i=1}^{\infty} w_i\pi_{i}$$
where $\pi_{i}$ are deterministic policies, $\sum_{i=1}^\infty w_i=1$, for any $i$, $w_i\geq 0$ and
$$ \frac{\prod_{j=1}^{i-1}{1-w_j}}{K} \leq  w_i\leq (1-1/K)^{i-1}$$.
\end{lemma}
\begin{proof}
  It suffices to construct such a decomposition. For each $x$ take the $\pi_1(a|x) = \argmax_a \pi(a|x)$,  assign $w_1=\min_{x}\max_{a}\pi(a|x)$.
  By pigeon-hole principle, $w_1\geq \frac{1}{K}$. So the remaining probability mass is $1-w_1$ which is at most $1-1/K$. Recursively use the same construction, we get the desired property.
\end{proof}

\begin{lemma}
Any policy can be decomposed into a sum of $K$ weighted deterministic policies that are mutually ``orthogonal''.
\end{lemma}
\begin{proof}
  Consider the following greedy construction:
  For each $x$, recursively choose $\argmax_a \pi(a|x)$ and assign weight $\pi(a|x)$.
  A more naive reconstruction is to simply decompose $\pi$ over $a$.
\end{proof}

\begin{lemma}\label{lem:weighted_deter}
  Let $\pi$ be a weighted deterministic policy with weight $w(x)$. The value function $v^{\pi} = \E w(x) r(\pi(x),x)$. Then
  $$
  \min_{\hat{v}}\max_{r|a,x} \E(\hat{v} - v^{\pi})^2 = \Omega\left( \frac{(\E w(x))^2 R_{\max}^2}{n\E\mu(\pi(x))}\right)
  $$
\end{lemma}
\begin{proof}
Let $\hat{v}'$ be the class of privileged estimator that knows $\E w(x)$ besides the given data available to $\hat{v}$
\begin{align*}
&\min_{\hat{v}}\max \E(\hat{v} - v^\pi)^2 \geq \min_{\hat{v}}\max_{p\in[0,1]} \E (\hat{v} - p\E w(x))^2\\
\geq& \min_{\hat{v}'}\max_{p\in[0,1]} \E (\hat{v}' - p\E w(x))^2 =  \min_{\hat{v}'}\max_{p\in[0,1]} (\E w(x))^2\E (\frac{\hat{v}'}{\E w(x)} - p)^2 \\
=& (\E w(x))^2 \min_{\hat{p}}\max_{p\in[0,1]} \E (\hat{p}-p)^2
\end{align*}
where $\hat{p}$ only sees the examples when $\mu(x) = \pi(x)$. By the same argument in Lemma~\ref{lem:deterministic}, we get a lower bound in the statement.
\end{proof}
\begin{comment}
\begin{theorem}[Lower bound for general policy]
  For any general policy $\pi$,
  $$\min_{\hat{v}}\max_{r|a,x} \E(\hat{v} - v^{\pi})^2 = \Omega\left( \frac{R_{\max}^2}{n}  \sum_{a\in \cA}\frac{(\E \pi(a|x))^2 }{\E\mu(a|x)}\right)$$
\end{theorem}
\begin{proof}
Decompose the policy $\pi$ into a sum of $K$ weighted deterministic policies according by $a$, so that $\pi_a$ always output $a$ with weight $\pi(a|X)$. Since the data can be partitioned according to the actions taken,

\end{proof}
\end{comment}

\subsection{Complex version of the upper bound proof}

The idea is to use IPS to estimate the conditional expectation for $a_i\in A_i$ and use the oracle to estimate the other half for $a_i\notin A_i$. For short, we denote the importance weight at instance $i$, $\frac{\pi(a_i|x_i)}{\mu(a_i|x_i)} =: \rho_i$. Abuse the notation and use $A_i$ to denote the event over $\cD_x \times \mu$ such that $A_i$ is true if $a_i\in A_i$ and false otherwise. Moreover, denote the conditional likelihood ratio $ \frac{\pi(a_i|x_i,A_i)}{\mu(a_i|x_i,A_i)} :=\rho_i'$. \red{[CORRECTION] This should be $ \rho_i' :=\frac{p_{\pi}(a_i,x_i | A_i)}{p_{\mu}(a_i,x_i|A_i)} $ for the identity below to be correct.}

\begin{comment}
Note that if we fix $\pi$ and $\mu$, $\rho_i$ and $\rho_i'$ are both random variables induced by $\cD_x \times \mu$. They are not unrelated. In particular, we show that
\begin{lemma}
Assume $\tau>1$
$
\rho_i'\leq \tau.
$
\end{lemma}
\begin{proof}
For each $x$, denote the likelihood ratio $\rho_a =  \pi(a|x_i)/\mu(a|x_i)$. Without loss of generality, for action $a,b\in [K]$, let $\rho_a<\rho_b$ whenever $a<b$. Then at any threshold $\tau\geq 1$, there is a $T\in [K]$ such that $\rho_T\leq \tau$. Conditioning on event $\rho_a\leq \tau$ is essentially conditioning on $a\leq T$. Now the results follows since for each $T$ and each context $x$
$$
\sup_{a}\frac{\pi(a|x,a\leq T)}{\mu(a|x,a\leq T)}  = \frac{\pi(a_T|x,a\leq T)}{\mu(a_T|x,a\leq T)} =\frac{\pi(a_T|x)}{\mu(a_T|x)}\cdot\frac{1-\sum_{a>T}\mu(a|x)}{1-\sum_{a>T}\pi(a|x)}\leq \frac{\pi(a_T|x)}{\mu(a_T|x)} \leq \tau,
$$
where we used the fact that $\tau\geq 1$.
\end{proof}
\end{comment}

%\begin{align*}
%  \E\hat{v} =&  \E(r_i\rho'_i | A_i)\P(A_i)+ \E(\hat{r}^{\pi'}(x_i)|A_i^c)(1-\P(A_i)) \\
%  =&\E(r_i\rho'_i | A_i)\P(A_i) + \E(r^{\pi'}(x_i)|A_i^c)(1-\P(A_i))  + \E(\hat{r}^{\pi}(x_i)-r^{\pi'}(x_i)|A_i^c)(1-\P(A_i))\\
%  =&\E(r^{\pi}(x_i) | A_i)\P(A_i) + \E(r^{\pi'}(x_i)|A_i^c)(1-\P(A_i))  + \E(\hat{r}^{\pi'}(x_i)-r^{\pi'}(x_i)|A_i^c)(1-\P(A_i))
%\end{align*}

We first look at the population quantity and then the estimator will be a natural plug-in estimator. By law of total expectation and then apply the IPS estimator to the conditional distribution, we get
\begin{align*}
v^\pi =& \E_{\pi}r =  \E_\pi [ \E(r|x,a) | a\in A_x] \P_\pi(a\in A_x) + \E_\pi [\E(r|x,a) | a\in A_x^c] \P_\pi(a\in A_x^c)\\
=&\E_{\mu} [\E(r|x,a) \rho'_{x,a}| a\in A_x] \P_\pi(a\in A_x) + \E_\pi [\E(r|x,a) | a\in A_x^c] \P_\pi(a\in A_x^c).
%=&\E_{\mu}[r_a \rho'_{x,a} 1_{a\in A_x}]\frac{\P_\pi(a\in A_x)}{\P_\mu(a\in A_x)} +  \E_\pi [r_a | a\in A_x^c] \P_\pi(a\in A_x^c)\\
%=&\E_{\mu}[r_a\rho_{x,a}1_{a\in A_x}] + \E_{\mu}[r_a\rho_{x,a}1_{a\in A_x^c}]
\end{align*}
The first term cannot be estimated directly because\red{we do not know $\rho'_{x,a}$, despite the complete knowledge of $\pi$ and $\mu$.} However, we can It turns out that
$$
\E_{\mu} [r_a \rho'_{x,a}| a\in A_x] \P_\pi(a\in A_x)=\E_{\mu}[r_a \rho'_{x,a} 1_{a\in A_x}]\frac{\P_\pi(a\in A_x)}{\P_\mu(a\in A_x)}=\red{\E_{\mu}[r_a\rho_{x,a}1_{a\in A_x}]}.
$$
We can thus estimated it by the finite sum over the observed data.

For the second term, we can estimate the conditional expectation directly using the oracle $\hat{r}$. This involves calculating $\sum_{a\in A_i^c} \hat{r}(x_i,a)\pi(a|x_i,A_i^c)$ for each observed $x_i$ and take the average. On the other hand, we can estimate $\P_\pi(a\in A_x^c)$ using the same data too. Putting these together, we have the following oracle-assisted estimator
$$
\hat{v}  = \frac{1}{n}\sum_{i=1}^n \left[r_i \rho_i 1_{\{a_i\in A_i\}}\right]  +
\left[\frac{1}{n}\sum_{i=1}^n\sum_{a\in A_i^c} \hat{r}(x_i,a)\pi(a|x_i,A_i^c)\right]
 \left[\frac{1}{n}\sum_{i=1}^n \P_\pi(a\in A_i^c|x_i)\right].
$$

The estimator is a little unsatisfactory because the second term involves a product of two things that are not independent to each other. To simplify the analysis, we define a different estimator based on leave-one-out estimate of the probability.
$$
\hat{v}_{OA}  = \frac{1}{n}\sum_{i=1}^n \left[r_i \rho_i 1_{\{a_i\in A_i\}}\right]  +
\left[\frac{1}{n}\sum_{i=1}^n\sum_{a\in A_i^c} \hat{r}(x_i,a)\pi(a|x_i,A_i^c) \frac{1}{n-1}\sum_{j\neq i} \P_\pi(a\in A_j^c|x_j)\right].
$$
Now let us analyze this estimator.
\begin{theorem}\label{thm:MSEbound}
Let $A_x$ be such that for any $a\in A_x$, $\rho = \pi(a|x)/\mu(a|x) \leq \tau$. Denote $\pi' := \pi(a|x,a\in A_x^c)$, and $\epsilon(a,x):= \hat{r}(a,x)-\E(r|a,x)$. Furthermore, assume $R_{\max}$ does not depend on $x,a$. Then for every $n =\Omega(\tau\P_{\pi}(a\in A_x^c)R_{\max}^2\log^2(n))$, we have
  $$
  \mathrm{MSE}(\hat{v}_{\mathrm{\OA}}) = O\left(\frac{R_{\max}^2\log^2(n)}{n} + \frac{\E_\mu \rho^2 \sigma^2 1_{\{a\in A_x\}} +  R_{\max}^2\E_\mu\rho^21_{\{a\in A_x\}}}{n}+  \left|\E_{\pi'} \epsilon(a,x) \right|^2 \P_\pi(a\in A_x^c)^2 \right).
  $$
\end{theorem}

\begin{proof}[Proof of Theorem~\ref{thm:MSEbound}]
  We first decompose the MSE to bias and variance.

  It is not hard to show that the bias is only on the second term and it is
\begin{equation}\label{eq:bias_bound_oa}
\left|\E\hat{v}' - v^\pi\right|  = \left|\E  \sum_{a=1}^K  (\hat{r}(x,a)-\E(r|x,a))\pi(a|x,a\notin A_x^c)\right| \P_\pi(a\in A_x^c) =  \left|\E_{\pi'} \epsilon(a,x_i) \right| \P_\pi(a\in A_x^c),
\end{equation}
where . The expectation is taken over $x\sim \cD_x$. This is potentially important because $\hat{r}$ might not behave uniformly well on all $x$.

By the law of total variance,
\begin{align*}
&\Var(\hat{v}_{\mathrm{\OA}}) = \E_\mu \Var(\hat{v}_{\mathrm{\OA}} | x_{1:n},a_{1:n}) + \Var_\mu \E(\hat{v}_{\mathrm{\OA}} | x_{1:n},a_{1:n})
= \underbrace{\E_\mu \Var\left(\frac{1}{n}\sum_{i=1}^n \left[r_i \rho_i 1_{\{a_i\in A_i\}}\right] \middle| x_{1:n},a_{1:n}\right)}_\text{Part (a)} \\
&+ \Var_\mu \left\{ \underbrace{\E\left(\frac{1}{n}\sum_{i=1}^n \left[r_i \rho_i 1_{\{a_i\in A_i\}}\right] \middle| x_{1:n},a_{1:n}\right)}_\text{Part (b)}  +  \underbrace{\left[\frac{1}{n}\sum_{i=1}^n\sum_{a\in A_i^c} \hat{r}(x_i,a)\pi(a|x_i,A_i^c) \frac{1}{n-1}\sum_{j\neq i} \P_\pi(a\in A_j^c|x_j)\right]}_\text{Part (c)}\right\}.
\end{align*}
Note that the two expressions in the second line are not conditional independent, but by Cauchy-Schwartz inequality, we have for any two random variables $X,Y$
$$\Var(X+Y) \leq \Var(X) + \Var(Y) + 2\sqrt{\Var(X)\Var(Y)} \leq 2 \Var(X) + 2\Var(Y).$$

We now bound the Part (a) and variance of the Part (b) by direct calculation, and then control the variance of Part (c) using concentration inequalities and union bound (since the expectations are bounded random variables).
\begin{equation}\label{eq:Evar_bound_oa}
\E_\mu \Var\left(\frac{1}{n}\sum_{i=1}^n \left[r_i \rho_i 1_{\{a_i\in A_i\}}\right] \middle| x_{1:n},a_{1:n}\right) \leq \frac{1}{n^2}\sum_{i=1}^n\E_{\mu}\rho_i^2\sigma_i^2 1_{\{a_i\in A_i\}} = \frac{1}{n}\E_{\mu}\rho^2\sigma^2 1_{\{a\in A_x\}}.
\end{equation}
\begin{equation}\label{eq:varE_bound_partb}
\Var_\mu \E\left(\frac{1}{n}\sum_{i=1}^n \left[r_i \rho_i 1_{\{a_i\in A_i\}}\right] \middle| x_{1:n},a_{1:n}\right) \leq \frac{1}{n} \E_\mu \rho^2(\E r|x,a)^2 1_{\{a\in A_x\}} \leq \frac{R_{\max}^2}{n} \E_\mu \rho^2 1_{\{a\in A_x\}}.
\end{equation}

It remains to deal with the variance of Part (c), which are not sum of independent random variables due to the leave-one-out construction. Part (c) however, contains only $n+1$ pre-specified empirical means which allows us to take the easy route through Hoeffding's inequality and union bound. First we verify the boundedness. For any $x_i$, we have $\hat{r}(x_i,a)\pi(a|x_i,A_i^c) \leq R_{\max}$. Moreover, $\P_{\pi}(a\in A_j^c | x_j) \leq 1$.

Now let $Y_1,...,Y_n$ and $Z_1,...,Z_n$ be random variables, then we have the following inequality
\begin{align*}
\frac{1}{n}\sum_{i=1}^n Y_i Z_i - \frac{1}{n}\sum_{i=1}^n \E Y_i \E Z_i &= \frac{1}{n}\sum_{i=1}^n Y_i (Z_i-\E Z_i) + \E Z_1 (\bar{Y}- \E \bar{Y}) \\
&\leq  \max_j(|Z_j-\E Z_j|) (\bar{Y}-\E\bar{Y}) +  \max_j(|Z_j-\E Z_j|) \E\bar{Y} +  |\bar{Y}- \E \bar{Y}|\E Z_1.
\end{align*}
Take $Z_i = \frac{1}{n-1}\sum_{j\neq i} \P_{\pi}(a\in A_j^c | x_j)$ and $Y_i = \sum_{a\in A_i^c} \hat{r}(x_i,a)\pi(a|x_i,A_i^c)$. $\E Z_i Y_i = \E_{\pi'}\hat{r}(x,a) \P_{\pi}(a\in A_x^c)$. Substitute into the above inequality, we get
\begin{align*}
&\left| \frac{1}{n}\sum_{i=1}^n\sum_{a\in A_i^c} \hat{r}(x_i,a)\pi(a|x_i,A_i^c) \frac{1}{n-1}\sum_{j\neq i} \P_\pi(a\in A_j^c|x_j) - \E_{\pi'}\hat{r}(x,a) \P_{\pi}(a\in A_x^c)\right| \\
\leq&  \left|\frac{1}{n}\sum_{i=1}^n \sum_{a\in A_i^c} \hat{r}(x_i,a)\pi(a|x_i,A_i^c)   -  \E_{\pi'}\hat{r}(x,a)\right|\max_j \left| \frac{1}{n-1}\sum_{i\neq j}\P_{\pi}(a\in A_i^c|x_i) - \P_{\pi}(a\in A_x^c)\right| \\
&+
\E_{\pi'}\hat{r}(x,a) \max_j  \left| \frac{1}{n-1}\sum_{i\neq j}\P_{\pi}(a\in A_i^c|x_i) - \P_{\pi}(a\in A_x^c)\right|\\
&+ \P_{\pi}(a\in A_x^c|x) \left|\frac{1}{n}\sum_{i=1}^n \sum_{a\in A_i^c} \hat{r}(x_i,a)\pi(a|x_i,A_i^c)   -  \E_{\pi'}\hat{r}(x,a)\right|.
\end{align*}
We will control the third and fourth line using Hoeffding's inequality, which will be both be $O(1/\sqrt{n})$ term. The second line is a $O(1/n)$ term and under the condition on $n$ being sufficiently large in the statement, it is strictly smaller than either the third or the fourth line. Applying Hoeffding's inequality (Lemma~\ref{lem:hoeffding}) to each of the predefined $n+1$ summations, we get that with probability $1-\delta$,
\begin{align*}
 |\text{Part (c)}| \leq \frac{3\max\{\E_{\pi'} \hat{r}(x,a),\P_\pi(a\in A_x^c)R_{\max}\}\log\frac{2(n+1)}{\delta}}{\sqrt{2n}}\leq \frac{3R_{\max}\log\frac{2(n+1)}{\delta}}{\sqrt{2n}}.
\end{align*}

%\frac{\tau R_{\max}\log(n/\delta)}{\sqrt{n}} + \frac{\E \sum_{a\in A_x^c}\hat{r}(x,a)\pi(a|x,A_i^c) \log(n/\delta)}{\sqrt{n}} + \frac{\P(a\in A_x^c | x)\log(n/\delta)}{\sqrt{n}}\right)
%\end{align*}

Take $\delta = 1/n$ and use the fact that $\E(\hat{v}_{\mathrm{\OA}}|x_{1:n},a_{1:n})$ is bounded by $R_{\max}$, we get
\begin{align}
\Var_\mu \E(\hat{v}_{\mathrm{\OA}}|x_{1:n},a_{1:n}) &\leq (1-\delta)\frac{9\R_{\max}^2 \log^2(2n(n+1))}{2n}  +  \delta \R_{\max}^2 = O\left(\frac{\R_{\max}^2\log^2(n)}{n}\right).\label{eq:varE_bound_partc}
\end{align}
Combine \eqref{eq:bias_bound_oa}\eqref{eq:Evar_bound_oa}\eqref{eq:varE_bound_partb} and \eqref{eq:varE_bound_partc} we get the stated MSE upper bound.
\end{proof}

\blue{[Discussion] The following lemma is in fact how we arrived at the near optimal feasible solution. Although it does not seem like we need it any more.}
\begin{lemma}\label{lem:solve_QP}
	If $\mathrm{ess\,sup}_{a,x}\rho \leq \frac{\E_\mu \rho^2}{4\sqrt{\epsilon}}$, then the optimal value of
	\begin{equation}\label{eq:optim_KL}
	\begin{aligned}
	\minimize_{\Delta} \quad& \E_\mu (\Delta(x,a)^2)\\
	\mathrm{subject\;to}\quad & \E_{0.5+\Delta}v^\pi - \E_{0.5}v^\pi = 2\sqrt{\epsilon}\\
	& 0 \leq \Delta\leq 0.5
	\end{aligned}
	\end{equation}
	is
	$
	4\epsilon/(R_{\max}^2\E_\mu\rho^2).
	$
	When this condition is not true, there exists a set $\Omega \subset \cX\otimes \cA$ that converges to $\emptyset$ as $\epsilon$ gets smaller such that we can write the optimal value of $\eqref{eq:optim_KL}$ as
	$$
	\frac{(2\sqrt{\epsilon} - 0.5R_{\max}\P_\pi((x,a)\in\Omega))^2}{R_{\max}^2\E_\mu (\rho^2 1_{(x,a)\notin \Omega})}  + 0.25 \P_{\mu}((x,a)\in\Omega).
	$$
\end{lemma}
\begin{proof}
	Let $a$, $b$ and $c$ are functions with compatible domain as $f$ (infinite dimensional vectors) and $K$ be a scalar.
	\begin{align*}
	\minimize_{f}& \langle a,f^2\rangle\\
	\text{subject to: }&  \langle b,f\rangle = K\\
	&0\leq f \leq c
	\end{align*}
	The semantics are that functions $a\geq 0$ $b\geq 0, c\geq 0$, $K$ are positive constants.
	Note that if $a(x)=0$ or $b(x)=0$ for any $x$, we can choose $f(x)=0$. If $c(x)=0$ for some $x$ we much choose $f(x)=0$.
	For the part of domain where $a,b,c>0$ (for light notation, we do not explicitly write this restriction in the derivations that follows).
	
	The Lagrangian
	$$\cL(f,u_1,u_2,v) = \langle a,f^2\rangle + \langle u_1,-f\rangle + \langle u_2, f-c\rangle  + v (K-\langle b,f\rangle).$$
	Any optimal solution $f^*$ much obeys KKT conditions including:
	$$
	f^* = \frac{u_1-u_2 + v b  }{2a} = \frac{vb}{2a} + \tilde{u}_1 - \tilde{u}_2
	$$
	as well as primal, dual feasibility and complementary slackness. Note that the reformulation of the Lagrange multipliers $\tilde{u}_1 = u_1/(2a)$ and $\tilde{u}_2 = u_2/(2a)$ is without loss of generality.
	
	$v$ will specify the normalization such that the first constraint $\langle b,f \rangle = K$ is true, and $u_1$ and $u_2$ specifies the adjustment due to the box constraint.
	
	Specifically, in cases when $0<f^*<c$, $v^*=\frac{2K}{\int b^2/a dx}$ and we have
	$$
	f^* = \frac{b}{2a}  \frac{2K}{\int b^2/a dx} = \frac{bK}{a \int (b/a)^2 a dx}.
	$$
	The corresponding optimal objective value is
	$$K^2\frac{\int (b/a)^2 a dx}{(\int (b/a)^2 a dx)^2} = \frac{K^2}{\int (b/a)^2 a dx}.$$
	
	When the above does not obey $0<f^*<c$ if becomes slightly more complicated and we do not have a closed-form solution.
	
	The solution involves iteratively projecting $f$ to $f'$ inside the box constraint (which will result in $\langle b,f'\rangle<K$) and renormalizing the new $f'$ into $f^{''}$ by increasing $v$ such that $\langle b,f^{''}\rangle=K$ (this may result in $f^{''}(x)>c$ for some $x$). Since $v$ monotonically increase and the active set of $x$ that is bounded at $f(x)=c$ only gets larger, the limit of this iterative procedure will converge at a fixed $v^*>0$ and a fixed subset $\Omega$ of the domain that obeys
	$$
	v \int_{\Omega^c}\frac{b^2}{2a} dx + \int_{\Omega} cb dx = K.
	$$
	Therefore, the optimal solutions are
	\begin{align*}
	v^* &= \frac{K - \int_{\Omega} cb dx}{\int_{\Omega^c}\frac{b^2}{2a} dx}\\
	f^*(x) &=  \begin{cases}
	\frac{b(x)}{a(x)} \frac{K - \int_{\Omega} cb dx}{\int_{\Omega^c}(b/a)^2 a dx} & \text{ when }x\in \Omega\\
	c(x) & \text{ otherwise.}
	\end{cases}
	\end{align*}
	And the optimal objective value is
	$$
	\frac{(K-\int_{\Omega}cb dx)^2}{\int_{\Omega^c}(b/a)^2 a dx} + \int_\Omega c^2 a dx.
	$$
	
	It remains to substitute the semantics of our specific problem into this. In our case: $a= \cD_x\otimes \mu$, $b = R_{\max}\cD_x\otimes \pi$, $c\equiv 0.5$ and $K= 2\sqrt{\epsilon}$. So suppose $\mathrm{ess\,sup}_{a,x}\frac{2\rho\sqrt{\epsilon}}{R_{\max}^2\E_\mu \rho^2} < 0.5$ (which will be the case eventually as $n$ gets large, since we will choose $\epsilon = O(1/n)$), then the optimal objective value is $\frac{4\epsilon}{R_{\max}^2\E_\mu\rho^2}$. Otherwise, for any $\epsilon$, there is a set $\Omega\subset \cX\otimes \cA$ such that the optimal value is
	$$
	\frac{(2\sqrt{\epsilon} - 0.5R_{\max}\P_\pi((x,a)\in\Omega))^2}{R_{\max}^2\E_\mu (\rho^2 1_{(x,a)\notin \Omega})}  + 0.25 \P_{\mu}((x,a)\in\Omega).
	$$
	This completes the proof.
\end{proof}

We first prove \Thm{lowerbound:rmax} because
\Thm{lowerbound:sigma} uses a very similar argument.

\begin{proof}[Proof of \Thm{lowerbound:rmax}]
We prove by reducing to a hypothesis testing (or binary classification) problem.
	
Let $P$ be a joint distribution over $(x,a,r)$ within the class of distribution $\cP = \{ \cD_x\otimes\mu_{a|x}\otimes r(a,x)\}$ where $r(a,x)$ is a deterministic map from $\cX\times \cA$ into $[0,R_{\max}]$. Since our problem is specified for each fixed $\cD_x$ and $\mu$, every $r(a,x)$ uniquely defines such a distribution $P$. In addition, we denote a class prior distributions over $\cP$, which can be parameterized by a parameter vector $\theta \in \Theta$. In particular, we will choose this prior distributions to be Bernoulli for each pair $(x,a)$ so $\theta: \cX\times \cA \rightarrow [0,1]$.

Let $\hat{v}$ be any function $(\cX,\cA,\R)^n \rightarrow \R$ and denote $\ell_P(\hat{v})=(\hat{v} - v^\pi)^2$. Note that $v^\pi$ is a function of $P$ that is why $\ell_P$ is parameterized by $P$.
We also define $\ell_\theta(\hat{v}) =(\hat{v} - \E_{\theta}v^\pi)^2$.

\begin{comment}
Now we discuss the two cases. First, when the context space is continuous. Under the specified assumption on the distribution of context, we can show that $\P_\theta(v^\pi = \E_{\theta}v^\pi) = 1$. This is presented later in Lemma~\ref{lem:transform_loss}. As a result, we have
\begin{equation*}%\label{eq:transform_loss}
\E_\theta \E(\ell_P(\hat{v})|P) = \E_\theta\E(\ell_\theta(\hat{v})|P).
\end{equation*}
\end{comment}

{\color{red}
	By the ``triangular inequality'' of square distance, we have
	\begin{equation}\label{eq:tranform_loss_finite}
	\E_\theta \E(\ell_P(\hat{v})|P) \geq \frac{1}{2} \E_\theta\E(\ell_\theta(\hat{v})|P) - \E_\theta(v^\pi - \E_\theta v^\pi)^2.
	\end{equation}
When specified assumptions in are satisfied, namely, $|\cX| = N$, the joint context and action distribution $\P_{\cD_x}(x) \leq C/N$ for any $x\in \cX$.
Substitute our result in Lemma~\ref{lem:transform_loss_finite} into \eqref{eq:tranform_loss_finite}, we get
\begin{equation}\label{eq:transform_loss_finite2}
\E_\theta \E(\ell_P(\hat{v})|P) \geq \frac{1}{2} \E_\theta\E(\ell_\theta(\hat{v})|P) - \frac{C\log(N)\E_{\mu}R_{\max}^2\rho^2}{N}.
\end{equation}
}

%Note that the both results above hold for any $\theta$ and they are different by a factor of $\frac{1}{2}$ if we ignore the second term in \eqref{eq:transform_loss_finite2}. We will proceed with the proof under the first set of assumptions and afterwards, we will describe how the second case would follow from exactly the same argument by changing only one step in the chain of inequalities.

By the fact that maximum is bigger than expectation and then \eqref{eq:transform_loss_finite2}, we get:
\begin{equation}\label{eq:bayes-minimax}
\sup_{P\in\cP} \E(\ell_P(\hat{v})|P) \geq \sup_{\theta\in\Theta}\E_{\theta} \E(\ell_P(\hat{v})|P) \geq \frac{1}{2}\sup_{\theta\in\Theta}\E_\theta\E(\ell_\theta(\hat{v})|P) - \frac{C\log(N)\E_{\mu}R_{\max}^2\rho^2}{N}.
\end{equation}
Now the problem transforms into finding a lower bound for $\sup_{\theta\in\Theta}\E_\theta\E(\ell_\theta(\hat{v})|P)$. By Markov's inequality on each $\theta\in\Theta$, we get:
\begin{align}\label{eq:markov}
\sup_{\theta\in\Theta}\E_{\theta} \E(\ell_\theta(\hat{v})|P) \geq \epsilon \sup_{\theta \in\Theta} \P_{\theta}(\ell_\theta(\hat{v}) \geq \epsilon).
\end{align}
It follows that for any two $\theta_1,\theta_2\in\Theta$
\begin{equation}\label{eq:class_error}
\sup_{\theta \in\Theta} \P_{\theta}(\ell_\theta(\hat{v}) \geq \epsilon) \geq \max_{i=1,2}\P_{\theta_i}(\ell_{\theta_i}(\hat{v})\geq \epsilon) \geq \frac{1}{2}\left(\P_{\theta_1}(\ell_{\theta_1}(\hat{v})\geq \epsilon)+\P_{\theta_2}(\ell_{\theta_2}(\hat{v})\geq \epsilon)\right).
\end{equation}

Now define a hypothetical binary classifier $\phi(\ell(\hat{v}))$ which outputs $1$ if $ \ell_{\theta_1}(\hat{v}) \leq  \ell_{\theta_2}(\hat{v})$ and $2$ otherwise. Suppose we choose $\theta_1,\theta_2$ such that $\E_{\theta_1}v^\pi - \E_{\theta_2}v^\pi \geq 2\sqrt{\epsilon}$. By the ``triangular inequality'' of square error,
$$
4\epsilon \leq (\E_{\theta_1}v^\pi - \E_{\theta_2}v^\pi)^2\leq 2 (\hat{v} - \E_{\theta_1}v^\pi)^2 + 2(\hat{v}-\E_{\theta_2}v^\pi)^2 = 2\ell_{\theta_1}(\hat{v}) + 2 \ell_{\theta_2}(\hat{v}).
$$
We argue that if
$ \ell_{\theta_1}(\hat{v})< \epsilon$ then $\ell_{\theta_2}(\hat{v})\geq \epsilon$ and the classifier $\phi$ returns $1$; similarly, if $\ell_{\theta_2}(\hat{v})<\epsilon$, the classifier returns $2$. As a result, $\P_{\theta_1}(\ell_{\theta_1}(\hat{v})\geq \epsilon)$ and $\P_{\theta_2}(\ell_{\theta_2}(\hat{v})\geq \epsilon)$ are upper bounds of the misclassification errors when the correct answer is $1$ and $2$ respectively. This classifier however must perform worse than the Bayes error rate and this gives us the a lower bound.

In particular, using Le Cam's argument, the Bayes error rate obeys
\begin{equation}\label{eq:leCam}
\int dP_{\theta_1}^n \wedge d P_{\theta_2}^n \geq \frac{1}{2}e^{-n D_{KL}(P_{\theta_1}\|P_{\theta_2})}.
\end{equation}
In our problem, the KL-divergence obeys
\begin{align*}
D_{KL}(P_{\theta_1}\|P_{\theta_2}) &= \int p(r;\theta_1(x,a)) p(x)\mu(a|x) \log \frac{p(r;\theta_1(x,a))}{p(r;\theta_2(x,a))} dr da dx \\
&= \E D_{KL}(\text{Ber}(\theta_1(x,a)\|\text{Ber}(\theta_2(x,a)))).
\end{align*}
Let $\theta_2(x,a)\equiv 0.5$ and $\theta_1(x,a)=\theta_2(x,a)+\Delta(x,a)$, by Lemma~\ref{lem:bernoulli_KL} we obtains an upper bound
$$
D_{KL}(P_{\theta_1}\|P_{\theta_2})  \leq \frac{1}{4} \E_\mu \Delta(x,a)^2.
$$
We want to find the largest possible $\epsilon$ such that $\frac{1}{4} \E_\mu \Delta(x,a)^2 \leq \frac{\log 2}{n}$.
Denote $\varepsilon:=2\sqrt{\epsilon}$, this problem can be formulated as
\begin{align}
\maximize_{\Delta}\quad &  \varepsilon \nonumber\\
\text{subject to:}\quad& \E_{\theta_2 + \Delta} v^\pi - \E_{\theta_2} v^\pi \geq \varepsilon\nonumber\\
&\frac{1}{4} \E_\mu \Delta(x,a)^2 \leq \frac{\log 2}{n}\label{eq:upperbound_KL}\\
& 0 \leq \Delta \leq 0.5\nonumber
\end{align}
Every feasible solution $(\varepsilon,\Delta)$ will yield a lower bound of interest. The idea is to construct one that is nearly optimal so that we have a strong lower bound.

Let $K$ be a non-negative scalar and take
$$\Delta = \min\left\{\frac{\rho R_{\max}  K}{\E_\mu\rho^2R_{\max}^2}, 0.5\right\}.$$
Due to the non-negativity of $\Delta$ and that $\min\{\cdot,0.5\} \leq 0.5$, the third constraint holds automatically for any $K$. When we take $K = \sqrt{4\log 2 \E_\mu\rho^2R_{\max}^2/n}$, we can check that $\Delta' = \frac{\rho R_{\max}  K}{\E_\mu\rho^2R_{\max}^2}$ satisfies the second constraint. Since $\Delta \leq \min\{\Delta',0.5\}$, $\Delta$ also satisfies the second constraint.

%Take $\Delta = \mathrm{Proj}_{\cdot \leq 0.5}\left(\frac{\rho K}{R_{\max} \E_\mu\rho^2}\right)$ for some scalar $K$. The third constraint holds by definition, since $\Delta$ is non-negative and is an output of the projection.
%Taking $K = R_{\max}\sqrt{\frac{4\log 2 \E_\mu \rho^2 }{n}}$  ensures that $\frac{\rho K}{R_{\max} \E_\mu\rho^2}$ satisfies the second constraint. Since the projection only makes it pointwise no-bigger, $\Delta$ also satisfies the second constraint.

A feasible $\varepsilon$ is therefore
\begin{align}
\varepsilon &= \E_{\theta_2 + \Delta} v^\pi - \E_{\theta_2} v^\pi = K - K\frac{\E_{\pi} \left[\rho R_{\max} 1_{\{\rho R_{\max} > \E_\mu(\rho^2R_{\max}^2)/(2K)\}}\right]}{\E_\pi\rho R_{\max}} \nonumber\\
&=\sqrt{\frac{4\log 2 \E_\mu\rho^2R_{\max}^2}{n}} \left[1-\frac{\E_{\pi} \left(\rho R_{\max} 1_{\left\{\rho R_{\max}> \sqrt{n \E_\mu (\rho^2\R_{\max}^2)/(16\log 2)}\right\}}\right)}{\E_\pi\rho R_{\max}}\right].\label{eq:varepsilon}
\end{align}

%Collecting the arguments so far, we get
Collecting the results in \eqref{eq:markov}\eqref{eq:class_error}\eqref{eq:leCam}\eqref{eq:upperbound_KL} and \eqref{eq:varepsilon}, we have
\begin{align*}
%\sup_{P\in\cP}\E(\ell_p(\hat{v})|P)
\sup_{\theta\in\Theta}\E_{\theta} \E(\ell_\theta(\hat{v})|P) &\geq \frac{\epsilon}{2}e^{-n D_{KL}(P_{\theta_1}\|P_{\theta_2})} \geq \frac{\varepsilon^2}{8} e^{-\log 2} \\
&= \frac{\log 2 \E_\mu\rho^2R_{\max}^2}{4n} \left[1-\frac{\E_{\pi} \left(\rho R_{\max} 1_{\left\{\rho R_{\max}> \sqrt{n \E_\mu (\rho^2\R_{\max}^2)/(16\log 2)}\right\}}\right)}{\E_\pi\rho R_{\max}}\right]^2.
\end{align*}
The proof of the first case is complete by substituting into \eqref{eq:bayes-minimax} and noting that this bound holds for any estimator $\hat{v}$.
\end{proof}

\begin{proof}[Proof of \Thm{lowerbound:sigma}]
	The proof uses nearly the same argument as in \Thm{lowerbound:rmax}, we can reduce the problem to a binary classification problem which tells the difference of two Gaussian means. The argument yields a slightly sharper bound (constant improvement) over \citet{li2015toward}.
	
	Let $P$ be the class of distributions specified by $\mu$, $\cD_{x}$, $\sigma^2$ and $R_{\max}$ as before.
	Define define the class of distributions that are normals with mean vector $\theta$ and variance $\sigma^2$.
	Denote $P_\theta = \cD_x \otimes \mu_{a|x}\otimes \cN(\theta(x,a),\sigma^2) \in \cP_\Theta$. Clearly, $\cP_{\Theta}\subset \cP$. Using the same chain of arguments (it's actually simpler as we do not need to deal with the differences of $\ell_P$ and $\ell_\theta$.)
	\begin{align*}
	\sup_{P\in \cP}\ell_P(\hat{v}) &\geq \sup_{\theta\in\Theta}\ell_{P_\theta}(\hat{v}) \geq \epsilon \sup_{\theta\in\Theta} \P_{\theta}(\ell_{P_\theta}\geq \epsilon)\\
	&\geq \epsilon \max_{i=1,2} \P_{\theta_i}(\ell_{P_{\theta_i}}\geq \epsilon)\\
	&\geq \frac{\epsilon}{2} \left[\P_{\theta_1}(\ell_{P_{\theta_i}}\geq \epsilon)+\P_{\theta_2}(\ell_{P_{\theta_i}}\geq \epsilon)\right]\\
	&\geq \epsilon \int dP_{\theta_1}^n\wedge dP_{\theta_2}^n \geq \frac{\epsilon}{2}e^{-n D_{KL}(P_{\theta_1}\|P_{\theta_2})}
	\end{align*}
	The inequality in the last line holds if $\Delta = \theta_2-\theta_1$ are chosen such that $v^\pi_{\theta_2} - v^\pi_{\theta_1} = \E_{\pi}\Delta \geq 2\sqrt{\epsilon}.$
	
	By the formula for KL-divergence of two Gaussians
	$$
	D_{KL}(P_{\theta_1}\|P_{\theta_2}) = \E_\mu \frac{\Delta^2}{2\sigma^2}.
	$$
	
	Denote $\varepsilon:=2\sqrt{\epsilon}$, it remains to find a (nearly optimal) feasible solution to:
	\begin{align*}
	\maximize_{\Delta}\quad &  \varepsilon \nonumber\\
	\text{subject to:}\quad& \E_{\pi} \Delta(x,a) \geq  \varepsilon\nonumber\\
	&\E_\mu \frac{\Delta(x,a)^2}{2\sigma^2} \leq \frac{\log 2}{n}\\
	& 0 \leq \Delta \leq R_{\max}\nonumber
	\end{align*}
	Take $\Delta = \min\left\{ \frac{\rho \sigma^2 K}{\E_\mu (\rho^2\sigma^2)}, R_{\max} \right\}$. Check that it satisfies the third inequality automatically and satisfies the second when we choose $K = \sqrt{2\log 2 \E_\mu \rho^2\sigma^2 / n}$.
	
	Let $\mathrm{Id}(\cdot)$ be the identity operator.
	Denote $\Delta = \frac{\rho \sigma^2 K}{\E_\mu (\rho^2\sigma^2)}  - \left[\mathrm{Id}(\cdot) - \min\{\cdot, R_{\max}\}\right] (\frac{\rho \sigma^2 K}{\E_\mu (\rho^2\sigma^2)})$ and for the first inequality to hold, we can choose
	$$
	\varepsilon = K  -\E_{\pi}\left[ \frac{\rho \sigma^2 K}{\E_\pi\rho\sigma^2} 1_{\{\frac{\rho \sigma^2 K}{\E_\pi (\rho\sigma^2)} > R_{\max}\}}\right]= \sqrt{\frac{2\log 2 \E_\pi\rho\sigma^2}{n}} \left[1- \frac{\E_\pi \rho\sigma^2 1_{\{\rho\sigma^2>R_{\max}\sqrt{n\E_\pi\rho\sigma^2/(2\log 2)}\}}}{\E_\pi\rho\sigma^2}\right].
	$$
	In summary, we get
	$$
	\sup_{P\in \cP}\ell_P(\hat{v}) \geq \frac{\log 2 \E_\pi\rho\sigma^2}{8n} \left[1- \frac{\E_\pi \rho\sigma^2 1_{\{\rho\sigma^2>R_{\max} \sqrt{n\E_\pi\rho\sigma^2/(2\log 2)}\}}}{\E_\pi\rho\sigma^2}\right]^2
	$$
	as required.
\end{proof}

\begin{proof}[Proof of Remark~\ref{rmk:correction_conv}]
	By completing the square we know $(1-a)^2 \geq 1-2a$ so it remains to upper bound $a$.
	$a$ in both $C_1(n)$ and $C_2(n)$ can be upper bounded by the following form with a random variable $X$ and a constant $b$.
	$$
	\frac{\E(X 1_{\{X>\sqrt{n}b\}})}{\E(X)}.
	$$
	In $C_1(n)$, $X=\rho \sigma^2$, $b=c\sqrt{\E_{\mu}(\rho^2\sigma^2)/(2\log 2)}$ and in $C_2(n)$, $X=\rho R_{\max}$, $b = \sqrt{\E_{\mu}(\rho^2R_{\max}^2)/(16\log 2)}$.
	
	When we have $1/n^\alpha$ tail probability
	$$
	\E(X 1_{\{X>\sqrt{n}b\}})  =  \int_{\sqrt{n}b}^\infty X  dP(X) \leq \int_{\sqrt{n}b}^\infty  Y dP(Y) = \int_{\sqrt{n}b}^\infty y^{-1-\alpha} dy = O((\sqrt{n}b)^{-\alpha}).
	$$
	where $Y$ is a scaled Pareto random variable. Similarly, when we have a subexponential distribution
	$$
	\E(X 1_{\{X>\sqrt{n}b\}})  = O(\sqrt{n}b e^{-\sqrt{n}b}).
	$$
\end{proof}

This is precisely because it is model-free --- $\E (r|x,a)$ can be
completely unstructured (the class of functions has a constant
Rademacher complexity). Arguably, all interesting problems in practice
that we care about should have meaningful structures. For instance,
$\E(r|x,a)$ and $\E (r|x',a)$ should be similar whenever $x$ and $x'$
are close to each other. If we can somehow estimate $\E( r|x,a)$
directly using $f(x,a) \in \cF$ for some function space $\cF$, then we
can hope to construct a direct estimator for the value of any test
policy $\pi$ Of course, the immediate drawback of this approach is
that if $f$ is a poor proxy of $\E(r|x,a)$, the estimator is going to
be highly biased. To guard against that in practice, it makes sense to
somehow use it together with the importance sampling estimator. One
popular approach of doing this is through the doubly robust estimator
\citep{dudik2011doubly,dudik2014doubly,jiang2016doubly}
\begin{equation}\label{eq:DR}
\hat{v}^\pi_{\text{DR}} = \frac{1}{n} \sum_{i=1}^n \left[ \frac{(r_i-\hat{v}^{a_i}_{\text{DM}}) \pi(a_i)}{\mu(a_i)}  + \hat{v}^\pi_{\text{DM}}\right].
\end{equation}
The estimator has the feature that it remains unbiased and it often has lower variance than that of $\hat{v}_{\text{IS}}$.

\subsection{Why is doubly robust estimator unsatisfactory?} \label{sec:doublyrobust}
However, in many cases, the doubly robust estimator does not make effective use of a direct method, even we have an optimal or even perfect direct estimator.

\paragraph{DR in Multi-arm bandits.} Let us first take a step back and consider the simpler multi-arm bandits setting.
As we discussed earlier, the regression estimator \eqref{eq:reg} is asymptotically optimal but has poor finite sample dependence. On the other hand, the IPS sampling is suboptimal, but it has smaller MSE when the number of samples is small. The doubly robust estimator that takes $\hat{v}_{\text{DM}} = \hat{v}_{\text{REG}}$
is supposedly able to combine the benefits of the two. Here it is also assumed that $\hat{v}_{\text{Reg}}$ is independent of the data set (e.g., random splitting the data into two, one of them for estimating $\hat{v}_{\text{Reg}}$, the other for calculating the doubly robust estimator).

The doubly robust estimator is unbiased because it essentially uses the IPS estimator to estimate and then offset the bias in expectation. Its variance, according to \citet[Lemma~3.3(i)]{dudik2014doubly} (when the context is a singleton, and $\mu$ is known exactly), is
\begin{equation}\label{eq:varianceDR}
\text{MSE}(\hat{v}_{\text{DR}}) = \frac{\sigma^2}{n}\E_\pi \frac{\pi(a)}{\mu(a)}  +  \E_{\pi}  \frac{\pi(a)}{\mu(a)} (\hat{v}^{a}_{\text{Reg}}-\E_{\pi} r(a))^2  - (\E_{\pi}\hat{v}^{a}_{\text{Reg}}-\E_\pi r(a))^2.
\end{equation}
If we further take expectation over $\hat{v}^{a}_{\text{Reg}}$, it reduces to calculating the MSE of estimating $\E r(a)$ conditioned on each chosen $a$. Denote the regression estimator of reward for action $a$ for short by $\hat{r}$ and the true expected reward by $r$. The bias is
$$
b_n = r (1-\mu(a))^n.
$$
The bias converges to $0$ exponentially as $n\rightarrow \infty$.
The variance can be calculated by the law of total variance
\begin{align*}
v_n =&  \Var \E(\hat{r}|n_a) + \E \Var(\hat{r}|n_a)\\
=&(1-\mu(a))^n(1-(1-\mu(a))^n)r^2 + (1-\mu(a))^n\times 0 + (1-(1-\mu(a))^n)\sum_{i=1}^n\frac{\sigma^2}{i} \P(n_a=i)\\
=&(1-\mu(a))^n(1-(1-\mu(a))^n)r^2 + (1-(1-\mu(a))^n)\sum_{i=1}^n\frac{\sigma^2}{i} {n\choose i} \mu(a)^i(1-\mu(a))^{n-i}.
\end{align*}

Assume $\mu(a)>0$, and $r <\infty$. As $n\rightarrow \infty$, the first term in $v_n \rightarrow 0$ exponentially, by the continuous mapping theorem, the second term converges to $\frac{\sigma^2}{\E n_a } = \frac{\sigma^2}{\mu(a)n}$. As a result, asymptotically,
$$n \E(\hat{r}-r)^2  \rightarrow  \frac{\sigma^2}{\mu(a)}.$$
Substitute into \eqref{eq:varianceDR} and take limit, we get
$$
\lim_{n\rightarrow \infty} n \text{MSE}(\hat{v}_{\text{DR}}) = \sigma^2  \left[\sum_{a} \frac{\pi(a)^2}{\mu(a)} + \frac{\pi(a)^2}{\mu(a)^2}\right].
$$
Therefore, unlike the plain regression estimator, the doubly robust estimator has an asymptotic gap from the Cramer-Rao lower bound, even if we allow the regression estimator to be obtained for free with an additional $n$ fresh data points.

Also, comparing to IPS, there is a poorer dependence on $\mu$, which makes $\hat{v}_{DR}$ worse than either $\hat{v}_{IPS}$ or $\hat{v}_{Reg}$ in many interesting regimes. This echoes and in some sense formalizes the empirical observation in \citet{kang2007demystifying}, even in cases when the regression model is correctly specified. It is however worth noting that the dependence on $R_{\max}$ in $\hat{v}_{IPS}$ is completely removed by DR at least asymptotically, making it an attractive choice when $\sigma^2\ll R_{\max}^2$ and when the importance weights are relatively small.

\paragraph{DR with perfect oracle.}
More generally, we can show that even if we assume that we have a perfect direct estimator of the expected reward, the doubly robust estimator is still going to have a large variance.
\begin{proposition}
Let define the direct method estimator \eqref{eq:DM} using and oracle $f(x,a) = \E( r|x,a)$ and construct doubly robust estimator \eqref{eq:DM} using the this perfect direct estimator. Then
\begin{align*}
&\mathrm{MSE}(\hat{v}_{\mathrm{DM}}) = \frac{1}{n} \Var \left[\sum_{a\in\cA} \pi(a|x)\E(r|x,a)\right] \leq \frac{1}{n}\E[\E_\pi(R_{\max}^2|x)],\\
&\mathrm{MSE}(\hat{v}_{\mathrm{DR}})  = \frac{1}{n}\E_{\mu}\rho^2\Var(r|x,a)^2 + \frac{1}{n} \Var \left[\sum_{a\in\cA} \pi(a|x)\E(r|x,a)\right] \leq \frac{1}{n}(\E_\mu \rho^2\sigma^2 +\E[\E_\pi(R_{\max}|x)^2]).
\end{align*}
\end{proposition}
\begin{proof}
The variance of direct method can be calculated from \eqref{eq:DM} itself. For that of the doubly robust estimator, substituting $f(x,a)=\E r|x,a$ into Lemma~3.1(i) of \citet{dudik2014doubly} produces the stated expression.
\end{proof}
Note that the above upper bounds are tight, since
	\begin{equation}\label{eq:oracle_lowerbound}
	\sup_{r|x,a \in \cR(\sigma^2,R_{\max})} \frac{1}{n} \Var \left[\sum_{a\in\cA} \pi(a|x)\E(r|x,a)\right] = \Omega\left(\frac{\E[ \E_\pi(R_{\max}|x)^2]}{n}\right).
	\end{equation}
	This can be obtained by choosing a prior distribution of $r|x,a$ defined on $\cR(\sigma^2,R_{\max})$, such that for each $x$,
	$\E(r|x,\cdot)=\R_{\max}(x,\cdot)$ or $\E(r|x,\cdot)=0$, each with probability half.

DR has a smaller MSE than the IPS estimator and thus in some sense ``breaks'' the lower bound in the last section through the use of the oracle information. However, the DR still depends on $\E_\mu \rho^2\sigma^2$ therefore sensitive to large importance weights.
In addition, DR is clearly a suboptimal way of using the oracle, because using $\hat{v}_{\mathrm{DM}}$ directly achieves much smaller MSE\footnote{$\hat{v}_{\mathrm{DM}}$ is in fact optimal (up to a constant) for each $\cD_x,\pi$ separately, since it is the MLE and $\sum_{a} \pi(a|x)f(x,a)$ is bounded.}.

The same observation was made in \citet[Theorem 2]{jiang2016doubly} in the context of reinforcement learning, which states that even when a perfect value function is used, the variance of the DR estimator still depends on the second moment of the importance weight and the conditional variance.